\def\eqref#1{equation~\ref{#1}}
\def\1{\bm{1}}
\def\vtheta{{\bm{\theta}}}
\def\va{{\bm{a}}}
\def\vw{{\bm{w}}}
\def\vx{{\bm{x}}}
\def\vy{{\bm{y}}}
\DeclareMathAlphabet{\mathsfit}{\encodingdefault}{\sfdefault}{m}{sl}
\SetMathAlphabet{\mathsfit}{bold}{\encodingdefault}{\sfdefault}{bx}{n}
\newcommand{\E}{\mathbb{E}}
\newcommand{\R}{\mathbb{R}}
\newcommand{\Var}{\mathrm{Var}}
\theoremstyle{plain}
\newtheorem{theorem}{Theorem}[section]
\newtheorem{proposition}[theorem]{Proposition}
\newtheorem{lemma}[theorem]{Lemma}
\newtheorem{corollary}[theorem]{Corollary}
\theoremstyle{definition}
\newtheorem{assumption}[theorem]{Assumption}
\theoremstyle{remark}
\newtheorem{remark}[theorem]{Remark}
\DeclareMathOperator{\diag}{diag}
\definecolor{custombeige}{HTML}{F4C7A8}
\newcommand{\on}{\textrm{on}}
\newcommand{\off}{\textrm{off}}
\newcommand{\vW}{\bm{W}}
\newcommand{\bw}{\bm{w}}
\newcommand{\ARD}{\textnormal{ARD}}
\newcommand{\MF}{\textnormal{MF}}
\newcommand{\bv}{\bm{v}}
\newcommand{\nic}[1]{\textcolor{green}{#1}}
\newcommand{\yoonsoo}[1]{\textcolor{orange}{#1}}
\title{A simple mean field model of feature learning}
\author{Niclas G{\"o}ring \\
Department of Theoretical Physics \\
University of Oxford, UK \\
\texttt{niclas.goring@physics.ox.ac.uk}
\And
Chris Mingard \\
Department of Theoretical Physics \\
University of Oxford, UK \\
\texttt{chris.mingard@queens.ox.ac.uk}
\And
Yoonsoo Nam \\
Department of Theoretical Physics \\
University of Oxford, UK \\
\texttt{yoonsoo.nam@physics.ox.ac.uk}
\And
Ard Louis \\
Department of Theoretical Physics \\
University of Oxford, UK \\
\texttt{ard.louis@physics.ox.ac.uk}
}
\begin{document}

\maketitle

\begin{abstract}

Feature learning (FL), where neural networks adapt their internal representations during training, remains poorly understood. Using methods from statistical physics, we derive a tractable, self-consistent mean-field (MF) theory for the Bayesian posterior of two-layer non-linear networks trained with stochastic gradient Langevin dynamics (SGLD). At infinite width, this theory reduces to kernel ridge regression, but at finite width it predicts a symmetry breaking phase transition where networks abruptly align with target functions. While the basic MF theory provides theoretical insight into the emergence of FL in the finite-width regime,  semi-quantitatively predicting the onset of FL with noise or sample size, it substantially underestimates the improvements in generalisation after the transition. We trace this discrepancy to  a key  mechanism absent from the plain MF description: \textit{self-reinforcing input feature selection}. Incorporating this mechanism into the MF theory allows us to quantitatively match the learning curves of SGLD-trained networks and provides mechanistic insight into FL.

\end{abstract}

\section{Introduction}
The ability of deep neural networks to automatically learn useful features during training is widely regarded as a central factor in their success~\citep{bengio2014representationlearningreviewnew,lecun_deep_2015}. The best understood theories of generalisation work in the infinite-width limit~\citep{jacot2018neural,lee_deep_2018}. These yield valuable insights but cannot account for key finite-width phenomena. In particular, finite networks exhibit feature learning (FL) effects such as improved sample complexity (e.g., sparse parity~\citep{damian_neural_2022,daniely2020learning} and (multi-)index functions~\citep{bietti_learning_2022}) and task-aligned changes in hidden representations~\citep{papyan2020prevalence,chizat2019lazy,corti_microscopic_2025,nam2024disentangling}. This gap highlights the need for theoretical frameworks that capture the mechanisms underlying FL.

\subsection{Related work}
Attempts to bridge this theoretical gap fall into two main categories. \textit{(1) Dynamical theories} describe the evolution of network properties during training via integro-differential equations~\citep{bordelon2022self,bordelon2023dynamics,mei2018mean,montanari2025dynamical,celentano_high-dimensional_2025,lauditi2025adaptive,shi2022theoretical}. These approaches, often centered on the transition from lazy to rich regimes, can yield accurate empirical predictions but are mathematically complex. \textit{(2) Static theories} analyze ensembles of neural networks trained with stochastic gradient Langevin dynamics (SGLD), a process equivalent to sampling from the Bayesian posterior that simplifies the analysis of the final state after training~\citep{welling_bayesian_2011,teh2015consistencyfluctuationsstochasticgradient}. By focusing on the stationary distribution of the posterior,  these theories capture feature learning (FL) through data-dependent kernel adaptations, either in double asymptotic limits of data and width~\citep{pacelli2023statistical,baglioni2024predictive,fischer2024critical,rubin2025kernels} or via kernel renormalization~\citep{howard2025wilsonian,aiudi2025local}. While both categories offer valuable insights, their reliance on complex theoretical machinery often obscures the core mechanisms driving FL.



\begin{figure}[t]
\vskip 0.2in
\begin{center}
\centerline{\includegraphics[width=14cm]{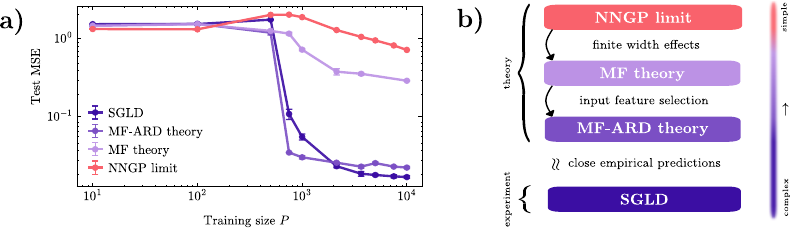}}
\caption{\textbf{a)} Generalisation error vs.\ training set size $P$ for 2-layer $N=512$ width RELU networks trained with SGLD on $k$-sparse parity compared to the predicted error of the 3 theories presented in the paper (training details in \cref{sec_traindetails}). \textbf{b)} Hierarchy of theories presented in this paper.}
\label{fig_1}
\end{center}
\vskip -0.2in
\end{figure}

Here, we offer a complementary perspective, prioritizing simplicity and mechanistic interpretability. We begin by employing standard methods from statistical physics to derive a self-consistent mean-field (MF) theory for the posterior of a two-layer non-linear network trained with SGLD (building on \cite{rubin_grokking_2024}).  This MF model can be viewed as a minimal extension beyond the fixed-kernel NNGP limit  (see \cref{fig_1} \textbf{b})). 
It predicts the onset of FL with increasing dataset size $P$ or decreasing noise strength $\kappa$ as a \textit{symmetry breaking phase transition} where the initial isotropy of the weights is broken towards task-relevant directions given by the data.  
While this MF theory predicts a mechanism for the onset of FL, it substantially underestimates the generalization gains of SGLD-trained networks after FL kicks in (see \cref{fig_1} \textbf{a}).  We trace this discrepancy to a missing mechanism eliminated by our simple MF approximation: \textit{input feature selection} (IFS), where networks dynamically amplify weights for relevant input dimensions, specializing subsets of neurons while leaving others inactive (see \cref{fig_ms} \textbf{b)}). The homogeneity assumption in the basic MF model cannot capture these effects.

 Our key contribution is to show that incorporating this IFS mechanism requires only a minimal, principled modification to the MF theory: endowing the weight prior with a learnable, coordinate-wise variance. The resulting model, which we call MF-ARD (Automatic Relevance Determination), preserves the simplicity and tractability of the MF framework while capturing the essence of FL. 
 
Our \textbf{contributions} are:  
\begin{itemize}[leftmargin=1.3em, itemsep=2pt]
    \item \textbf{Interpretable MF theory:} We derive a simple, self-consistent MF theory for the posterior of a non-linear two-layer NN. We show that this model captures the onset of FL as a phase transition but fails to predict the full learning curve of SGLD-trained networks.
    \item  \textbf{Identification of a core FL mechanism:} We identify IFS as the key missing mechanism in standard MF theory and introduce MF-ARD, which captures this mechanism while preserving the simplicity of the MF form.  We prove (\cref{thm:critical-noise})  that this ARD extension eliminates the $\mathcal{O}(d)$ penalty in input dimension $d$ inherent in standard MF theory, providing a mechanistic understanding of how FL can overcome the curse of dimensionality.
    \item \textbf{Quantitative prediction of generalisation error:} We demonstrate that MF-ARD quantitatively predicts the generalisation error of SGLD-trained networks across varying dataset sizes and noise levels (see \cref{fig_1} \textbf{a)}, \cref{fig_}).  
\end{itemize}

\section{Theory: A hierarchy of models (SGLD $\rightarrow$ MF $\rightarrow$ NNGP)}
SGLD is the limit of full-batch GD with weight decay and injected Gaussian noise. It links optimisation and Bayesian inference by viewing parameter trajectories as samples from a posterior distribution~\citep{welling_bayesian_2011,teh2015consistencyfluctuationsstochasticgradient}.
Given a neural network described by $f_{\vtheta}(\vx_\mu)$, a dataset \(\mathcal{D}=\{(\vx_\mu,\vy_\mu)\}_{\mu=1}^P\), drawn from an input distribution $q(\vx,\vy)$, the SGLD update equation  is:
\begin{align}\label{eq_SGLD}
\Delta \theta_{i,t}
:= -\,\eta\!\left[
\gamma_i\,\theta_{i,t}
+ \nabla_{\theta_i}\!\left(\frac{1}{P}\sum_{\mu=1}^P \bigl(f_{\vtheta}(\mathbf{x}_\mu)-\vy_\mu\bigr)^2\right)
\right]
+ \sqrt{2T\,\eta}\;\xi_{i,t},\quad \xi_{i,t}\sim\mathcal{N}(0,1),
\end{align}
 with full-batch gradients on parameters $\theta_i$, batch noise replaced by Gaussian noise $\xi$, learning rate $\eta$, weight decay $\gamma_i$, and noise strength set by  $T$. The stationary posterior $p_{\textrm{GD}}$ of \cref{eq_SGLD} as $\eta \to 0$ for an $L$-layer network (widths $N_l$) is
\begin{align}\label{eq_post}
 - \ln p_{\mathrm{GD}}(\vtheta| \mathcal{D}) \propto \underbrace{\sum_{l=1}^L \frac{1}{2 \sigma_i^2}\sum_{i=1}^{N_l} \|\vtheta_i\|^2}_{-\ln p _{\text{prior}}} + \underbrace{\frac{1}{2 \kappa^2 P} \sum_{\mu=1}^P (f_{\vtheta}(\vx_\mu) - \vy_\mu)^2}_{- \ln p_{\text{L}}},
\end{align}
where $\vtheta_i$ are weight matrices, and $\sigma_i^2=T/\gamma_i$ is related to  noise  $\kappa^2=T/2$.  Training with SGLD can be viewed as sampling from the posterior in \cref{eq_post} (see \cref{appx_SGLD} for more details).

\subsection{SGLD posterior: Fully interacting theory}
In this paper, we focus on two-layer fully-connected networks with input dimension $d$, hidden width $N$, input weights $\vw_i \in \mathbb{R}^d$, output weights $a_i$,  output dimension 1,  and nonlinearity $\phi$:  
\begin{align}
f_{\vtheta}(\vx)=\frac{1}{N^{\gamma}}\sum_{i=1}^N a_i\,\phi(\vw_i^{\!\top} \vx), \label{eq_NNdef}
\qquad
w_{ij} \sim \mathcal{N}\!\left(0,\frac{\sigma_w^2}{d}\right),\quad
a_{i} \sim \mathcal{N}(0,\sigma_a^2).
\end{align}
The initial parameters are set with Gaussians.
The scale factor $N^{-\gamma}$ on the output layer enables interpolation between $\gamma=1/2$ (NTK scaling)  and  $\gamma=1$ (mean field scaling, not to be confused with MF theory, see e.g.\ 
\cite{mei2019mean} for details). 
Using the explicit form of $f_{\vtheta}$ in \cref{eq_NNdef} to rewrite \cref{eq_post} by expanding the square inside $- \ln p_{\mathrm{L}}$ yields (see \cref{appx:derivation_sint_spli} for the algebra):
\begin{tcolorbox}[colback=gray!5, colframe=black, boxsep=0pt, left=6pt, right=6pt, top=2pt, bottom=2pt]\textbf{SGLD-posterior}\vspace{-0.3cm}{\thinmuskip=0mu
\thickmuskip=0mu
\medmuskip=0mu
\begin{equation}
\begin{aligned}
- \ln \ p_{\mathrm{GD}}&=  \frac{1}{2 \sigma_a^2} \sum_{i=1}^N a_i^2  + \frac{d}{2\sigma_w^2}\sum_{i=1}^N \| \vw_i\|^2+
\frac{1}{2\kappa^2 N^{2\gamma}}\sum_{i=1}^N a_i^2\,\Sigma(\mathbf w_i)\\
&+\frac{1}{2\kappa^2 N^{2\gamma}}\sum_{i\neq i'} a_i a_{i'}\,G(\vw_i,\vw_{i'})
-\frac{1}{\kappa^2 N^{\gamma}}\sum_{i=1}^N a_i\,J_{\mathcal Y}(\mathbf w_i)
+\text{const}.,
\end{aligned}
\label{eq:Sint-split}
\end{equation}

}
where $p_{\mathrm{GD}} \rightarrow p_{\mathrm{GD}}(\{\vw_i\}_{i=1}^N,\va|\mathcal{D})$ with  $\vw_i \in \mathbb{R}^{d}$, and output weights $\va \in \mathbb{R}^N$.
\tcblower
\textbf{Interpretation}
{\small
\begin{itemize}[nosep,leftmargin=1.2em]
  \item[>] $\Sigma(\vw)=\frac{1}{P}\sum_\mu\phi(\vw^{\top}\vx_\mu)^2$ : Self-energy preventing infinite activations, penalizing large weights.
  \item[>] $J_{\mathcal{Y}}(\vw) = \frac{1}{P}\sum_\mu  [\phi(\vw^\top\vx_\mu) y(\vx_\mu)]$ : Neuron–data alignment drives learning (breaks symmetry).
  \item[>] $G(\vw_i,\vw_{i'})
:= \frac{1}{P}\sum_{\mu=1}^P \phi(\vw_i^\top  \vx_\mu)\,\phi(\vw_{i'} \vx_\mu)^\top$ : Neuron-neuron interaction.
\end{itemize}}
\end{tcolorbox}

\paragraph{FL mechanism as complex neuron-neuron interaction} The following competing forces shape the posterior: 
The self-energy $\Sigma$ acts as a regularizer, keeping neurons from becoming too large. The data coupling term $J_\mathcal{Y}$ is the learning signal: it rewards neurons when they align  with the target function. Finally, the interaction kernel $G$ captures how neurons influence each other. As $P$ increases (or $\kappa$ decreases), the likelihood tilts the posterior away from the isotropic Gaussian prior, creating anisotropy along task-relevant directions and cooperative alignment across neurons via $G$. This can yield a transition to  a non-Gaussian posterior concentrated on low-rank, task-aligned structures.


This posterior is not analytically tractable due to the neuron-neuron coupling via $G$: every neuron's optimal weight is a function of every other neuron (see \cref{fig_illus} for an illustration). From a Bayesian perspective, this means we cannot write the posterior distribution as a simple product of independent terms; it is instead a complex, high-dimensional distribution.


\begin{figure}[htb!]
\vskip 0.2in
\begin{center}
\centerline{\includegraphics[width=14cm]{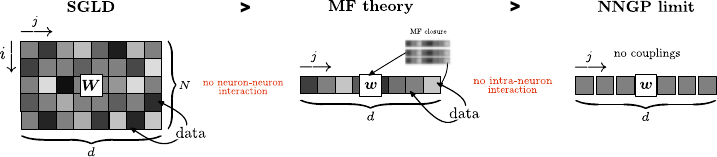}}
\caption{\textbf{SGLD Posterior:}  The model is a fully interacting system where every neuron (row $\vw_i$) is coupled to every other neuron ($\vw_{i'}$) through the interaction kernel $G$. This results in a complex, high-dimensional posterior that is computationally intractable.    \textbf{MF theory:}  The pairwise interactions are replaced by an effective field that represents the average influence of all other neurons. Each neuron is now treated as an independent sample from a single, shared, data-dependent distribution, $p_{\textrm{MF}}(\vw)$. \textbf{NNGP Limit:}  All interactions are removed. Each neuron is an independent and identically distributed sample drawn from the fixed prior distribution.}

\label{fig_illus}
\end{center}
\vskip -0.2in
\end{figure}

\subsection{ MF Theory: Removing off-diagonal couplings}
One of the simplest ways to make the posterior in \cref{eq:Sint-split} tractable is inspired by statistical physics and called self-consistent mean-field (MF) theory. The main idea is to replace the highly-correlated posterior over the entire weight matrix, $p(\vW | \mathcal{D})$, with a fully factorized approximation where each of the $N$ neurons is independent: $p(\vW | \mathcal{D}) \approx \prod_{i=1}^N p_{\text{MF}}(\vw_i)$ (see \cref{fig_illus} for an illustration).  Instead of interacting with all other neurons, each interacts with the average behavior of all other neurons. Take any neuron $\vw_i$, replace the interaction term $\sum_{i' \neq i} a_jG(\vw_i,\vw_i')$, which couples neuron $\vw_i$ to other neurons $\vw_{i'}$, with their collective average effect, the `mean field' $\langle f \rangle$:
\begin{align}\label{eq_MFclosure}
\sum_{i'\neq i} a_{i'}\,G(\vw_i,\vw_{i'})
&=\frac{1}{P}\sum_\mu \phi(\vw_i^\top \vx_\mu)\,
\overbracket[0.8pt]{\sum_{{i'}\neq i} a_{i'}\,\phi(\vw_{i'}^\top \vx_\mu)
\;\xrightarrow{\ \mathrm{MF}\ }\
N^\gamma\langle f(\vx)\rangle}^{\text{MF closure}}.
\end{align}
For the approximation to be valid, it must be self-consistent: the average behavior of a neuron drawn from our approximate posterior $p_{\text{MF}}(\vw)$ must exactly reproduce the mean field that we assumed in the first place, giving rise to the following self-consistency equation: $ \langle f(\vx) \rangle =  N^{1-\gamma} \cdot \mathbb{E}_{(\vw,a)\sim p(\vw,a)}\bigl[a\,\phi\bigl(\vw^{\top}\vx\bigr)\bigr]$.

The posterior is best understood in terms of an orthonormal basis  $\{\chi_A\}$ w.r.t.\ the input distribution $q(\vx,\vy)$, where $A$ indexes the basis. Expanding the MF $\langle f(\mathbf{x}) \rangle = \sum_{A } m_A \chi_A(\mathbf{x})$ in said basis, where the \textit{feature coefficients} are $m_{A} = \mathbb{E}_{\vx}\bigl[\langle f(\vx)\rangle\,\chi_{A}(\vx)\bigr]$, gives the following theory:
\begin{tcolorbox}[colback=gray!5, colframe=black, boxsep=0pt, left=6pt, right=6pt, top=2pt, bottom=2pt]\textbf{MF theory (fixed point equations)}\vspace{-0.3cm}{\thinmuskip=0mu
\thickmuskip=0mu
\medmuskip=0mu
\begin{align} 
 -\ln \ p_{\text{MF}} 
  \ &= \  \frac{a^2}{2 \sigma_a^2}  + \frac{d}{2\sigma_w^2}\sum_{j=1}^d  w_j^2 + \frac{a^2}{2\kappa^2 N^{2\gamma}}\Sigma(\vw) - \frac{a}{\kappa^2 N^{\gamma}}\left(J_{\mathcal{Y}}(\vw) -  \sum_{A} m_A J_A(\vw)\right) \label{eq:s_eff_inf}\\[-4pt]
m_A &= N^{1-\gamma} \left\langle a J_A(\vw) \right\rangle_{p_{\textrm{MF}}}  \ \quad \forall \ A   \label{eq_selfcon_m}
\end{align}}
where $ p_{\textrm{MF}}\rightarrow p\left(\vw, a | \{m_A\},\mathcal{D}\right) $, $\vw \in \mathbb{R}^{d}, \ a \in \mathbb{R}$ and $J_A(\vw)=\mathbb{E}_{\vx}[\phi(\vw^{\top} \vx) \chi_A(\vx)]$. 
\tcblower
\textbf{Interpretation}
{\small
\begin{itemize}[nosep,leftmargin=1.2em]
  \item[>] $m_A$: Feature coefficients measure how strongly the average neuron aligns with a certain basis function.
  
  \item[>] $G \rightarrow \sum_Am_A J_A$: Coupling of the single neuron to the effective field from all the other neurons.
\end{itemize}}
\end{tcolorbox}
The core simplification of the MF theory is that \textit{each neuron is treated as an independent sample drawn from the same distribution, which is a function of the prior and the data}. This strongly simplifies the problem. We replace the intractable problem over the entire interacting $N \times d$ weight matrix with a much simpler, self-consistent problem for a single $d$-dimensional weight vector.

These fixed-point equations can easily be solved by iterating to self-consistency, enabling the computation of all statistics $\Sigma(\vw), J_A(\vw), J_{\mathcal{Y}}(\vw)$ directly from the \textit{finite training dataset} of size $P$. In this way, we naturally capture the effects of training with limited samples (see \cref{appx_algo} for details).

\subsection{NNGP limit: No interactions}

The MF model can be simplified even further in the infinite-width limit with $\gamma=1/2$ (NTK-scaling). For this scaling the limit is well-behaved, the self-energy term in \cref{eq:s_eff_inf} vanishes as well as any $m_A$-dependent tilt, resulting in $p(\vw)$ collapsing to its prior (see \cref{appx_proof_infN} for a \textit{proof}). 
\begin{tcolorbox}[colback=gray!5, colframe=black, boxsep=0pt, left=6pt, right=6pt, top=2pt, bottom=2pt]\textbf{NNGP limit  (fixed point equations)}  \vspace{-0.3cm}{\thinmuskip=0mu
\thickmuskip=0mu
\medmuskip=0mu
\begin{align} \label{eq_act_NNGP}
- \ln \  p_{\infty} 
 \ &= \ \frac{d}{2 \sigma_w^2} \|\vw \|^2 \\
m_A^\infty &= \frac{\sigma_a^2}{\kappa^2}  \left( \langle J_{\mathcal{Y}}(\vw)J_A(\vw) \rangle_{p_{\infty}}-\sum_B m_B^\infty \langle J_B(\vw) J_A(\vw)\rangle_{p_{\infty}} \right)  \label{eq_act_NNGP2}
\end{align}}
where $ p_{\infty}\rightarrow p_{\infty}\left(\vw\right)$, $\vw \in \mathbb{R}^{d}$.
\tcblower
\textbf{Kernel picture }
{\small
We can define the kernel: $K_{AB}
\;:=\; \mathbb{E}_{\vw\sim p_{\infty}}\big[J_A(\vw)\,J_B(\vw)\big]$ reducing the solution above to kernel ridge regression: \vspace{-0.2cm}
\begin{align}
    \bm{m}^\infty \;=\; K\,(K+\tau \mathbbm{1})^{-1} \bm{y},
\qquad
\tau \;=\; \kappa^2/\sigma_a^2, \quad y_A = \mathbb{E}_{\vx}[y(\vx)\chi_A(\vx)].
\end{align}
}
\end{tcolorbox}

This NNGP limit represents the most restrictive limit in our approximation hierarchy. While MF theory eliminates inter-neuron interactions ($w_{ij} \leftrightarrow w_{i'j}$), the NNGP limit additionally removes intra-neuron coordinate coupling ($w_{ij} \leftrightarrow w_{ij'}$), see \cref{fig_illus} for an illustration. Every weight is sampled from the same distribution (the prior).
\paragraph{No FL mechanism}
In the infinite-width limit, there is \emph{no} FL in the sense above. With no data-dependent term ($J_{\mathcal{Y}}$) to break symmetry, all neurons remain frozen at their prior. The feature coefficients $m_A^{\infty}$ are now determined purely by the fixed kernel $K_{AB} = \mathbb{E}_{\vw\sim p_{\infty}}[J_A(\vw)J_B(\vw)]$ (see \cref{appx_usualform_kernel} how this connects to the usual kernel formulation of the NNGP limit). 

\section{Problem: Simple MF does not capture a central FL mechanism}
\label{sec:mf_limitations}

In the following section, we will argue that FL in 2-layer finite-width networks is a two-stage process:
\begin{itemize}[leftmargin=1.2em,itemsep=3pt]
\item[(i)] \textbf{Onset (phase transition):} The signal from the data is strong enough to overcome the isotropic prior/self-energy, so the feature coefficients $m_A=\mathbb{E}_\vx[\langle f(\vx) \rangle_p \chi_A(\vx)]$  turn nonzero. 
\item[(ii)] \textbf{Specialization through self-reinforcing input feature selection (IFS):} After the symmetry breaking, the neurons aligning with the target receive disproportionately larger  updates, producing heavy-tailed weight marginals on task-relevant coordinates and neuron-wise sparsification.
\end{itemize}
Plain MF captures \emph{(i)} but largely misses \emph{(ii)}, which is why for a target $y(\vx)=\chi_S(\vx)$, it underestimates  the growth of $m_S$ and the post-transition generalisation (e.g.,  \cref{fig_1} \textbf{a)}, \cref{fig_ms} \textbf{a)}).

\subsection{Stage 1: The Onset of Feature Learning as a Phase Transition}
The simplicity of the MF theory allows us to interpret FL as a \emph{self-consistent symmetry breaking} in the posterior $p_{\textrm{MF}}$, where learning emerges from a competition between two opposing forces. Below a critical $(P_c,\kappa_c)$ the only stable fixed point solution of \cref{eq:s_eff_inf,eq_selfcon_m} is $m_A=0 \; \forall A$ and $p_{\textrm{MF}}$ corresponds to the Gaussian prior.  The Gaussian weight prior and the self-energy term $\Sigma(\vw)$ act as regularizing forces, favouring an isotropic, high-entropy state that penalizes large weights and encourages neurons to remain small, unaligned and near their initialization. Upon crossing the critical values $(P_c,\kappa_c)$ when the signal from the data (controlled by dataset size $P$ and noise $\kappa$) becomes strong enough, the $m_A=0$ fixed point becomes unstable (see \cref{thm:kappa_c_phase_transition} for a \textit{proof} of the phase transition in $\kappa$ and an explicit formula for $\kappa_c$). $p_{\textrm{MF}}$ becomes non-Gaussian, as the neuron-data coupling term ($J_{\mathcal{Y}}$), which acts like an external field, rewards neurons whose activations $\phi(\vw^{\top}\vx)$ correlate with the target function $y(\vx)$, breaking the prior's symmetry and pulling the weights toward a task-aligned configuration. For the case where the target function is a single basis function $y(\vx)=\chi_S(\vx)$, the relevant order parameter is the feature coefficient $m_S$ of the target function. In this case,  \eqref{eq_selfcon_m} becomes (see \cref{appx_proof_FP} for the derivation):
\begin{align}
m_S \;=\; \frac{N^{1-2\gamma}}{\kappa^{2}}\,(1-m_S)\;
\underbrace{{\Biggl\langle
\frac{J_S(\vw)^{2}}{\displaystyle \sigma_a^{-2} + \frac{\Sigma(\vw)}{\kappa^{2} N^{2\gamma}}}
\Biggr\rangle_{\vw \sim p(\vw \mid m_S)}}}_{\omega_0}. 
\end{align}
The phase transition occurs when $\omega_0 > 1$,  as shown in Figure \ref{fig_ms}\textbf{a)}: for the SGLD-trained network and MF theory, the order parameter $m_S=0$ until the critical dataset size $P_c$, when it increases abruptly. The NNGP model, lacking this non-linear feedback, shows only a smooth, continuous rise. In \cref{appx_kernelarg} we link this FL phase transition to the emergence of an outlier eigenvalue in the Gram kernel $G(\vw_i,\vw_{i’})$.

However, Figure \ref{fig_1} \textbf{a)} reveals the limitation: after the transition, the SGLD-trained network's generalisation error decreases rapidly, while the MF model's error decreases much more slowly. This discrepancy reveals a second powerful mechanism at play that MF theory misses.

\subsection{Stage 2: Specialization via IFS (what simple MF misses)}
 What enables the SGLD-trained network to improve so rapidly post-transition? The answer lies in a self-reinforcing dynamic that is lost in the MF approximation. 
\begin{tcolorbox}[colback=custombeige!45, colframe=black, boxsep=0pt, left=6pt, right=6pt, top=2pt, bottom=2pt]
\textbf{Hypothesis: \textit{Input feature selection} as central FL mechanism} 

Consider the weight gradient of the SLGD posterior:
\begin{align}
    \nabla_{ \vw_i} (- \ln p_{\textrm{GD}})
\;=\; \frac{d}{\sigma_w^2}\,\vw_i+
\frac{a_i}{\kappa^2P\,N^\gamma}\sum_{\mu=1}^P
r_\mu\,\phi'(\vw_i^\top \vx_\mu)\,\vx_\mu,
\qquad
r_\mu:=f_\vtheta(\vx_\mu)-y_\mu .
\end{align}
All neurons $\vw_i$ observe the same residual $r_\mu$, but differ in their alignment $\phi'(\mathbf w_i^\top \mathbf x_\mu)\,\mathbf x_\mu$. Some neurons $\vw_{i'}$ achieve better alignment at initialization, thus receiving proportionally larger gradient updates, strengthening this alignment further. Simultaneously, the improved prediction from $\vw_{i'}$ reduces the residual $r_\mu$ for all neurons, weakening the gradient signal for less-aligned neurons whose dynamics become dominated by weight decay. This creates the self-reinforcing \textit{input feature selection}:
weak initial signals amplify into strong coordinate-wise symmetry breaking. 
While this is a dynamical effect, it leaves its traces in the static posterior, in particular  in the heavy-tailed structure of $p(\vw_i)$ (see \cref{fig_shortcoming} \textbf{a)} and in \cref{fig_ms} \textbf{b)}) where only a few neurons in the SGLD-trained NN pick up a strong target correlation. 
\end{tcolorbox}
A consequence of IFS is \textit{sparsification}: only a small number of neurons strongly align with the target while others remain near initialization. We quantify this through neuron specialization $\tilde{m}_S=\langle aJ_S(\vw)\rangle_{p_{\mathrm{GD}}}$,  measuring how strongly individual neurons couple to the target function (see \cref{fig_shortcoming} \textbf{b)}). Before the phase transition $P=100 < P_c$, $p(\tilde{m})$ is sharply peaked at zero. After the transition $P=1000 > P_c$, $p(\tilde{m})$  develops pronounced tails, most neurons remain unspecialized while a few become strongly task-aligned (Also, see the plot of  $\vW \vW^{\top}$ in  \cref{fig_WWT} where task-relevant coordinates ($j=0,1,2,3$) have much higher norm.).

\begin{figure}[htb!]
\vskip 0.2in
\begin{center}
\centerline{\includegraphics[width=13.5cm]{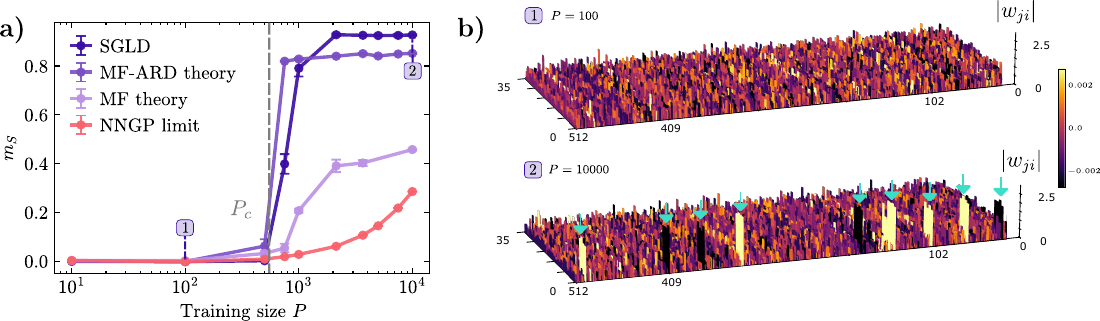}}
\caption{$k$-sparse parity target  $y(\vx)=\chi_S(\vx)=\Pi_{j \in S}x_j$ with index   $S=\{0,1,2,3\}$ in $d=35$ with a ReLU network ($N=512$, see \cref{appx_walsh} and  \cref{sec_traindetails} for more details.). \textbf{a) $m_S$ vs.~$P$:} SGLD, MF and MF-ARD exhibit a phase transition at $P_c$, NNGP grows smoothly (no FL). \textbf{b) Weight matrix plotted for the SGLD-trained NN: } After the phase transition, only a very small set of neurons (9 out of 512) pick up a high correlation with the target ($m_S$ is color coded) and only for these neurons there is a strong symmetry breaking where the norm of the first $j=0,1,2,3$ coordinates is much larger than for the rest $d-k$ coordinates (coded as the height in the plot). }
\label{fig_ms}
\end{center}
\vskip -0.2in
\end{figure}


\subsection{Why plain MF misses specialization}
\label{subsec:why_mf_misses}

MF replaces the neuron–neuron interaction by a single effective field. As a result, \emph{all} neurons are sampled from the \emph{same} single-neuron distribution $p(\vw,a)$ and are pushed to align in the \emph{same average} way. This homogeneity suppresses the greedy dynamics that drive IFS and sparsification:  The marginals $p(\vw_j)$ for MF theory in \cref{fig_shortcoming} \textbf{b)} show very weak tails compared to SGLD (\cref{fig_shortcoming} \textbf{a)}) $\Rightarrow$ weak IFS. 
Similarly, the lack of  heavy-tailed specialization $\Rightarrow$ weak sparsification.
Hence, the simplification of removing neuron-neuron interaction weakens  IFS. We hypothesize that this explains the persistent generalization error gap we observed relative to SGLD in many settings.

\begin{figure}[htb]
\vskip 0.2in
\begin{center}
\centerline{\includegraphics[width=14.0cm]{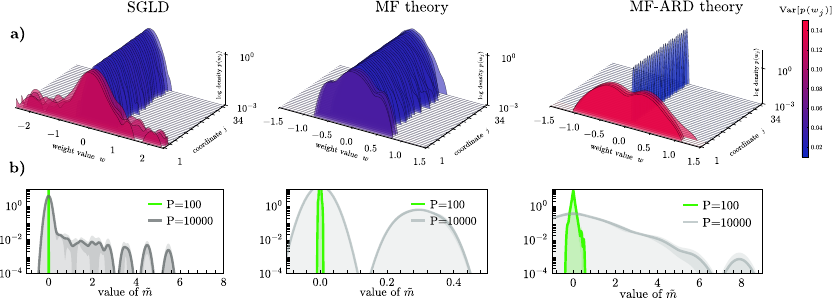}}
\caption{ 
Setup as in \cref{fig_ms}. \textbf{a) Coordinate-wise weight marginals $p(w_j)$:} For SGLD, $p(w_j)$ has high variance and strong non-Gaussianity  on input relevant coordinates $j=0,1,2,3$ (symmetry breaking with IFS). Plain MF also shows  symmetry breaking for coordinates $j=0,1,2,3$, but much weaker, while MF-ARD  restores the strong anisotropy.
\textbf{b)  Distribution of specialization  $\tilde m{=}aJ_S(\vw)$ before vs.\ after phase transition:} SGLD develops heavy tails after the transition (sparsification), MF remains narrow, MF-ARD produces the heavy-tailed specialization. Together, MF-ARD qualitatively recovers the two mechanisms, IFS and sparsification, that MF misses.}
\label{fig_shortcoming}
\end{center}
\vskip -0.2in
\end{figure}

\section{A minimal MF model of feature learning}

How can we capture IFS within the tractable MF framework given that it emerges from neuron-neuron interactions that MF explicitly removes?  Fully restoring these interactions would make the theory intractable again. The key insight is that IFS's essential signature is the heavy-tailed coordinate distributions $p(w_j)$, not the specific interactions that create them. The isotropic prior $p(\vw)=\prod_j \mathcal{N}(0,\sigma_w^2/d)$ prevents this by imposing a uniform norm penalty across all coordinates. Coordinate-dependent variances $p(\vw)=\prod_j \mathcal{N}(0,\rho_j^2)$ can selectively reduce penalties on task-relevant coordinates, enabling the heavy-tailed distributions that characterize IFS, without requiring explicit neuron-neuron coupling.


\subsection{Model - ARD as the smallest change that enables IFS}
Having introduced coordinate-dependent variances, we must specify an appropriate prior $p(\rho_j)$. This approach is well-established in Bayesian neural networks under the name: Automatic Relevance Determination (ARD) \citep{mackay_bayesian_1996,tipping_sparse_2001}. We follow the standard practice and use a gamma distribution prior, which ensures positivity of the variances and provides conjugacy with the Gaussian likelihood which enables computational tractability:
\begin{align}
p(\vw\mid\bm{\rho}) &= \prod_{j=1}^d \mathcal{N}\!\big(w_j\mid 0,\ \rho_j^{-1}\big),\qquad
p(\bm{\rho})=\prod_{j=1}^d \Gamma\!\big(\rho_j\mid \alpha_0,\beta_0\big).\label{eq:ard-prior}
\end{align}
We set $\beta_0$ so that $\mathbb E[\rho_j]=\alpha_0/\beta_0=d/\sigma_w^2$, i.e.\ the ARD prior matches the isotropic initialization in expectation and the model reduces to plain MF at the beginning of the fixed point iteration,
leaving only $\alpha_0$ as a free hyperparameter.
\footnote{For widths used here, $\alpha_0\ll N$, so results are insensitive to the exact value of $\alpha_0$.} 
Integrating out $\rho_j$ yields heavy–tailed marginals
$p(w_j)\propto (\beta_0+\tfrac12 w_j^2)^{-(\alpha_0+1/2)}$:
strong shrinkage near zero (irrelevant coordinates) and weak shrinkage in the tails (relevant coordinates). 
The ARD fixed point is given by stationarity of the negative-log evidence w.r.t. $\rho_j$ (see \cref{appx:ard_fp} for the explicit derivation):
\begin{tcolorbox}[colback=gray!5, colframe=black, boxsep=0pt, left=6pt, right=6pt, top=2pt, bottom=2pt]
\textbf{MF-ARD theory (fixed point equations)}\vspace{-0.3cm}
{\thinmuskip=0mu
\thickmuskip=0mu
\medmuskip=0mu
\begin{align} \label{eq:s_eff_infard}
 - \ln \ p_{\text{ARD}} 
 \ &=\  \frac{a^2}{2 \sigma_a^2}  + \frac{1}{2 }\sum_{j=1}^d \rho_j w_j^2 + \frac{a^2}{2\kappa^2 N^{2\gamma}}\Sigma(\vw) - \frac{a}{\kappa^2 N^{\gamma}}\left(J_{\mathcal{Y}}(\vw)  -  \sum_{A} m_A J_A(\vw) \right)\\[-2pt]
m_A  \ &=\ N^{1-\gamma} \left\langle a J_A(\vw) \right\rangle_{p_{\mathrm{ARD}}}  \ \forall \ A \quad \textnormal{  and  } \quad  \rho_j  \ = \ \frac{\alpha_0 + \tfrac{N}{2}}{\frac{\alpha_0}{d} + \tfrac{N}{2} \langle w_j^2 \rangle_{p_{\mathrm{ARD}}}} \label{eq:s_eff_inf2}
\end{align}}
where $ p_{\mathrm{ARD}}\rightarrow p_{\mathrm{ARD}}\left(\vw, a | \{m_A,\rho_i\},\mathcal{D}\right)$ and  $\vw \in \mathbb{R}^{d}, \ \bm{\rho} \in \mathbb{R}^{d}, \ a \in \mathbb{R}$.
\end{tcolorbox}
The MF-ARD theory is still a MF theory, as neurons remain independent and interact only through the self-consistent averages $\{m_A\}$ and $\{\rho_i\}$. The crucial difference is the introduction of the new order parameters $\{\rho_j\}$, which allow the model to learn the relevance of each input feature.
\paragraph{How ARD induces IFS} We assume a target $y(\vx)=\chi_S(\vx)$. The map $\langle w_j^2\rangle_{p_{\mathrm{ARD}}} \mapsto \rho_j \mapsto p_{\mathrm{ARD}}(\vw)$ realises IFS: if coordinate $j$ aligns with the target ($j \in S$), the term $J_{\mathcal{Y}}(\vw)-\sum_A m_A J_A(\vw)$ increases, making $\langle w_{j \in S}^2\rangle_{p_{\mathrm{ARD}}} \uparrow$ larger, which \emph{decreases} $\rho_{j \in S} \downarrow$ via \cref{eq:s_eff_inf2}, thereby \emph{reducing} shrinkage on $w_{j \in S}$ and further increasing $\langle w_{j \in S}^2\rangle_{p_{\mathrm{ARD}}} \uparrow$. 
Conversely, for non-aligned coordinates $j \notin S, \langle w_j \rangle^2$ remains small, which increases $\rho_j$ and strengthens shrinkage. This positive feedback realises IFS and leads to sparsification of the posterior (see \cref{fig_shortcoming} \textbf{a)} and \textbf{b)}). Accordingly, in $\vw=(w_1,...,w_k,w_{k+1},...,w_d)$ the first $k$ weights are much larger than the remaining $d-k$. This increases $J_S(\vw)$ while decreasing $J_A(\vw) \ \forall A \neq S$, driving $m_S$ up strongly after the phase transition while decreasing $m_A \ \forall A \neq S$. This explains the strong generalisation error drop of MF-ARD after the phase transition (unlike MF).
Note that MF-ARD is still a \emph{static} theory. The mechanism above reflects how the fixed point map contracts to its solution (via \cref{eq:s_eff_infard,eq:s_eff_inf2}), not a model of the actual training dynamics. See \cref{appx_NNGP_FL} for a discussion of the infinite width limit of the MF-ARD model.

\subsection{How MF-ARD beats the curse of dimensionality}
 Let $y(\vx)=\chi_S(\vx)$ with $|S|=k$ in $d$ dimensions and $\phi=\mathrm{ReLU}$.
Write $\kappa_c^2$ for the critical noise at onset of the phase transition. We assume the infinite data limit and focus on the phase transition in the critical noise as it is analytically more tractable than critical data size.

\begin{theorem}
\label{thm:critical-noise}
 There exists an outer iteration $t_0=\mathcal{O}(1)$ of the fixed point algorithm and a constant $\varepsilon_0>0$ (independent of $d$  such that we get $\varepsilon_0$-level symmetry breaking towards $S$: $\min_{j \in S} \langle w_j^{2} \rangle_{p_{\textrm{ARD}}}^{t_0} \;-\; \max_{j \notin S} \langle w_j^2 \rangle_{p_{\textrm{ARD}}}^{t_0} \;\geq\; c \cdot \varepsilon_0$ (see \cref{sec:fl_beats_cod} for details). Then, the critical noise scales as:
\begin{equation}
\kappa_c^2 \;\asymp\;
\begin{cases}
\Theta\!\big(\sqrt{1/(d k)}\big) & \text{(plain MF)},\\[2pt]
\Theta\!\big(\sqrt{1/k}\big) & \text{(MF--ARD)}.
\end{cases}
\label{eq:kappa-scaling}
\end{equation}
\end{theorem}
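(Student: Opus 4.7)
My plan is to reduce the critical-noise calculation to an eigenvalue/curvature analysis of the MF fixed-point map linearised at the symmetric solution $m_S=0$, and then show that the ARD update replaces the isotropic prior curvature $\Theta(dk)$ along the task-aligned direction by an effective curvature $\Theta(k)$. First I would take the self-consistency \cref{eq_selfcon_m} (and its ARD analogue \cref{eq:s_eff_inf2}) for the single order parameter $m_S$ and expand to first order in $m_S$ by integrating out $a$ (a Gaussian) in closed form. This yields a scalar equation $m_S=\Psi(m_S)$ whose linearisation at $0$ has a single relevant susceptibility $\omega_0(\kappa)$; the onset condition is $\omega_0(\kappa_c)=1$. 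After a standard saddle-point/Laplace expansion around the equal-weights direction on $S$ (justified by the ReLU symmetry $\partial_j g(0)=0$ for $j\notin S$), this condition becomes $\kappa_c^2 \sim \Lambda_k/\sqrt{C}$, where $\Lambda_k=|D_k|\sigma_a/N^\gamma$ depends only on $k$ through ReLU Hermite coefficients and $C$ is the second-derivative (curvature) of the log-prior penalty along the equal-weights $S$-direction.

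The plain-MF case is then immediate: the isotropic prior contributes $\frac{d}{2\sigma_w^2}\|\vw\|^2$, so along the equal-weights $S$-direction one reads off $C_{\MF}=dk/\sigma_w^2$, giving $\kappa_{c,\MF}^2=\Theta(\sqrt{1/(dk)})$. For MF-ARD I must replace $d/\sigma_w^2$ by the ARD precisions $\rho_j$ at the fixed point, so that $C_{\ARD}=\sum_{j\in S}\rho_j$. The bulk of the work is thus to pin down the scale of $\rho_j$ on and off the support using the hypothesised $\varepsilon_0$-symmetry breaking and the explicit map $\rho_j=(\alpha_0+N/2)/(\alpha_0/d+N\langle w_j^2\rangle/2)$ from \cref{eq:s_eff_inf2}.

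For the off-support coordinates $j\notin S$ I would combine a local log-concavity bound with the ARD update to obtain a scalar contraction. Specifically, assuming bounded curvature of the $\rho$-independent part of the energy near $w_j=0$ (which holds for ReLU on bounded inputs by the piecewise-quadratic structure of $\Sigma$ and standard bounds on $J_A$ derivatives), Brascamp-Lieb gives $\langle w_j^2\rangle\le(\rho_j-L)^{-1}(1+o(1))$. Substituting into the ARD map yields an affine recursion $v_{\off}^{t+1}\le a/d + b\,v_{\off}^t$ with $b\in(0,1)$ independent of $d$; iterating gives $v_{\off}^\star=\Theta(1/d)$ and hence $\rho_{\off}=\Theta(d)$. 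The additive $\varepsilon_0$-gap then forces $v_{\on}^{\min}\ge c\varepsilon_0/2 - O(1/d) = \Theta(1)$, so $\rho_{\on}=\Theta(1)$ and $C_{\ARD}=\sum_{j\in S}\rho_j=\Theta(k)$. Plugging into the onset formula gives $\kappa_{c,\ARD}^2=\Theta(\sqrt{1/k})$.

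The main technical obstacle is the off-support step: verifying that the contraction argument is compatible with the self-consistent coupling to the on-support directions through $g$. The cleanest route is to treat the outer $(v,\rho)$ iteration as a cross-coordinate block map and invoke Proposition (order-reversing of the ARD map, i.e.\ $\partial v'_j/\partial \rho_j\le 0$) coordinate-wise, so that a contraction on the off-block propagates even when the on-support moments are still evolving. A secondary subtlety is the regime of validity of the saddle-point expansion near onset; here I would argue that the Laplace approximation is controlled because the effective action develops a flat mode only exactly at $\omega_0=1$, so the leading-order curvature calculation that produces $\Lambda_k/\sqrt{C}$ captures the correct scaling in both $k$ and $d$.
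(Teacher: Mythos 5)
Your proposal follows essentially the same route as the paper's proof in \cref{sec:fl_beats_cod}: the onset formula $\kappa_c^2\sim\Lambda_k/\sqrt{C}$ from the quadratic expansion at $m_S=0$ along the equal-weights $S$-direction, the curvature constants $C_{\MF}=dk/\sigma_w^2$ versus $C_{\ARD}=\sum_{j\in S}\rho_j=\Theta(k)$, the off-support affine contraction $v_{\off}^{t+1}\le a/d+b\,v_{\off}^t$ driven by the bounded second derivative of $g$ (the paper proves the second-moment bound via explicit Taylor envelopes and an inside/outside split rather than citing Brascamp--Lieb, but this is the same estimate), and the $\varepsilon_0$-gap forcing $\rho_{\on}=\Theta(1)$. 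The only cosmetic difference is that you propose handling the cross-coordinate coupling via the antitone property of the ARD map, whereas the paper uses a bootstrap-invariance argument; both close the same gap.
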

\begin{proof}
    See \cref{sec:fl_beats_cod} for a proof and exact constants. Here we summarize the scaling.
\end{proof}
$\varepsilon$-symmetry breaking assumes that there is already a very small difference in the variance for weights that are relevant vs.\ irrelevant for learning the target. Given this asymmetry, MF-ARD can amplify it far more effectively than plain MF.  For plain MF, the noise threshold required to transition from high to low generalisation error (equivalently $m_S = 0 \rightarrow m_S \neq 0$) scales with the ambient dimension $d$, requiring noise levels $1/\sqrt{d}$ times smaller than MF-ARD. This dimensional dependence is eliminated in MF-ARD. Consequently, the phase boundary scales with the intrinsic problem dimension $k$ rather than the ambient dimension $d$. 

\subsection{Results- numerical prediction of learning curves}

We evaluate on $k$-sparse parity, a setting where fixed kernels are known to require super-polynomial samples in $d,k$, while finite-width networks can have lower sample complexity due to FL \citep{daniely2020learning,damian_neural_2022,barak2022hidden}. 
\cref{fig_} shows test-MSE heatmaps over dataset size $P$ and noise $\kappa$ for
(a) SGLD-trained networks, (b) plain MF, and (c) MF–ARD. All three display a transition from high
to low error as $P$ increases or $\kappa$ decreases. Plain MF  detects \emph{when} learning
starts but yields a diffuse, shifted boundary and overestimates post-transition error. MF–ARD, with a
single additional set of order parameters $\{\rho_j\}$, closely tracks both the \emph{location} and the
\emph{sharpness} of the SGLD phase boundary and reproduces the ``helpful noise'' regime in which
moderate $\kappa$ lowers the critical sample size $P_c$ (the kink around $\kappa=0.05$). 

 In \cref{fig_index} in the appendix we present the same phase diagram analysis for a single index-model task \citep{bietti_learning_2022}, where the target function takes the form $y(\vx) = g(\mathbf{v}^T \vx)$. Here, $\bm{v}, \vx \in \mathbb{R}^d$, the input $\vx$ follows a standard multivariate normal distribution $\mathcal{N}(0, \mathbbm{1})$, $v_{j\in S}=1/\sqrt{k}, \ v_{j\notin S}=0 $ for an index set with $|S|=k$ and $g$ is a non-linear function (Hermitian polynomial in our case).

\begin{figure}[t]
\vskip 0.2in
\begin{center}
\centerline{\includegraphics[width=14.0cm]{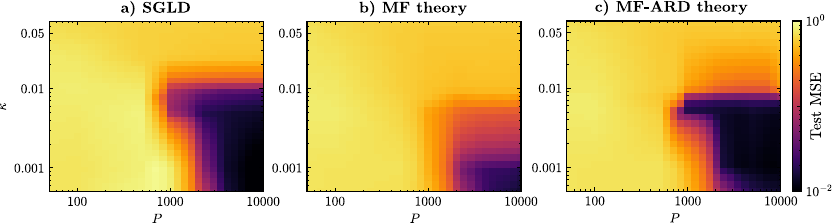}}
\caption{
   \textbf{Generalisation phase diagram (Test MSE) vs.\ dataset size $P$ and noise $\kappa$ for $k$-sparse parity target $y(\vx)=\chi_S(\vx)$ with $S=\{0,1,2,3\}$ in $d=35$. a) SGLD, b) MF, c) MF-ARD:} All panels  show a  transition from high to low test error as $P$ increases or $\kappa$ decreases. However, the plain MF theory strongly underestimates the sharpness of the transition while the MF-ARD theory largely reproduces the shape/location of this boundary. Training details are in \cref{sec_traindetails}. 
}
\label{fig_}
\end{center}
\vskip -0.2in
\end{figure}



\section{Conclusion}



 We present a MF theory for the posterior of SGLD-trained two-layer networks that is simple to interpret, with a clear infinite-width NNGP limit. Extending it  via Automatic Relevance Determination (MF-ARD) to include coordinate-dependent weight variances preserves this simplicity while quantitatively matching network performance as a function of data set size and noise on tasks such as $k$-sparse parity and index models, where feature learning is essential.

Our framework characterises feature learning as a two-stage process: a data-driven \textit{phase transition} mechanism that initiates feature learning, followed by a self-reinforing \textit{input feature selection}  mechanism that leads to sparsification and  drives improved generalisation. This explains why finite-width networks outperform kernel methods: they reshape their feature space in a self-reinforcing way that amplifies task-relevant signals and dramatically improves sample complexity. Standard MF theories capture only the transition, but MF-ARD captures both mechanisms.

Limitations of our approach include the focus on two-layer networks and tasks with sparse structures for which the ARD mechanism may be best suited.   Extending to deeper architectures (e.g., convolutional or attention layers),  and settings requiring distributed or smooth representations remains an open challenge~\citep{petrini2022learning}, as does connecting the static posterior view to training dynamics.

\bibliography{iclr2026_conference}
\bibliographystyle{iclr2026_conference}

\appendix

\section{Additional Background}
\label{appx_background_big_section}
\subsection{Equivalence of Langevin Dynamics and Bayesian Inference}
\label{appx_SGLD}

In this section, we provide a detailed derivation for the equivalence between the stationary distribution of a network trained with Stochastic Gradient Langevin Dynamics (SGLD) and the posterior distribution of a corresponding Bayesian model.

\paragraph{Network and Priors}
We analyze a two-layer neural network with the functional form:
\begin{equation}
    f(\vx)=\frac{1}{N^{\gamma}}\sum_{i=1}^{N}a_i\phi(\vw_i^{\top} \vx).
\end{equation}
The parameters are drawn from independent Gaussian priors, which corresponds to choosing a specific prior in a Bayesian model. The initializations are given by:
\begin{itemize}
    \item Weights: $w_{ij} \sim \mathcal{N}(0,g_w^2)$ with $g_w^2=\frac{\sigma_w}{d}$. This implies a prior probability $p(\vw_i) \propto \exp(-\frac{1}{2g_w^2}\|\vw_i\|^2)$.
    \item Amplitudes: $a_{i} \sim \mathcal{N}(0,g_a^2)$ with $g_a^2=\sigma_a^2$. This implies a prior probability $p(a_i) \propto \exp(-\frac{1}{2g_a^2}\|a_i\|^2)$.
\end{itemize}
Let $\vtheta_i = (\vw_i, a_i)$ denote the parameters for the $i$-th neuron.

\paragraph{Stochastic Gradient Langevin Dynamics}
We consider a full-batch Gradient Descent (GD) update  for a parameter set $\theta_i$, which includes a weight decay term with coefficient $\gamma$ and injected isotropic Gaussian noise $\xi_t \sim \mathcal{N}(0,I)$. This algorithm is known as Stochastic Gradient Langevin Dynamics (SGLD):
\begin{align}
\Delta \theta_{i,t} &:= \theta_{i,t+1} - \theta_{i,t} \\[0.5ex]
&= - \eta \bigl(\gamma\,\theta_{i,t} + \nabla_{\theta_i}\ell(f_{\vtheta})\bigr)\,  + \sqrt{2T \eta}\ \,\xi_t\,. 
\end{align}
Here, $\eta$ is the learning rate, $T$ is a scalar temperature that controls the noise magnitude, and $\ell(f_{\vtheta}) = \frac{1}{P} \sum_{\mu=1}^P (y_\mu - f(\vx_\mu))^2$ is the mean squared error loss over the dataset of size $P$.

In the continuous-time limit ($\eta \rightarrow 0$), this discrete update equation corresponds to a Langevin stochastic differential equation. The stationary distribution of this process, reached as $t \rightarrow \infty$, is given by the Gibbs-Boltzmann distribution:
\begin{align}
    p(\vtheta_i) \propto \exp\!\Bigl(-\frac{1}{T}\Bigl(\frac{\gamma}{2}\|\theta_i\|^2 + \ell(f_{\vtheta})\Bigr)\Bigr)
\end{align}
We can separate the weight decay terms $\gamma$ into two separate parameters for weights and amplitudes, $\gamma_w$ and $\gamma_a$, respectively. The stationary distribution for the parameters of a single neuron $(\vw_i, a_i)$ is then:
\begin{align}
  p(\vw_i, a_i) \propto \exp \left( -\frac{1}{T} \left(  \frac{\gamma_w}{2} \|\vw_i\|^2 +  \frac{\gamma_a}{2} \|a_i\|^2 + \frac{1}{P} \sum_{\mu=1}^P (y_\mu - f(\vx_\mu))^2 \right)  \right).
\end{align}

\paragraph{Bayesian Posterior Distribution}
From a Bayesian perspective, we aim to find the posterior distribution of the parameters given the data $\mathcal{D} = \{(\vx_\mu, y_\mu)\}_{\mu=1}^P$. The posterior is given by Bayes' theorem: $p(\vtheta|\mathcal{D}) \propto p(\mathcal{D}|\vtheta)p(\vtheta)$.
\begin{itemize}
    \item The prior $p(\vtheta)$ is defined by our choice of initialization: $p(\vtheta) = \prod_i p(\vw_i)p(a_i) \propto \exp \left( - \sum_i \left( \frac{1}{2g_w^2}\|\vw_i\|^2 + \frac{1}{2g_a^2}\|a_i\|^2 \right) \right)$.
    \item The likelihood $p(\mathcal{D}|\vtheta)$ is chosen to be a Gaussian distribution with variance $\kappa^2$, corresponding to the mean squared error loss: $p(\mathcal{D}|\vtheta) \propto \exp \left( - \frac{1}{2\kappa^2 P} \sum_{\mu=1}^P (y_\mu - f(\vx_\mu))^2 \right)$.
\end{itemize}
Combining these, the log-posterior is proportional to the negative of an energy function $\mathcal{E}(\vtheta, \mathcal{D})$. The posterior distribution for a single neuron's parameters is:
\begin{align}
  p(\vw_i, a_i | \mathcal{D}) \propto \exp \left( - \left(  \frac{1}{2 g_w^2} \|\vw_i\|^2 +  \frac{1}{2 g_a^2} \|a_i\|^2 + \frac{1}{2 \kappa^2 P} \sum_{\mu=1}^P (y_\mu - f(\vx_\mu))^2 \right)  \right).
\end{align}

\paragraph{Identifying the Distributions}
To establish the equivalence, we equate the functional forms of the SGLD stationary distribution and the Bayesian posterior. By comparing the exponents term-by-term, we find the following correspondences:
\begin{itemize}
    \item \textbf{Weights:} $\frac{\gamma_w}{2T} \|\vw_i\|^2 = \frac{1}{2g_w^2}\|\vw_i\|^2 \implies \frac{\gamma_w}{T} = \frac{1}{g_w^2}$
    \item \textbf{Amplitudes:} $\frac{\gamma_a}{2T} \|a_i\|^2 = \frac{1}{2g_a^2}\|a_i\|^2 \implies \frac{\gamma_a}{T} = \frac{1}{g_a^2}$
    \item \textbf{Loss Term:} $\frac{1}{TP} \sum_\mu \ell_\mu = \frac{1}{2\kappa^2 P} \sum_\mu \ell_\mu \implies T = 2\kappa^2$
\end{itemize}
This implies that the SGLD algorithm with temperature $T$ effectively samples from a Bayesian posterior with data noise variance $\kappa^2 = T/2$, and with prior variances $g_w^2 = T/\gamma_w$ and $g_a^2 = T/\gamma_a$.

\paragraph{Final Update Equations}
By substituting these relations back into the SGLD update rules, we obtain the dynamics for sampling from the desired posterior:
\begin{align}
    \Delta a_{t,i} &= - \eta \left( \frac{T}{g_a^2} a_{t,i} + \nabla_{a_i} \left(\frac{1}{P}\sum_{\mu} \ell_{\mu}\right) \right) + \sqrt{2T \eta}\ \,\xi_{t,a} \\
    \Delta \vw_{t,i} &= - \eta \left( \frac{T}{g_w^2} \vw_{t,i} + \nabla_{\vw_i} \left(\frac{1}{P}\sum_{\mu} \ell_{\mu}\right) \right) + \sqrt{2T \eta}\ \,\xi_{t,w}.
\end{align}
For training, this means we must set the temperature to $T = 2\kappa^2$.

\subsection{Derivation of \eqref{eq:Sint-split}}
\label{appx:derivation_sint_spli}

We start from the negative log posterior in Eq.~\eqref{eq_post} and the two–layer model in Eq.~\eqref{eq_NNdef}:
\[
-\ln p_{\mathrm{GD}}(\vtheta\mid\mathcal D)
=
\underbrace{\sum_{l}\frac{1}{2\sigma_l^2}\sum_{j}\|\vtheta_{l,j}\|^2}_{-\ln p_{\text{prior}}}
\;+\;
\underbrace{\frac{1}{2\kappa^2 P}\sum_{\mu=1}^P\bigl(f_\vtheta(\vx_\mu)-y_\mu\bigr)^2}_{-\ln p_{\text{L}}}.
\]
For our two–layer network,
\(
f_{\vtheta}(\vx)
=\frac{1}{N^\gamma}\sum_{i=1}^N a_i\,\phi(\vw_i^\top \vx)
\).
Define the shorthand \(\phi_{i\mu}:=\phi(\vw_i^\top \vx_\mu)\).
Then the data term expands as
\begin{align}
-\ln p_{\text{L}}
&=\frac{1}{2\kappa^2 P}\sum_{\mu=1}^P\!\left(\frac{1}{N^\gamma}\sum_{i=1}^N a_i\phi_{i\mu}-y_\mu\right)^2 \nonumber\\
&=\frac{1}{2\kappa^2 P}\sum_{\mu=1}^P\!\left[
\frac{1}{N^{2\gamma}}\sum_{i=1}^N\sum_{j=1}^N a_i a_j\,\phi_{i\mu}\phi_{j\mu}
-\frac{2}{N^\gamma}\sum_{i=1}^N a_i\,\phi_{i\mu}y_\mu
+y_\mu^2
\right]. \label{eq:app_expand_square}
\end{align}
Introduce the dataset–averaged quantities
\begin{align}
\Sigma(\vw) &:= \frac{1}{P}\sum_{\mu=1}^P \phi(\vw^\top \vx_\mu)^2, 
&
G(\vw,\vw') &:= \frac{1}{P}\sum_{\mu=1}^P \phi(\vw^\top \vx_\mu)\,\phi(\vw'^\top \vx_\mu), \nonumber\\
J_{\mathcal Y}(\vw) &:= \frac{1}{P}\sum_{\mu=1}^P \phi(\vw^\top \vx_\mu)\,y_\mu. \label{eq:app_defs}
\end{align}
Using \(\sum_{i,j}=\sum_{i=j}+\sum_{i\neq j}\) in \eqref{eq:app_expand_square}, we obtain
\begin{align}
-\ln p_{\text{L}}
&= \frac{1}{2\kappa^2 N^{2\gamma}}
\left[
\sum_{i=1}^N a_i^2\,\underbrace{\frac{1}{P}\sum_{\mu}\phi_{i\mu}^2}_{\Sigma(\vw_i)}
+
\sum_{i\neq j} a_i a_j\,\underbrace{\frac{1}{P}\sum_{\mu}\phi_{i\mu}\phi_{j\mu}}_{G(\vw_i,\vw_j)}
\right] \nonumber\\
&\qquad - \frac{1}{\kappa^2 N^\gamma}
\sum_{i=1}^N a_i\,\underbrace{\frac{1}{P}\sum_{\mu}\phi_{i\mu}y_\mu}_{J_{\mathcal Y}(\vw_i)}
\;+\; \underbrace{\frac{1}{2\kappa^2 P}\sum_{\mu=1}^P y_\mu^2}_{\text{const.}}. \label{eq:app_data_term_final}
\end{align}
The prior part for our parameterization \(w_{ij}\sim\mathcal N(0,\sigma_w^2/d)\) and \(a_i\sim\mathcal N(0,\sigma_a^2)\) is
\begin{equation}
-\ln p_{\text{prior}}
= \frac{1}{2\sigma_a^2}\sum_{i=1}^N a_i^2
+\frac{d}{2\sigma_w^2}\sum_{i=1}^N \|\vw_i\|^2. \label{eq:app_prior}
\end{equation}
Combining \eqref{eq:app_data_term_final} and \eqref{eq:app_prior}, and discarding the \(\vtheta\)-independent constant, yields Eq.~\eqref{eq:Sint-split}:
\begin{align*}
- \ln \ p_{\mathrm{GD}}(\vW,\va\mid\mathcal{D})
&=  \frac{1}{2 \sigma_a^2} \sum_{i=1}^N a_i^2  
+ \frac{d}{2\sigma_w^2}\sum_{i=1}^N \| \vw_i\|^2
+ \frac{1}{2\kappa^2 N^{2\gamma}}\sum_{i=1}^N a_i^2\,\Sigma(\vw_i)\\
&\quad+\frac{1}{2\kappa^2 N^{2\gamma}}\sum_{i\neq j} a_i a_j\,G(\vw_i,\vw_j)
-\frac{1}{\kappa^2 N^{\gamma}}\sum_{i=1}^N a_i\,J_{\mathcal Y}(\vw_i)
+\text{const}.
\end{align*}
This derivation holds for any nonlinearity \(\phi\).

\subsection{$k$-sparse parity target function}
\label{appx_walsh}
A $k$-sparse parity target function teacher is a single Walsh basis function on the Boolean hypercube. 
Let $S\subseteq[d]$ with $|S|=k$. The Walsh  function indexed by $S$ is
\[
\chi_S(\vx)\;=\;\prod_{j\in S} x_j,\qquad \vx\in\{\pm1\}^d,
\]
(and, if $\vx\in[-1,1]^d$, one may use $\chi_S(\vx)=\prod_{j\in S}\mathrm{sign}(x_j)$).
The family $\{\chi_S\}_{S\subseteq[d]}$ forms an orthonormal basis under the uniform product measure, i.e.
$\mathbb{E}[\chi_S(\vx)\chi_T(\vx)]=\mathbbm{1}\{S=T\}$. 
In our experiments the teacher is $y(\vx)=\chi_S(\vx)$, when $|S|=k$ this is exactly the $k$-parity problem.

\subsection{Relation to kernel eigenvalue outliers}
\label{appx_kernelarg}
The transition from $m_S = 0$ to $m_S > 0$ manifests as an outlier eigenvalue in the learned kernel. Recall that $m_S = N^{1-\gamma}\langle a J_S(\vw)\rangle$ measures the mean alignment between neurons and the target mode, where $J_S(\vw) = \mathbb{E}[\phi(\vw^\top \vx)\chi_S(\vx)]$ quantifies how well a single neuron with weights $\vw$ correlates with $\chi_S$. When $m_S$ becomes non-zero, it indicates that neurons have collectively aligned their weights to capture the target structure. Their $J_S(\vw_i)$ values have grown large. This alignment directly impacts the empirical kernel through
\begin{equation}
\widehat{R}_S^{(a)} = \frac{\chi_S^\top K \chi_S}{\operatorname{tr}(K)} = \frac{\sum_{i=1}^N a_i^2 \, J_S(\vw_i)^2}{\sum_{i=1}^N a_i^2 \, \Sigma(\vw_i)},
\end{equation}
where $K_{\mu\nu} = \frac{1}{N} \sum_{i=1}^N \phi(\vw_i^\top \vx_\mu)\, \phi(\vw_i^\top \vx_\nu)$ is the learned kernel. The numerator $\sum_i a_i^2 J_S(\vw_i)^2$ grows quadratically with the alignment strengths $J_S(\vw_i)$, while the denominator (trace) remains roughly constant. In the kernel regime ($m_S = 0$), neurons remain randomly oriented so $J_S(\vw_i) \sim O(N^{-1/2})$ and the ratio stays $O(1/P)$. However, when FL occurs ($m_S > 0$), the enhanced $J_S(\vw_i)$ values cause $\chi_S^\top K \chi_S$ to grow substantially, making $\chi_S$ an outlying eigendirection of $K$. This anisotropic deformation, from the isotropic NNGP to a low-rank, plus isotropic structure, provides a direct spectral signature of the FL phase transition.

\subsection{Connection to the standard NNGP formulation} 
\label{appx_usualform_kernel}

The kernel representation in the function basis $\{\chi_A\}$ can be transformed to recover the standard NNGP formulation in input space. We start with the function-basis kernel
\begin{equation}
K_{AB} = \mathbb{E}_{\vw\sim p_{\infty}}[J_A(\vw)J_B(\vw)],
\end{equation}
where $J_A(\vw) = \mathbb{E}_{\vx}[\phi(\vw^\top\vx)\chi_A(\vx)]$ projects the neuron's output onto basis function $\chi_A$. Expanding this definition:
\begin{align}
K_{AB} &= \mathbb{E}_{\vw}\left[\mathbb{E}_{\vx}[\phi(\vw^\top\vx)\chi_A(\vx)] \cdot \mathbb{E}_{\vx'}[\phi(\vw^\top\vx')\chi_B(\vx')]\right] \\
&= \mathbb{E}_{\vw}\mathbb{E}_{\vx,\vx'}\left[\phi(\vw^\top\vx)\phi(\vw^\top\vx')\chi_A(\vx)\chi_B(\vx')\right] \\
&= \mathbb{E}_{\vx,\vx'}\left[\chi_A(\vx)\chi_B(\vx') \cdot \underbrace{\sigma_a^2\mathbb{E}_{\vw}[\phi(\vw^\top\vx)\phi(\vw^\top\vx')]}_{=:K(\vx,\vx')}\right],
\end{align}
where $K(\vx,\vx')$ is the standard NNGP kernel in input space. The fixed-point equation $m_A = \frac{\sigma_a^2}{\kappa^2}(\Xi_A - \sum_B K_{AB}m_B)$ with $\Xi_A = \sum_S y_S K_{AS}$ becomes
\begin{equation}
m = K(K + \tau I)^{-1}y, \qquad \tau = \kappa^2/\sigma_a^2.
\end{equation}

To see this explicitly, consider data points $\{\vx_\mu\}_{\mu=1}^P$ with labels $y_\mu$. The predictor in the function basis is $f(\vx) = \sum_A m_A \chi_A(\vx)$, while in the input basis it becomes $f(\vx_\mu) = \sum_{\nu=1}^P \alpha_\nu K(\vx_\mu,\vx_\nu)$ where $\alpha = (K + \tau I)^{-1}y$. The equivalence follows from the change of basis: if $\chi_A(\vx) = \delta_{\vx,\vx_A}$ (point evaluation basis), then $K_{AB} = K(\vx_A,\vx_B)$ directly recovers the Gram matrix. For general orthogonal bases, the kernel ridge regression solution remains invariant under this transformation.

\subsection{Derivation of the ARD precision fixed point}
\label{appx:ard_fp}
Recall the ARD prior on per-coordinate precisions $\bm{\rho}=(\rho_1,\ldots,\rho_d)$ and weights:
\begin{align}
p(\vw\mid\bm{\rho})=\prod_{j=1}^{d}\mathcal{N}\!\big(w_j\mid 0,\rho_j^{-1}\big),
\qquad
p(\bm{\rho})=\prod_{j=1}^{d}\Gamma(\rho_j\mid \alpha_0,\beta_0),
\label{eq:ard_prior_appx}
\end{align}
and the single-neuron MF--ARD action (\cref{eq:s_eff_infard} in the main text)
\begin{align}
-\ln p_{\mathrm{ARD}}(\vw,a\mid \{m_A\},\bm{\rho},\mathcal{D})
=\frac{a^2}{2\sigma_a^2}
+\frac{1}{2}\sum_{j=1}^d \rho_j\,w_j^2
+\frac{a^2}{2\kappa^2 N^{2\gamma}}\Sigma(\vw)
-\frac{a}{\kappa^2 N^\gamma}\!\left(J_{\mathcal{Y}}(\vw)-\sum_A m_A J_A(\vw)\right).
\label{eq:single_neuron_action_appx}
\end{align}
For a width-$N$ two-layer network, the joint action is the sum over neurons $i=1,\dots,N$ of \cref{eq:single_neuron_action_appx}, and the (negative) log-evidence (free energy) for fixed $\{m_A\}$ is
\begin{align}
\mathcal{F}(\bm{\rho};\{m_A\})
= -\ln Z(\bm{\rho};\{m_A\}) - \ln p(\bm{\rho})
\quad\text{with}\quad
Z(\bm{\rho};\{m_A\})=\!\!\int\!\!\prod_{i=1}^N\! d\vw_i\, da_i\;
e^{-\sum_{i=1}^N\! S_{\mathrm{ARD}}(\vw_i,a_i)}.
\end{align}
Stationarity of the negative log-evidence w.r.t.\ $\rho_j$ gives the ARD FP:
\begin{align}
0=\frac{\partial \mathcal{F}}{\partial \rho_j}
= \underbrace{\Big\langle \frac{\partial}{\partial \rho_j}\sum_{i=1}^N\! S_{\mathrm{ARD}}(\vw_i,a_i)\Big\rangle_{p_{\mathrm{ARD}}}}_{\text{energy term}}
\;\underbrace{-\;\frac{N}{2}\,\frac{1}{\rho_j}}_{\text{Gaussian normalizer}}
\;-\;\underbrace{\frac{\partial \ln p(\bm{\rho})}{\partial \rho_j}}_{\text{prior term}}
= \frac{1}{2}\sum_{i=1}^N \big\langle w_{ij}^2 \big\rangle_{p_{\mathrm{ARD}}}
-\frac{N}{2}\frac{1}{\rho_j}
-\frac{\alpha_0-1}{\rho_j}
+\beta_0,
\label{eq:ard_stationarity_appx}
\end{align}
where we used $\partial S_{\mathrm{ARD}}/\partial \rho_j=\tfrac{1}{2}\sum_{i} w_{ij}^2$ and $\partial \ln \Gamma(\rho_j\mid \alpha_0,\beta_0)/\partial \rho_j=(\alpha_0-1)/\rho_j-\beta_0$. By MF symmetry, all neurons are i.i.d.\ under $p_{\mathrm{ARD}}$, so
$\sum_{i=1}^N \langle w_{ij}^2\rangle_{p_{\mathrm{ARD}}} = N\,\langle w_j^2\rangle_{p_{\mathrm{ARD}}}$, and \cref{eq:ard_stationarity_appx} yields the closed-form FP update
\begin{align}
\rho_j^\star
\;=\;
\frac{\alpha_0-1 + \tfrac{N}{2}}{\beta_0 + \tfrac{N}{2}\,\langle w_j^2\rangle_{p_{\mathrm{ARD}}}}
\;\;\stackrel{\text{MAP corr.}}{\approx}\;\;
\frac{\alpha_0 + \tfrac{N}{2}}{\beta_0 + \tfrac{N}{2}\,\langle w_j^2\rangle_{p_{\mathrm{ARD}}}},
\label{eq:ard_fp_closed_appx}
\end{align}
where the MAP conjugacy correction (absorbing the $-1$ in $\alpha_0$) gives the form used in the main text (\cref{eq:s_eff_inf2}). With the scale-matching choice $\beta_0=\alpha_0/d$, \cref{eq:ard_fp_closed_appx} becomes
\begin{align}
\rho_j^\star
\;=\;
\frac{\alpha_0 + \tfrac{N}{2}}{\tfrac{\alpha_0}{d} + \tfrac{N}{2}\,\langle w_j^2\rangle_{p_{\mathrm{ARD}}}}.
\label{eq:ard_fp_matchedscale_appx}
\end{align}

\subsection{NNGP limit of the MF-ARD model}
\label{appx_NNGP_FL}
\begin{tcolorbox}[colback=gray!5, colframe=black, boxsep=0pt, left=6pt, right=6pt, top=2pt, bottom=2pt]
\textbf{NNGP-limit of MF-ARD}\vspace{-0.3cm}
{\thinmuskip=0mu\thickmuskip=0mu\medmuskip=0mu
\begin{align}
\mathcal{S}_{\infty}^{\mathrm{FL}}(\mathbf w \mid \rho)
&= \frac{1}{2}\sum_{j=1}^d \rho_j\, w_j^2,
\qquad
p_{\infty,\rho}(\mathbf w)=\frac{1}{\mathcal Z}\exp\!\big(-\mathcal{S}_{\infty}^{\mathrm{FL}}(\mathbf w\mid\rho)\big),\\
m_A^\infty(\rho)
&=\frac{\sigma_a^2}{\kappa^2}\;
\Big\langle\big(J_{\mathcal Y}(\mathbf w)-\sum_B m_B^\infty(\rho)\,J_B(\mathbf w)\big) J_A(\mathbf w)\Big\rangle_{p_{\infty,\rho}},\quad \forall A.\\ \label{eq:fl-nngp-fp}
0&=\frac{1}{2}\,\mathrm{tr}\!\big[(A-(Ay)(Ay)^\top)\,\partial_{\rho_j}K_\rho\big]
+\beta_0-\frac{\alpha_0-1}{\rho_j},
\qquad
A=(K_\rho+\tau I)^{-1}  
\end{align}}\vspace{-0.cm}
\tcblower
\textbf{Kernel picture }
{\small
We can define the kernel : $K_{\rho,AB}=\mathbb E_{\mathbf w\sim p_{\infty,\rho}}[J_A(\mathbf w)J_B(\mathbf w)]$ reducing the solution above to KRR \vspace{-0.2cm}
\begin{align}
   m ^\infty(\rho)=K_\rho(K_\rho+\tau I)^{-1}y,  \quad \tau=\kappa^2/\sigma_a^2
\end{align}
}
\end{tcolorbox}

\paragraph{Usual (kernel) form}
Let $C_\rho=\diag(\rho)^{-1}$ and $K_\rho(\mathbf x,\mathbf x')=\sigma_a^2\,\mathbb E_{\mathbf w\sim\mathcal N(0,C_\rho)}[\phi(\mathbf w^\top \mathbf x)\phi(\mathbf w^\top \mathbf x')]$.
Then
\begin{align}
m^\infty(\rho)=K_\rho(K_\rho+\tau I)^{-1}y,\quad
K_\rho\chi_A=\lambda_A(\rho)\chi_A
\ \Rightarrow\
m_A^\infty(\rho)=\frac{\lambda_A(\rho)}{\lambda_A(\rho)+\tau}\,y_A.
\end{align}
This shows that $\rho$ rescales coordinates before the nonlinearity, so $K_\rho=\mathbb{E}_{\mathbf z}[J_A(C_\rho^{1/2}\mathbf z)\,J_B(C_\rho^{1/2}\mathbf z)]$ is a nonlinear deformation of the isotropic kernel. Any nontrivial change in $\rho$ therefore produces an $O(1)$ change in the Gaussian measure over $\mathbf w$, hence an $O(1)$ change in $K$ (it does not vanish with width). ARD improves spectral alignment by  increasing $\lambda_A(\rho)$ along task-aligned directions. When $\lambda_A(\rho)$ crosses the scale $\tau$, $m_A^\infty$ exhibits a jump, yielding the leading-order FL effect at infinite width.

\section{Additional figures}

\begin{figure}[H]
\vskip 0.2in
\begin{center}
\centerline{\includegraphics[width=13.5cm]{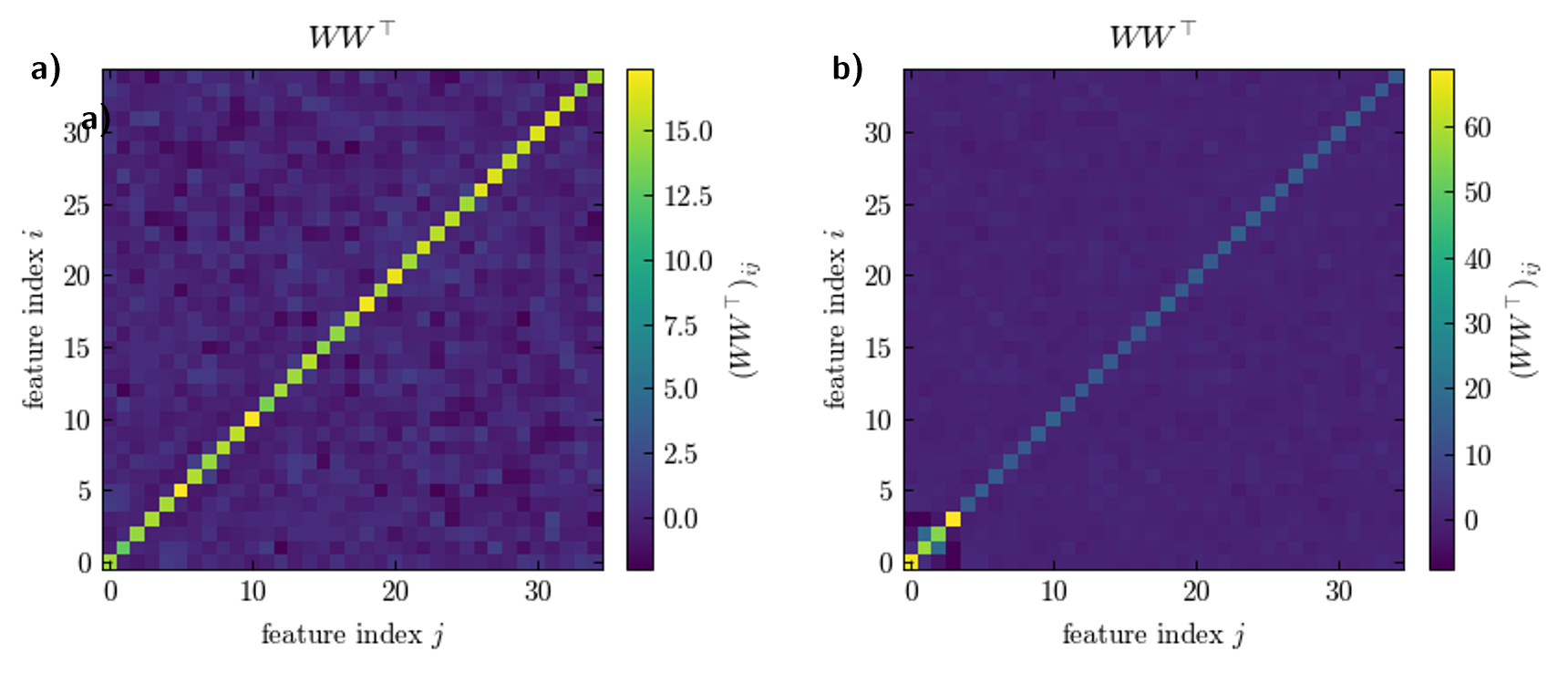}}
\caption{
Plot of $\vW\vW^\top$ for a ReLU $N=512$ network, trained with SGLD on the same setting as \cref{fig_ms}.}
\label{fig_WWT}
\end{center}
\vskip -0.2in
\end{figure}

\begin{figure}[H]
\vskip 0.2in
\begin{center}
\centerline{\includegraphics[width=13.5cm]{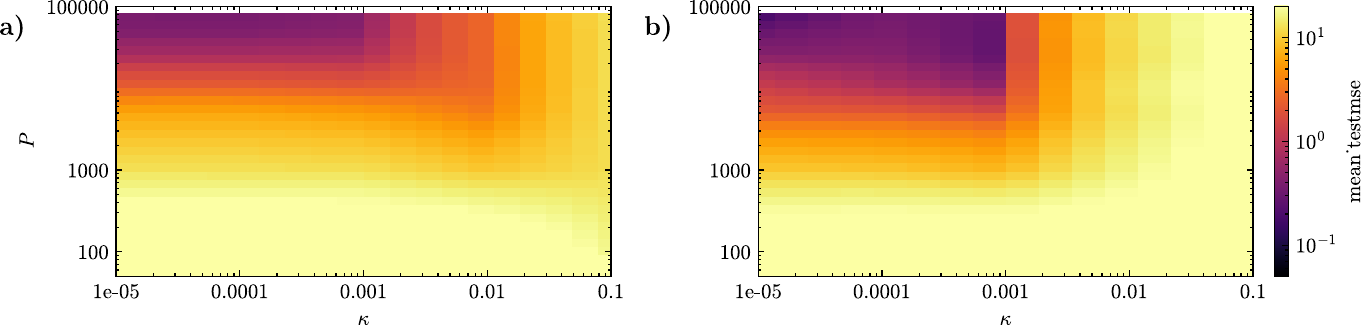}}
\caption{
Same plot as \cref{fig_} \textbf{a)} SGLD, \textbf{b)} MF-ARD, but for a single-index model with Gaussian inputs $\vx!\sim!\mathcal N(0,\mathbbm{1})$ and teacher $y=\mathrm{He}_p(\vw^{\top}\vx)$, where $\vw_j=1/\sqrt{k}$ on a $k$-sized support and $0$ otherwise, here $d=18$, $k=2$, $p=4$. }
\label{fig_index}
\end{center}
\vskip -0.2in
\end{figure}

\section{Related work: further details}

Motivated by the observation that existing theories of deep learning are often neither simple nor mutually consistent, recent work seeks a unified account that (i) retains analytical simplicity, (ii) captures the essential structure of learning curves , and (iii) explains when and how finite networks escape the curse of dimensionality. The objective across these efforts is to develop theories that consistently cover both “lazy” kernel regimes and genuine FL. A recurring result is that a small number of control parameters, most notably an \emph{effective learning rate}, organize the transition between regimes while preserving MF tractability. The emerging conclusion is that appropriately reduced MF or Bayesian descriptions can remain simple yet accurately capture kernel evolution, alignment to target structure, and the finite-width mechanisms by which deep networks surpass their infinite-width kernel limits.

As mentioned in the main text, attempts to bridge the theoretical gap between infinite-width kernel limits and finite-width FL largely fall into two categories. \textit{(1) Dynamical theories} derive integro–differential or MF equations that track how representations, kernels, and losses evolve \emph{during training} \citep{bordelon2022self,bordelon2023dynamics,mei2018mean,montanari2025dynamical,celentano_high-dimensional_2025,lauditi2025adaptive}. \textit{(2) Static theories} analyze the \emph{learned posterior} after training, connecting finite-width learning to data-dependent kernel adaptation \citep{pacelli2023statistical,baglioni2024predictive,fischer2024critical,rubin2025kernels}, or to explicit renormalization of the kernel \citep{howard2025wilsonian,aiudi2025local}.

\paragraph{Dynamical (training-time) theories}
A central line of work develops dynamical MF theory (DMFT) for deep networks, showing that a single control parameter, the effective learning rate, governs whether training remains in the lazy NNGP/NTK regime or enters a FL regime with evolving kernels. \citet{lauditi2025adaptive} formalize this for deep architectures, proving exact analytic solutions in the deep \emph{linear} case and introducing numerical solvers for \emph{nonlinear} activations. Comparisons on synthetic Gaussian data and real networks (including CIFAR tasks) demonstrate close agreement between DMFT predictions and observed training, confirming that the MF reduction captures genuine FL even beyond linear models.

Earlier, \citet{bordelon2022self} derived DMFT saddle-point equations intended for broad classes of infinite-width networks. These equations are solvable in closed form for deep linear networks, while nonlinear cases require numerical approximation; experiments on deep linear models and wide CNNs trained on two-class CIFAR tasks showed that both loss dynamics and feature-kernel evolution match theory across widths and scaling laws. The same line of work identifies the effective learning rate as the parameter driving faster training, larger kernel movement, and stronger alignment to target functions, with \citet{bordelon2023dynamics} extending these predictions to \emph{finite-width} networks. Complementing this, \citet{fischer2024critical} provide a DMFT description for two-layer nonlinear networks trained by SGD, yielding closed equations for kernels and fields (exact for linear networks; Monte-Carlo DMFT for nonlinear activations). Their experiments on Gaussian-mixture single-index tasks and binary CIFAR subsets again validate the MF predictions and the regime split controlled by effective learning rate. \citet{rubin2025kernels} generalize the framework to \emph{arbitrary depth}, retaining exact solutions for deep linear networks and approximate numerical solutions for nonlinear activations; experiments on Gaussian-mixture and single-index tasks and CIFAR subsets show that DMFT tracks real training behavior across regimes.

Orthogonal to DMFT but still dynamical, \citet{mei2018mean} show that, in the infinite-width MF limit, SGD on two-layer networks follows a distributional-dynamics PDE in Wasserstein space. With noise or regularization this becomes a free-energy flow that guarantees global convergence. Unlike later DMFT work, they do not analyze NTK limits or a kernel–feature phase transition, instead they establish propagation-of-chaos, convergence of noisy SGD, and stability of fixed points in the noiseless case. Experiments on Gaussian classification and ReLU models show the PDE tracks SGD closely and diagnose when poor activations cause failure. In closely related simplified settings, \citet{montanari2025dynamical} analyze \emph{single-index} models with targets such as $h(z)=0.9z+z^3/6$. Here neurons become exchangeable samples from a common evolving distribution that aligns with the latent direction, the DMFT equations admit exact analytic solutions (as in linear networks), and test error converges to the Bayes limit, with experiments confirming the theory’s accuracy. Additional high-dimensional analyses further develop these dynamics \citep{celentano_high-dimensional_2025}.

\paragraph{Static (posterior-time) theories}
A complementary direction studies the learned posterior and reframes FL as \emph{kernel adaptation}. \citet{pacelli2023statistical} reduce FL to \emph{layerwise rescaling} of the kernel. Because the Bayesian predictor involves a matrix inverse, these rescalings act as mode-specific changes to the effective regularization of every kernel eigenmode, without rotating or performing relative scaling of feature directions. They derive analytic saddle-point equations for a pair of scalar descriptors per layer and, by solving them, obtain test-error predictions. On MNIST and CIFAR-10, where fully connected networks are not believed to do in strong representation learning, this coarse description already matches the observed bias–variance trade-off and the generalization error of SGLD-trained networks. Pushing this simplification further, \citet{baglioni2024predictive} collapse all finite-width effects into a \emph{single global rescaling} of the infinite-width kernel, the resulting Bayesian effective action closely matches numerical experiments across MNIST and CIFAR-10.

Static renormalization viewpoints make the same idea explicit. \citet{howard2025wilsonian} interpret learning as a Wilsonian renormalization of the kernel, while \citet{aiudi2025local} show that architectural inductive bias matters: fully connected networks can be captured by a scalar kernel rescaling, but convolutional networks induce a \emph{spatially indexed local kernel}. Training then learns a matrix $Q^{*}_{ij}$ that reweights patch–patch correlations, enabling selective emphasis or suppression of local interactions, this richer, weight-sharing–enabled mechanism provides a genuine route to FL that can outperform infinite-width kernel counterparts.

\paragraph{Summary}
Across both dynamical and static perspectives, a coherent picture emerges. DMFT and related dynamical theories identify an effective learning-rate control parameter that cleanly separates lazy NTK behavior from FL with evolving kernels, with exact solutions in linear (and certain single-index) settings and accurate numerical solvers for nonlinear networks \citep{bordelon2022self,bordelon2023dynamics,lauditi2025adaptive,fischer2024critical,rubin2025kernels,mei2018mean,montanari2025dynamical,celentano_high-dimensional_2025}. Static Bayesian and renormalization accounts show that much of finite-width generalization can be captured by low-dimensional kernel \emph{rescalings}, global, layerwise, or spatially local, without tracking full feature rotations \citep{pacelli2023statistical,baglioni2024predictive,howard2025wilsonian,aiudi2025local}. Together, these results support a simple, consistent theory that reproduces the characteristic shape and offsets of learning curves and clarifies how finite networks can break the apparent curse of dimensionality via structured kernel adaptation and alignment.

\section{Proofs}
\label{appx_proof_big_section}

\subsection{Kernel limit}
\label{appx_proof_infN}

\begin{theorem}
With $\gamma=1/2$ the infinite width limit has the following FP equations:
\begin{align}
    m_A^\infty = \frac{\sigma_a^2}{\kappa^2} \left( \mathbb{E}_{\vw\sim p_\infty}[J_A(\vw)J_{\mathcal{Y}}(\vw)] - \sum_B \mathbb{E}_{\vw\sim p_\infty}[J_A(\vw)J_B(\vw)] m_B^\infty \right). 
\end{align}
\end{theorem}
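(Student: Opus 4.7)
The claim has two parts: (i) the posterior $p_{\text{MF}}$ collapses to the Gaussian prior as $N \to \infty$ at $\gamma=1/2$, and (ii) despite this collapse, the self-consistency equation for $m_A$ has a nontrivial limit because of the $N^{1-\gamma}=\sqrt{N}$ prefactor. I would handle these in order.

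\textbf{Step 1 (pointwise collapse of the single-neuron action).} Substitute $\gamma=1/2$ into \eqref{eq:s_eff_inf}. The two data-dependent contributions carry explicit $N$-suppression:
\begin{equation}
\frac{a^2\,\Sigma(\vw)}{2\kappa^2 N}\;=\;O(N^{-1}),
\qquad
\frac{a}{\kappa^2\sqrt{N}}\!\left(J_{\mathcal Y}(\vw)-\sum_A m_A J_A(\vw)\right)\;=\;O(N^{-1/2}).
\end{equation}
Assuming $\Sigma$, $J_A$, $J_{\mathcal Y}$ and $m_A$ stay bounded in $N$ (which is consistent with the self-consistency we will derive), both terms vanish. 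The action therefore reduces to the prior $\tfrac{a^2}{2\sigma_a^2}+\tfrac{d}{2\sigma_w^2}\|\vw\|^2$, so $p_\infty(\vw,a)=p_\infty(\vw)\,\mathcal N(a\mid 0,\sigma_a^2)$ with $p_\infty(\vw)$ the isotropic Gaussian prior. In particular $a\perp\!\!\!\perp \vw$.

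\textbf{Step 2 (expansion of the self-consistency equation).} Naively substituting $p_{\text{MF}}\to p_\infty$ into $m_A=\sqrt{N}\,\langle a J_A(\vw)\rangle_{p_{\text{MF}}}$ would give $0$ because $a$ and $\vw$ decouple. The resolution is that the prefactor $\sqrt{N}$ is exactly the scale that picks up the first non-vanishing correction. Write $p_{\text{MF}}\propto p_\infty(\vw,a)\,e^{-\Delta_N(\vw,a)}$ with
\begin{equation}
\Delta_N(\vw,a)\;=\;\frac{a^2\Sigma(\vw)}{2\kappa^2 N}\;-\;\frac{a}{\kappa^2\sqrt N}\!\left(J_{\mathcal Y}(\vw)-\sum_B m_B J_B(\vw)\right),
\end{equation}
and expand $e^{-\Delta_N}=1-\Delta_N+O(N^{-1})$. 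The normalization correction contributes only at $O(N^{-1})$ because $\langle\Delta_N\rangle_{p_\infty}=O(N^{-1})$ (the $O(N^{-1/2})$ piece vanishes by $\langle a\rangle_{p_\infty}=0$). Hence
\begin{equation}
\langle a J_A(\vw)\rangle_{p_{\text{MF}}}
\;=\;-\langle a J_A(\vw)\,\Delta_N\rangle_{p_\infty}\;+\;O(N^{-1}).
\end{equation}

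\textbf{Step 3 (evaluate the leading correction).} The $\Sigma$-term in $\Delta_N$ produces $\langle a^3\rangle_{p_\infty}=0$ and drops out. The $J$-term yields
\begin{equation}
-\langle a J_A(\vw)\cdot\bigl(-\tfrac{a}{\kappa^2\sqrt N}\bigr)(J_{\mathcal Y}-\textstyle\sum_B m_B J_B)\rangle_{p_\infty}
\;=\;\frac{\sigma_a^2}{\kappa^2\sqrt N}\,\bigl\langle J_A(J_{\mathcal Y}-\textstyle\sum_B m_B J_B)\bigr\rangle_{p_\infty},
\end{equation}
using $\langle a^2\rangle_{p_\infty}=\sigma_a^2$ and $a\perp\vw$. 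Multiplying by $N^{1-\gamma}=\sqrt N$ cancels the $1/\sqrt N$ and gives
\begin{equation}
m_A^\infty\;=\;\frac{\sigma_a^2}{\kappa^2}\!\left(\mathbb E_{\vw\sim p_\infty}[J_A(\vw)J_{\mathcal Y}(\vw)]\;-\;\sum_B \mathbb E_{\vw\sim p_\infty}[J_A(\vw)J_B(\vw)]\,m_B^\infty\right),
\end{equation}
as claimed.

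\textbf{Main obstacle.} The one delicate point is justifying the truncation of the cumulant expansion: I need uniform integrability so that the $O(N^{-1})$ remainder, once multiplied by $\sqrt N$, still vanishes. For bounded activations (and hence bounded $\Sigma,J_A,J_{\mathcal Y}$), this follows from dominated convergence applied to a Gaussian base measure; for ReLU with subgaussian inputs it reduces to controlling polynomial moments of $a$ and $\vw$ under $p_\infty$, which is straightforward. Everything else is bookkeeping.
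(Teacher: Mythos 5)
Your derivation is correct and reaches the stated fixed-point equation, but it is organized differently from the paper's proof. The paper exploits the fact that the action is \emph{exactly} quadratic in $a$: it first integrates $a$ out in closed form (cf.\ the lemma in \cref{appx_sec_intouta}), obtaining the conditional mean $\mu(\vw)=\frac{\frac{1}{\kappa^2 N^\gamma}(J_{\mathcal Y}-\sum_B m_B J_B)}{\sigma_a^{-2}+\Sigma(\vw)/(N^{2\gamma}\kappa^2)}$, which is manifestly $O(N^{-1/2})$ for each fixed $\vw$ when $\gamma=1/2$. The only limiting steps are then the collapse of the $\vw$-marginal to the prior and the replacement of the denominator by $\sigma_a^{-2}$; the $\sqrt{N}$ prefactor cancels the $N^{-1/2}$ in $\mu(\vw)$ identically, with no series truncation to justify. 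You instead keep the joint measure over $(\vw,a)$ and expand $e^{-\Delta_N}$ to first order around the product prior, extracting the same leading term from $\langle a^2\rangle_{p_\infty}=\sigma_a^2$ and the linear-in-$a$ coupling. The algebraic content is the same (the $\Sigma$ term drops by odd moments of $a$ in your version, and by being an $O(N^{-1})$ correction to the denominator of $\mu$ in the paper's), but your route carries the extra burden you correctly flag: you must show the remainder of the exponential expansion, after multiplication by $\sqrt{N}$, vanishes uniformly. That is avoidable here precisely because the $a$-integral is Gaussian; on the other hand, your perturbative argument would generalize to actions that are not quadratic in $a$. Both proofs are valid; the paper's is the more economical one for this particular model.
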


\begin{proof} 
The proof proceeds by taking the $N\to\infty$ limit of the self-consistency equations derived from the cavity method's free energy. 

\paragraph{1. Self-Consistency from Free Energy.} We begin with the MF free energy functional derived from the cavity method: \begin{equation} \mathcal{F}(\{m_A\}) = \frac{1}{2\kappa^2}\sum_A (y_A-m_A)^2 - N \ln \mathcal{Z}_1(\{m_A\}) \end{equation} where $\mathcal{Z}_1(\{m_A\}) = \int d\vw \, e^{-\bar{S}_{\text{eff}}^{(\infty)}(\vw;\{m_A\})}$ 
is the single-neuron partition function. The effective action for a single weight vector $\vw$, after integrating out the amplitude $a$, is: 
\begin{equation} \bar{S}_{\text{eff}}^{(\infty)}(\vw;\{m_A\}) = \frac{d}{2\sigma_w^2}\|\vw\|^2 + \frac{1}{2}\ln\left(\frac{1}{\sigma_a^2}+\frac{\Sigma(\vw)}{N^{2\gamma}\kappa^2}\right) - \frac{\left(J_{\mathcal{Y}}(\vw)-\sum_A m_A J_A(\vw)\right)^2}{2\kappa^4 N^{2\gamma}\left(\frac{1}{\sigma_a^2}+\frac{\Sigma(\vw)}{N^{2\gamma}\kappa^2}\right)} \end{equation}
The equilibrium state is found by the stationarity condition $\partial \mathcal{F}/\partial m_A = 0$, which yields the self-consistency equation: 

\begin{equation} m_A = N^{1-\gamma} \left\langle \mu(\vw) J_A(\vw) \right\rangle_{p(\vw|\{m_B\})} \end{equation}

where $p(\vw|\{m_B\}) = \frac{1}{\mathcal{Z}_1}e^{-\bar{S}_{\text{eff}}^{(\infty)}}$ is the posterior distribution on a single neuron's weights, and $\mu(\vw)$ is the posterior mean of its amplitude $a$: \begin{equation} \mu(\vw) = \frac{\frac{1}{\kappa^2 N^\gamma}\left(J_{\mathcal{Y}}(\vw)-\sum_B m_B J_B(\vw)\right)}{\frac{1}{\sigma_a^2}+\frac{\Sigma(\vw)}{N^{2\gamma}\kappa^2}} 
\end{equation} 

\paragraph{2. The Infinite-Width Limit with Critical Scaling.} To obtain a non-trivial limit as $N\to\infty$, we must set the scaling to the critical value $\gamma=1/2$. For $\gamma > 1/2$, the prefactor $N^{1-\gamma} \to 0$, decoupling the neurons and leading to $m_A \to 0$. For $\gamma < 1/2$, the interaction term diverges. With $\gamma=1/2$, the terms of order $1/N$ in the effective action $\bar{S}_{\text{eff}}^{(\infty)}$ vanish. Specifically: \begin{equation} \frac{\Sigma(\vw)}{N^{2\gamma}\kappa^2} = \frac{\Sigma(\vw)}{N\kappa^2} \xrightarrow{N\to\infty} 0 \end{equation} As a result, the data-dependent and $m_A$-dependent terms in the exponent of $p(\vw|\{m_A\})$ vanish. The distribution over weights collapses to its prior: \begin{equation} p(\vw|\{m_A\}) \xrightarrow{N\to\infty} p_\infty(\vw) \propto \exp\left(-\frac{d}{2\sigma_w^2}\|\vw\|^2\right) \end{equation} Simultaneously, the denominator in $\mu(\vw)$ simplifies to $1/\sigma_a^2$. The expression for the posterior mean amplitude becomes: \begin{equation} \mu(\vw) \xrightarrow{N\to\infty} \frac{\sigma_a^2}{\kappa^2 N^{1/2}}\left(J_{\mathcal{Y}}(\vw)-\sum_B m_B J_B(\vw)\right) \end{equation} \paragraph{3. Deriving the Linear System.} We now substitute these limiting forms back into the self-consistency equation, using $m_A^\infty$ to denote the solution in this limit: 
\begin{align} m_A^\infty &= N^{1-1/2} \left\langle \left[ \frac{\sigma_a^2}{\kappa^2 N^{1/2}}\left(J_{\mathcal{Y}}(\vw)-\sum_B m_B^\infty J_B(\vw)\right) \right] J_A(\vw) \right\rangle_{p_\infty(\vw)} \\ m_A^\infty &= \frac{\sigma_a^2}{\kappa^2} \left\langle \left(J_{\mathcal{Y}}(\vw)-\sum_B m_B^\infty J_B(\vw)\right) J_A(\vw) \right\rangle_{p_\infty(\vw)} \\ m_A^\infty &= \frac{\sigma_a^2}{\kappa^2} \left( \mathbb{E}_{\vw\sim p_\infty}[J_A(\vw)J_{\mathcal{Y}}(\vw)] - \sum_B \mathbb{E}_{\vw\sim p_\infty}[J_A(\vw)J_B(\vw)] m_B^\infty \right) 
\end{align} 
Let us define the NNGP kernel matrix $K$ with elements $K_{AB} := \mathbb{E}_{\vw\sim p_\infty}[J_A(\vw)J_B(\vw)]$ and the data-kernel coupling vector $\Xi$ with elements $\Xi_A := \mathbb{E}_{\vw\sim p_\infty}[J_A(\vw)J_{\mathcal{Y}}(\vw)] = \sum_S y_S K_{AS}$. The equation becomes a linear system for the vector $m^\infty$: 
\begin{equation} 
m^\infty = \frac{\sigma_a^2}{\kappa^2} (\Xi - K m^\infty) = \frac{\sigma_a^2}{\kappa^2} (K y - K m^\infty) 
\end{equation} 

\paragraph{4. Solution as Kernel Ridge Regression.} Rearranging the linear system, we get: 
\begin{align} \left(I + \frac{\sigma_a^2}{\kappa^2} K\right) m^\infty &= \frac{\sigma_a^2}{\kappa^2} K y \\ \left(\frac{\kappa^2}{2\sigma_a^2}I + \frac{1}{2}K\right) m^\infty &= \frac{1}{2}K y \end{align} 
Let the ridge be $\tau = \kappa^2/(2\sigma_a^2)$. The equation is $(\tau I + K)m^\infty = Ky$. Solving for $m^\infty$ gives the final KRR solution: \begin{equation} m^\infty = K (K + \tau I)^{-1} y \end{equation} This completes the proof. The solution depends only on the NNGP kernel $K$, which is fixed by the network architecture and priors, not the training data labels. This demonstrates the absence of FL in this specific infinite-width limit. \end{proof}

\subsection{Integrating out $a$}
\label{appx_sec_intouta}
As the action is quadratic in $a$ we can integrate it out.
\begin{theorem}
    The distribution is given by 
    \begin{align}
     S_{\textnormal{MF}}^{}(\vw,\{m_A\}) &=const.+\frac{1}{2}\ln \left( \frac{1}{ \sigma_a^2}+\frac{\Sigma(\vw) }{N^{2\gamma} \kappa^2} \right)-\frac{\left( J_{\mathcal{Y}}(\mathbf{w}) -  \sum_{A} m_A J_A(\mathbf{w}) \right)^2}{2 N^{2\gamma}\kappa^4 \left(  \frac{1}{ \sigma_a^2}+\frac{\Sigma(\vw) }{N^{2\gamma} \kappa^2} \right)} \\ &+ \frac{d}{ 2\sigma_w^2} \| \vw \|^2
\end{align}

\begin{align}
    p(a| \vw) &= \mathcal{N}(\mu(\vw),\sigma^2(\vw))\\
  \sigma(\vw) ^2 &= \frac{1}{2\alpha}= \left( \frac{1}{ \sigma_a^2}+\frac{\Sigma(\vw) }{ N^{2\gamma}\kappa^2} \right)^{-1} \\
 \mu(\vw)&= \sigma^2 \beta =\frac{\beta}{2\alpha}= \frac{\frac{1}{N^{\gamma}\kappa^2}\left( J_{\mathcal{Y}}(\mathbf{w}) -  \sum_{A} m_A J_A(\mathbf{w}) \right)}{ \left(  \frac{1}{ \sigma_a^2}+\frac{\Sigma(\vw) }{ N^{2\gamma}\kappa^2} \right)} \label{eq_inta_mean}
\end{align}
\end{theorem}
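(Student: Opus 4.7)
The strategy is entirely elementary: the joint negative log density in \cref{eq:s_eff_inf} is manifestly quadratic in $a$ (with all $\vw$-dependence entering through coefficients), so both the conditional $p(a\mid\vw)$ and the marginal effective action are obtained by a single Gaussian integral. The plan has three steps.

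First, I would reorganise \(-\ln p_{\text{MF}}\) as a polynomial in $a$:
\begin{align*}
-\ln p_{\text{MF}}(\vw,a) \;=\; \alpha(\vw)\,a^{2} \;-\; \beta(\vw)\,a \;+\; R(\vw,\{m_A\}),
\end{align*}
reading off
\begin{align*}
\alpha(\vw) \;=\; \frac{1}{2}\!\left(\frac{1}{\sigma_a^{2}}+\frac{\Sigma(\vw)}{N^{2\gamma}\kappa^{2}}\right),
\qquad
\beta(\vw) \;=\; \frac{1}{N^{\gamma}\kappa^{2}}\!\left(J_{\mathcal{Y}}(\vw)-\sum_{A}m_A J_A(\vw)\right),
\end{align*}
while $R(\vw,\{m_A\})=\frac{d}{2\sigma_w^{2}}\|\vw\|^{2}$ collects the remaining $a$-independent terms.

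Second, I complete the square, $\alpha a^{2}-\beta a=\alpha\bigl(a-\tfrac{\beta}{2\alpha}\bigr)^{2}-\tfrac{\beta^{2}}{4\alpha}$. This immediately identifies the conditional as Gaussian with mean $\mu(\vw)=\beta/(2\alpha)$ and variance $\sigma^{2}(\vw)=1/(2\alpha)$; substituting the explicit $\alpha,\beta$ recovers the claimed formulas (in particular \cref{eq_inta_mean}).

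Third, I marginalise using the standard identity $\int_{\mathbb{R}}e^{-\alpha a^{2}+\beta a}\,da=\sqrt{\pi/\alpha}\,e^{\beta^{2}/(4\alpha)}$. Taking $-\ln$ of the resulting marginal yields
\begin{align*}
S_{\text{MF}}(\vw,\{m_A\}) \;=\; R(\vw,\{m_A\}) \;+\; \tfrac{1}{2}\ln\alpha(\vw) \;-\; \tfrac{\beta(\vw)^{2}}{4\alpha(\vw)} \;+\; \text{const},
\end{align*}
where the additive constant absorbs $-\tfrac12\ln\pi$ and $\tfrac12\ln 2$ coming from pulling the factor of $\tfrac12$ out of $\alpha$. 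Substituting the closed forms of $\alpha,\beta$ and simplifying $\tfrac{\beta^{2}}{4\alpha}=\tfrac{\beta^{2}\sigma^{2}}{2}$ reproduces exactly the stated effective action for $\vw$.

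There is no real obstacle here beyond careful bookkeeping: the only thing to watch is the factor of $\tfrac12$ in $\alpha$, which shifts between the logarithm (via an irrelevant constant) and the explicit denominator $1/\sigma_a^{2}+\Sigma(\vw)/(N^{2\gamma}\kappa^{2})$ appearing in both the variance $\sigma^{2}(\vw)$ and the data-coupling term $\beta^{2}/(4\alpha)$. Once these constants are tracked, both displayed equations follow directly.
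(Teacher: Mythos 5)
Your proposal is correct and follows essentially the same route as the paper's own proof: write the action as a quadratic $\alpha a^{2}-\beta a+c$ with the same identifications of $\alpha(\vw)$ and $\beta(\vw)$, complete the square to read off the Gaussian conditional $p(a\mid\vw)$, and apply the standard Gaussian integral to obtain the effective action $\tfrac12\ln\alpha-\beta^{2}/(4\alpha)+c$ up to an irrelevant constant. The constant bookkeeping you flag (the factor of $\tfrac12$ inside $\alpha$) is handled identically in the paper.
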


\begin{proof}
    We start with the standard Gaussian integral
\begin{align}
  \int da d\vw e^{-[\alpha \cdot a^2 -\beta \cdot a+c]}  = \int d \vw \sqrt{\frac{ \pi}{\alpha}}e^{\frac{\beta^2}{4 \alpha}-c}
\end{align}
We have \begin{align}
     \mathcal{S}_{\text{MF}}^{}(\vw,a,\{m_A\}) &= \frac{1}{2 \sigma_a^2}\sum_{i=1}^N a_i^2 + \frac{d}{2 \sigma_w^2}\sum_{i=1}^d \|\vw_i\|^2 + \frac{a^2}{2\kappa^2 N^{2\gamma}}\Sigma(\vw)\\ &- \frac{a}{\kappa^2 N^{\gamma}}\left(J_{\mathcal{Y}}(\mathbf{w}) -  \sum_{A} m_A J_A(\mathbf{w})\right)
\end{align}
For 1 neuron we get 
\begin{align}
     \mathcal{S}_{\text{MF}}^{}(\vw,a,\{m_A\}) =a^2 (\frac{1}{2 \sigma_a^2}+ \frac{\Sigma(\vw)}{2\kappa^2 N^{2\gamma}}) - \frac{a}{\kappa^2 N^{\gamma}}\left(J_{\mathcal{Y}}(\mathbf{w}) -  \sum_{A} m_A J_A(\mathbf{w})\right) +\frac{d}{2 \sigma_w^2} \|\vw\|^2
\end{align}
with 
\begin{align}
\alpha &:= \frac{1}{2\sigma_a^2} + \frac{\Sigma(w)}{2N^{2\gamma}\kappa^2}, \\
\beta  &:= \frac{J_{\mathcal{Y}}(w) - \sum_A m_A J_A(w)}{N^{\gamma}\kappa^2}.\\
c&= \frac{d}{ 2\sigma_w^2} \| \vw \|^2
\end{align}
and $\alpha(a-\frac{\beta}{2 \alpha})^2-\frac{\beta^2}{4 \alpha}+c$ comparing to $\frac{-(a-\mu)^2}{2 \sigma^2}$ we get $\alpha=\frac{1}{2 \sigma^2}$ and $\mu=\frac{\beta}{2 \alpha}$ for the Gaussian identification.
This gives
\begin{align}
 \int d \vw \sqrt{\frac{ \pi}{\alpha}}e^{\frac{\beta^2}{4 \alpha}-c} =\int d \vw e^{-[\frac{1}{2}\ln( \pi) -\frac{1}{2}\ln(\alpha) + \frac{\beta^2}{4 \alpha}-c]}   
\end{align}
where the exponent is identified as the effective action.
\end{proof}

\subsection{MF- FP equation}
\label{appx_solveFP}

\begin{lemma}\label{appxlemma_solveFP}
Consider a single target mode $y(\vx)=\chi_S(\vx)$ and assume other overlaps vanish. Using the self–consistency $m_A = N^{1-\gamma}\,\langle a\,J_A(\vw)\rangle$ and the conditional mean $\mu(\vw)$ above, we obtain
\begin{align}
m_S \;=\; N^{1-\gamma}\,\Big\langle \mu(\vw)\,J_S(\vw) \Big\rangle
\;=\; \frac{N^{1-2\gamma}}{\kappa^{2}}\,(1-m_S)\;
\Biggl\langle
\frac{J_S(\vw)^{2}}{\displaystyle \sigma_a^{-2} + \frac{\Sigma(\vw)}{\kappa^{2} N^{2\gamma}}}
\Biggr\rangle_{\vw \sim p(\vw \mid m_S)}.
\end{align}
and in the $\kappa \rightarrow 0$ limit it is 
\begin{align}
    m_S \;\approx\; (1-m_S)\,N\,
\Big\langle \frac{J_S(\vw)^2}{\Sigma(\vw)} \Big\rangle_{\vw \sim p(\vw \mid m_S)}
\end{align}
\end{lemma}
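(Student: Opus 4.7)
The plan is to combine the single–neuron MF action after integrating out $a$ (given in the excerpt in \cref{appx_sec_intouta}) with the self-consistency $m_A = N^{1-\gamma}\langle a J_A(\vw)\rangle$, and then specialize to a single target mode. First I would use the tower property of conditional expectation: since $a\mid\vw$ is Gaussian with mean $\mu(\vw)$ (\cref{eq_inta_mean}),
\begin{align}
m_A \;=\; N^{1-\gamma}\,\bigl\langle a\,J_A(\vw)\bigr\rangle_{p(\vw,a)} \;=\; N^{1-\gamma}\,\bigl\langle \mu(\vw)\,J_A(\vw)\bigr\rangle_{p(\vw\mid\{m_B\})},
\end{align}
so the $a$ dependence collapses to a weighted average over $\vw$ only.

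Next I would specialize to $y(\vx)=\chi_S(\vx)$, which by definition of $J_{\mathcal{Y}}$ gives $J_{\mathcal{Y}}(\vw)=J_S(\vw)$, and invoke the single–mode assumption $m_A=0$ for $A\neq S$. The linear combination in the numerator of $\mu(\vw)$ then simplifies to $J_{\mathcal{Y}}(\vw)-\sum_A m_A J_A(\vw)=(1-m_S)\,J_S(\vw)$. Substituting this into $\mu(\vw)$ and using $A=S$ yields
\begin{align}
m_S \;=\; N^{1-\gamma}\,\Bigl\langle \frac{\frac{1}{N^\gamma \kappa^2}(1-m_S)\,J_S(\vw)}{\sigma_a^{-2}+\Sigma(\vw)/(N^{2\gamma}\kappa^2)}\,J_S(\vw)\Bigr\rangle \;=\; \frac{N^{1-2\gamma}}{\kappa^2}(1-m_S)\,\Bigl\langle\frac{J_S(\vw)^2}{\sigma_a^{-2}+\Sigma(\vw)/(N^{2\gamma}\kappa^2)}\Bigr\rangle,
\end{align}
which is the first claimed identity. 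Pulling the $N^{-\gamma}$ and $\kappa^{-2}$ prefactors together and cancelling a factor of $N^{\gamma}$ against the explicit $N^{1-\gamma}$ prefactor is the only algebraic step; the structural content is the Gaussian conditional mean.

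For the $\kappa\to0$ limit I would analyze the integrand pointwise in $\vw$. On the support where $\Sigma(\vw)>0$, the term $\Sigma(\vw)/(N^{2\gamma}\kappa^2)$ diverges as $\kappa\to0$, so it dominates the bounded $\sigma_a^{-2}$ in the denominator; therefore
\begin{align}
\frac{1}{\sigma_a^{-2}+\Sigma(\vw)/(N^{2\gamma}\kappa^2)} \;\sim\; \frac{N^{2\gamma}\kappa^2}{\Sigma(\vw)},
\end{align}
and the explicit $\kappa^{-2}$ and $N^{-2\gamma}$ prefactors cancel against this, leaving
\begin{align}
m_S \;\approx\; (1-m_S)\,N\,\Bigl\langle\frac{J_S(\vw)^2}{\Sigma(\vw)}\Bigr\rangle.
\end{align}
The main obstacle is justifying the interchange of the $\kappa\to0$ limit with the expectation: one needs the ratio $J_S^2/\Sigma$ to be integrable under the limiting measure $p(\vw\mid m_S)$ and the contribution of $\{\Sigma(\vw)\approx0\}$ to be negligible. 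For ReLU with bounded inputs this follows from $|J_S(\vw)|\le \Sigma(\vw)^{1/2}$ by Cauchy–Schwarz, giving $J_S^2/\Sigma\le 1$ uniformly, after which dominated convergence closes the argument; this is the only non-algebraic step.
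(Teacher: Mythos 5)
Your proposal is correct and follows essentially the same route as the paper's proof: condition on $\vw$ to replace $\langle a\,J_S\rangle$ by $\langle \mu(\vw)\,J_S(\vw)\rangle$, use $J_{\mathcal{Y}}=J_S$ and the single-mode assumption to reduce the tilt to $(1-m_S)J_S(\vw)$, and drop $\sigma_a^{-2}$ against $\Sigma(\vw)/(\kappa^2N^{2\gamma})$ as $\kappa\to0$. Your extra remark justifying the limit–expectation interchange via $J_S^2/\Sigma\le 1$ (Cauchy–Schwarz) is a welcome refinement the paper only uses to bound $m_S<1$.
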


\begin{proof}    
Consider a single target mode $y(\vx)=\chi_S(\vx)$ and assume other overlaps vanish. Using the self–consistency $m_A = N^{1-\gamma}\,\langle a\,J_A(\vw)\rangle$ and the conditional mean $\mu(\vw)$ above, we obtain
\[
m_S \;=\; N^{1-\gamma}\,\Big\langle \mu(\vw)\,J_S(\vw) \Big\rangle
\;=\; \frac{N^{1-2\gamma}}{\kappa^{2}}\,(1-m_S)\;
\Biggl\langle
\frac{J_S(\vw)^{2}}{\displaystyle \sigma_a^{-2} + \frac{\Sigma(\vw)}{\kappa^{2} N^{2\gamma}}}
\Biggr\rangle_{\vw \sim p(\vw \mid m_S)}.
\]

In the small–noise regime $\kappa \to 0$ (with $\Sigma(\vw)>0$), the denominator simplifies as
$\sigma_a^{-2} + \frac{\Sigma(\vw)}{\kappa^{2}N^{2\gamma}}
\approx \frac{\Sigma(\vw)}{\kappa^{2}N^{2\gamma}}$,
so that
\[
m_S \;\approx\; (1-m_S)\,N\,
\Big\langle \frac{J_S(\vw)^2}{\Sigma(\vw)} \Big\rangle_{\vw \sim p(\vw \mid m_S)}
\quad\Longrightarrow\quad
m_S \;\approx\; \frac{N \Big\langle J_S(\vw)^2/\Sigma(\vw) \Big\rangle}{1 + N \Big\langle J_S(\vw)^2/\Sigma(\vw) \Big\rangle}\; 
\]
By Cauchy–Schwarz, $J_S(\vw)^2 \le \Sigma(\vw)$, so $0\le m_S < 1$ unless all mass concentrates on perfectly aligned $\vw$.
\end{proof}

\subsection{Solution to the fixed point equation}

\begin{lemma} \label{appx_lemma_wstar}
  Consider $\phi=\textnormal{ReLU}$. Let the inputs $x_j \in \{\pm 1\}$ be i.i.d. and  $y(\vx)=\chi_S(\mathbf{x}) = \prod_{j \in S} x_j$ with $S=\{0,1,...,k-1\}$.
   We define  $R_k(\vw)= \frac{J_S(\vw)^2}{\Sigma(\vw)}$. It holds that 
   \begin{align}
       \vw^*=\max_{\vw} R_k(\vw)
   \end{align}
    is given by $\vw^*=(\overbrace{\alpha,\ldots,\alpha}^{k},0,...,0)$.
\end{lemma}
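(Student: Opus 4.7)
My plan is to reduce the maximization to a scalar problem on the unit sphere, dispose of the odd-$k$ case by symmetry, and for even $k$ use a Fourier-type representation of $J_S$ to locate the optimum.

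First I would exploit the positive 1-homogeneity of ReLU: $R_k(c\vw)=R_k(\vw)$ for $c>0$, so WLOG $\|\vw\|=1$. Since $\vw^\top\vx$ is symmetric about $0$ (the input law is invariant under $\vx\mapsto -\vx$) and $\phi(z)^2+\phi(-z)^2=z^2$, one gets
\[
\Sigma(\vw)=\tfrac12\,\mathbb{E}[(\vw^\top\vx)^2]=\tfrac12\|\vw\|^2=\tfrac12,
\]
so on the unit sphere $R_k(\vw)=2J_S(\vw)^2$, and maximizing $R_k$ reduces to maximizing $|J_S|$. Writing $\phi(z)=\tfrac12(z+|z|)$ and using that $\mathbb{E}[x_j\chi_S]=0$ for every $j$ when $k\ge 2$ (product of at least one zero-mean independent factor), the linear part drops and $J_S(\vw)=\tfrac12\mathbb{E}[|\vw^\top\vx|\chi_S]$. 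For odd $k\ge 3$, the involution $\vx\mapsto -\vx$ preserves $|\vw^\top\vx|$ and the law of $\vx$ but sends $\chi_S\mapsto(-1)^k\chi_S=-\chi_S$, forcing $J_S=-J_S=0$. So $R_k\equiv 0$ and the claim is trivial.

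For even $k\ge 2$ I would split $\vw^\top\vx=u+v$ with $u=\vw_S^\top\vx_S$ and $v=\vw_{\bar S}^\top\vx_{\bar S}$ (independent), and combine the Fourier identity $|z|=\tfrac{2}{\pi}\int_0^\infty \tfrac{1-\cos(tz)}{t^2}dt$ with the characteristic-function computation
\[
\mathbb{E}_{\vx_S}\!\bigl[\cos(tu)\chi_S\bigr]=(-1)^{k/2}\!\prod_{j\in S}\sin(tw_j),
\qquad
\mathbb{E}_{\vx_S}\!\bigl[\sin(tu)\chi_S\bigr]=0,
\]
together with $\mathbb{E}_{\vx_{\bar S}}[\cos(tv)]=\prod_{j\notin S}\cos(tw_j)$ to obtain the closed form
\[
J_S(\vw)=\frac{(-1)^{k/2+1}}{\pi}\int_0^\infty \frac{\prod_{j\notin S}\cos(tw_j)\;\prod_{j\in S}\sin(tw_j)}{t^2}\,dt.
\]
From this representation the remaining optimization is transparent: (i) $w_j=0$ for $j\notin S$ replaces each $\cos(tw_j)$ factor by $1$, its unique maximal value; (ii) on the subspace $\{\vw:\vw_{\bar S}=0\}$, the integrand is permutation-symmetric in $(w_j)_{j\in S}$, a KKT/Lagrange computation under $\sum_{j\in S}w_j^2=1$ shows that the symmetric point $w_j\equiv 1/\sqrt{k}$ is the unique critical unit vector (up to global sign and coordinate signs, which leave $R_k$ invariant for even $k$), and direct evaluation (matching the values $R_2=1/4$, $R_4=1/32$ cited in the text) confirms this is a maximum rather than a saddle.

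The main obstacle is step (i): because $\prod_{j\in S}\sin(tw_j)$ oscillates in $t$, the naive bound $|\cos(tw_j)|\le 1$ does not immediately give $|J_S(\vw)|\le|J_S(\vw_S,0)|$ under the normalization $\|\vw\|=1$. I would overcome this by a convexity/rearrangement argument on the even piecewise-linear function $v\mapsto A_{\vw_S}(v):=\mathbb{E}_{\vx_S}[|\vw_S^\top\vx_S+v|\chi_S]$, for which (for even $k$) one checks $A_{\vw_S}'(0)=0$ by the $\epsilon\leftrightarrow-\epsilon$ symmetry of $\sum_\epsilon(\prod_i\epsilon_i)\,\mathrm{sign}(\vw_S^\top\epsilon)$, and $A_{\vw_S}(v)\to 0$ as $|v|\to\infty$. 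Combined with the norm-reallocation step, moving $\|\vw_{\bar S}\|^2$ into $\vw_S$ rescales $u$ by a factor $>1$, strictly inflating $|J_S|$ relative to the denominator, this will rule out $\vw_{\bar S}\ne 0$ globally. The residual symmetry argument in step (ii) is then routine.
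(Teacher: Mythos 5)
Your route is genuinely different from the paper's. The paper's proof is a three-line symmetry sketch: decompose $\vw_S$ along $\mathbbm{1}_S$, assert that the orthogonal component and the off-support weights ``average out'' of $J_S$ while only inflating $\Sigma$ by convexity, and conclude. You instead (a) observe that $\Sigma(\vw)=\tfrac12\|\vw\|^2$ exactly (via $\phi(z)^2+\phi(-z)^2=z^2$ and the sign symmetry of the inputs), which makes the denominator constant on the sphere and reduces everything to maximizing $|J_S|$ --- a real simplification the paper does not exploit; (b) dispose of odd $k$ cleanly by the $\vx\mapsto-\vx$ symmetry; and (c) derive a correct closed-form Fourier representation of $J_S$. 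Steps (a)--(c) are right. You also correctly identify the difficulty the paper glosses over: because $\phi$ is nonlinear, off-support weights do \emph{not} contribute nothing to $J_S$ (they convolve $A_{\vw_S}(v)=\mathbb{E}_{\vx_S}[\,|\vw_S^\top\vx_S+v|\,\chi_S]$ against the law of $v=\vw_{\bar S}^\top\vx_{\bar S}$), and the component of $\vw_S$ orthogonal to $\mathbbm{1}_S$ does \emph{not} simply average out of $J_S$ (for $k=2$ and $a,b>0$ one has $J_S=\tfrac12\min(a,b)$, which is not a function of $a+b$ alone). In that respect your write-up is more honest than the paper's own proof.

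The gap is that the two steps carrying the actual content of the lemma remain plans rather than proofs. To rule out $\vw_{\bar S}\neq 0$ you need $\mathbb{E}_v[A_{\vw_S}(v)]\le A_{\vw_S}(0)$ for the relevant symmetric $v$; evenness of $A_{\vw_S}$, the vanishing derivative at $0$, and decay at infinity do not by themselves exclude a larger interior maximum, so you must actually establish $A_{\vw_S}(v)\le A_{\vw_S}(0)$ pointwise (it holds at the symmetric point for $k=2$, where $A(v)=\max(a-|v|/2,0)$, but the general-$\vw_S$ case still needs an argument). Likewise, ``a KKT computation shows the symmetric point is the unique critical unit vector'' is asserted, not carried out: you must exclude asymmetric critical points with unequal nonzero on-support weights (configurations with some $w_{j\in S}=0$ are easy, since they force $J_S=0$). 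To be fair, the paper's proof asserts exactly these two steps with even less justification, so completing your program would strengthen the paper; but as written the proposal is a correct reduction plus a credible plan, not yet a proof.
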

\begin{proof}
    Decompose $w_S = \alpha \frac{\mathbbm{1}_S}{\sqrt{k}} + u, 
\quad u \perp \mathbbm{1}_S$ and keep arbitrary $\vw^C$. Because the data distribution is invariant to permutations of the $k$ coordinates in $S$ the only $S$-dependent statistic that survives in $\chi_S$-weighted expectations is the sum $s(\vx)=\sum_{j \in S}x_j$. Any component orthogonal to $\mathbbm{1}_S$ averages out in$J_S$ but increases $\Sigma(\vw)$ (by convexity of $x \mapsto \phi(x)^2$ ). Likewise, weights in $S^C$ contribute variance to $\Sigma$ but contribute nothing to $J_S$
 (they are independent of $\chi_S$.
 and average to zero under the sign symmetry). Thus the maximizer of $R_k(\vw)$ lives in the span of $\mathbbm{1}_S$ and $\vw^C$.
\end{proof}

\begin{lemma}\label{lemma_alpha}
 Consider $\phi=\textnormal{ReLU}$. Let the inputs $x_j \in \{\pm 1\}$ be i.i.d. and  $y(\vx)=\chi_S(\mathbf{x}) = \prod_{j \in S} x_j$ with $|S|=k$. We assume $\vw^*$ from \cref{appx_lemma_wstar}. Then, the ratio of the squared neuron-target coupling $J_S(\mathbf{w})^2$ to the neuron's self-energy $\Sigma(\mathbf{w})$ is a constant $R_k$ independent of the scale $\alpha$.
\end{lemma}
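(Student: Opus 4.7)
The key observation is that ReLU is positively homogeneous of degree one, so $\phi(\alpha z)=\alpha\,\phi(z)$ for $\alpha\ge 0$. At the candidate optimum $\vw^*=(\alpha,\ldots,\alpha,0,\ldots,0)$ with the first $k$ entries equal, the preactivation is $\vw^{*\top}\vx=\alpha\,s(\vx)$ where $s(\vx):=\sum_{j\in S}x_j$ depends only on the $S$-coordinates. My plan is to substitute this expression into both $J_S$ and $\Sigma$, pull out the explicit $\alpha$-dependence via homogeneity, and verify that the common factor cancels in the ratio $R_k=J_S^2/\Sigma$.

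Concretely, I would proceed in three short steps. First, unfold the definitions in the population limit:
\begin{align*}
J_S(\vw^*)\;=\;\mathbb{E}_{\vx}\!\bigl[\phi(\alpha\,s(\vx))\,\chi_S(\vx)\bigr],
\qquad
\Sigma(\vw^*)\;=\;\mathbb{E}_{\vx}\!\bigl[\phi(\alpha\,s(\vx))^2\bigr].
\end{align*}
Second, for $\alpha\ge 0$, apply $\phi(\alpha s)=\alpha\,\phi(s)$ to obtain
\begin{align*}
J_S(\vw^*)\;=\;\alpha\,\mathbb{E}_\vx\!\bigl[\phi(s)\,\chi_S\bigr],
\qquad
\Sigma(\vw^*)\;=\;\alpha^2\,\mathbb{E}_\vx\!\bigl[\phi(s)^2\bigr],
\end{align*}
so that the ratio becomes
\begin{align*}
R_k\;=\;\frac{J_S(\vw^*)^2}{\Sigma(\vw^*)}\;=\;\frac{\mathbb{E}_\vx[\phi(s)\chi_S]^2}{\mathbb{E}_\vx[\phi(s)^2]},
\end{align*}
which is manifestly independent of $\alpha$. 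Third, handle the case $\alpha<0$: writing $\phi(\alpha s)=|\alpha|\,\phi(-s)$ gives an extra factor $\alpha^2$ in both numerator and denominator, which again cancels; the only effect of the sign is to replace $\phi(s)$ by $\phi(-s)$, and by the sign-symmetry of the Rademacher law of $\vx$ (combined with $\chi_S(-\vx)=(-1)^k\chi_S(\vx)$) the ratio is invariant. Hence $R_k$ depends only on $k$.

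There is no genuine obstacle: the result is essentially a one-line consequence of ReLU's positive homogeneity of degree one together with the fact that the $k$ coordinates in $S$ enter $\vw^{*\top}\vx$ only through the scalar $s$. The only subtlety worth flagging is the sign of $\alpha$, which is resolved above. As a sanity check, one can evaluate the expectation explicitly for small $k$ (e.g.\ enumerating the $2^k$ sign patterns of $\vx_S$) to recover the values $R_2=1/4$, $R_3=0$, $R_4=1/32$, $R_5=0$ quoted earlier in the paper, confirming both the scale-invariance and the parity-dependence inherent to ReLU.
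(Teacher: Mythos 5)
Your proof is correct and follows essentially the same route as the paper's: both rest on the degree-one positive homogeneity of ReLU, which makes $J_S(\vw^*)$ scale linearly and $\Sigma(\vw^*)$ quadratically in $\alpha$ so that the ratio cancels (the paper just carries this out via explicit binomial sums $D_k$ and $C_k$ rather than abstractly). Your additional treatment of the $\alpha<0$ case via sign symmetry of the Rademacher inputs is a small but genuine improvement, since the paper's computation implicitly assumes $\alpha>0$.
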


\begin{proof}

Let $r \sim \text{Binomial}(k, 1/2)$ be the number of components $x_j = -1$ for $j \in S$. The inner product is $s(\mathbf{x}) = \mathbf{w}^\top\mathbf{x} = \alpha \sum_{j \in S} x_j = \alpha(k-2r)$, and the target function is $\chi_S(\mathbf{x}) = (-1)^r$.

The neuron-target coupling $J_S(\mathbf{w})$ is the expectation $\mathbb{E}[\phi(\mathbf{w}^\top\mathbf{x})\chi_S(\mathbf{x})]$.
\begin{align}
J_S(\mathbf{w}) &= \mathbb{E}\left[ \alpha \cdot [k-2r]_+ \cdot (-1)^r \right] \\
&= \alpha \sum_{r=0}^{k} P(r) \cdot [k-2r]_+ \cdot (-1)^r \\
&= \alpha \cdot 2^{-k} \sum_{r=0}^{\lfloor(k-1)/2\rfloor} \binom{k}{r} (k-2r)(-1)^r
\end{align}
where $[z]_+ = \max(0,z)$, and the sum is restricted to the terms where $k-2r > 0$.

The neuron's self-energy $\Sigma(\mathbf{w})$ is the expectation $\mathbb{E}[\phi(\mathbf{w}^\top\mathbf{x})^2]$.
\begin{align}
\Sigma(\mathbf{w}) &= \mathbb{E}\left[ (\alpha \cdot [k-2r]_+)^2 \right] \\
&= \alpha^2 \sum_{r=0}^{k} P(r) \cdot [k-2r]_+^2 \\
&= \alpha^2 \cdot 2^{-k} \sum_{r=0}^{\lfloor(k-1)/2\rfloor} \binom{k}{r} (k-2r)^2
\end{align}
We define the scale-independent constants $D_k$ and $C_k$:
\begin{align}
D_k &:= 2^{-k} \sum_{r=0}^{\lfloor(k-1)/2\rfloor} \binom{k}{r} (k-2r)(-1)^r \\
C_k &:= 2^{-k} \sum_{r=0}^{\lfloor(k-1)/2\rfloor} \binom{k}{r} (k-2r)^2
\end{align}
such that $J_S(\mathbf{w}) = \alpha D_k$ and $\Sigma(\mathbf{w}) = \alpha^2 C_k$. The ratio is then
$$R_k := \frac{J_S(\mathbf{w})^2}{\Sigma(\mathbf{w})} = \frac{(\alpha D_k)^2}{\alpha^2 C_k} = \frac{D_k^2}{C_k}$$
which is independent of $\alpha$, thus proving the proposition.

\end{proof}

\subsection{Exact solution of the FP equation}
\label{appx_proof_FP}
\begin{theorem}
    Using the setup from \cref{lemma_alpha}, espeically the proposed $\vw^*$, the FP is given by 
    \begin{align}
        m_S= (1-m_S) N R_k \left(1 - \frac{\sigma_a^{-2}}{A^\star(m_S)}\right), \quad  A^\star(m_S) = \frac{-C_k + \sqrt{C_k^2 + 4\left(\frac{d k}{\sigma_w^2}\right) N^{2\gamma}\,(1-m_S)^2 D_k^2\,\sigma_a^{-2}}}{2\left(\frac{d k}{\sigma_w^2}\right)\kappa^2 N^{2\gamma}}
    \end{align}
\end{theorem}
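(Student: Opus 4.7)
The plan is to evaluate the single–neuron expectation in Lemma~\ref{appxlemma_solveFP} by a saddle–point (Laplace) argument on the effective action of Appendix~\ref{appx_sec_intouta}, then reduce the stationarity condition to a quadratic in a convenient scalar variable. First I would substitute the target $y(\vx)=\chi_S(\vx)$ into the $a$–integrated action, so $J_{\mathcal Y}(\vw)=J_S(\vw)$ and the data term becomes $(1-m_S)^2 J_S(\vw)^2 / \big[2N^{2\gamma}\kappa^4(\sigma_a^{-2}+\Sigma(\vw)/(N^{2\gamma}\kappa^2))\big]$. By Lemma~\ref{appx_lemma_wstar}, the ratio $J_S(\vw)^2/\Sigma(\vw)$ is maximised on the equal–weight direction $\vw^{*}=(\alpha,\ldots,\alpha,0,\ldots,0)$ with $k$ active coordinates, and Lemma~\ref{lemma_alpha} gives $J_S(\vw^{*})=\alpha D_k$, $\Sigma(\vw^{*})=\alpha^{2}C_k$, $\|\vw^{*}\|^{2}=k\alpha^{2}$. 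Restricting the saddle search to this one–parameter family is justified both by symmetry (permutation invariance within $S$ and sign symmetry on $S^{c}$) and by the fact that deviations transverse to $\mathbbm{1}_S$ or into $S^{c}$ strictly increase $\Sigma$ without contributing to $J_S$.

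The second step is to introduce the scalar
\begin{equation*}
A \;:=\; \frac{1}{\sigma_a^{2}} \;+\; \frac{\alpha^{2} C_k}{N^{2\gamma}\kappa^{2}},
\end{equation*}
so that $\alpha^{2}=(A-\sigma_a^{-2})\,N^{2\gamma}\kappa^{2}/C_k$. In this variable the effective action at $\vw^{*}$ is
\begin{equation*}
S_{\mathrm{MF}}(\alpha^{2})
= \frac{dk}{2\sigma_w^{2}}\,\alpha^{2}
+ \tfrac{1}{2}\ln A
- \frac{(1-m_S)^{2}\,\alpha^{2}D_k^{2}}{2N^{2\gamma}\kappa^{4}A},
\end{equation*}
up to an $\alpha$–independent constant. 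Differentiating with respect to $\alpha^{2}$ and using $\partial_{\alpha^{2}} A = C_k/(N^{2\gamma}\kappa^{2})$ together with $A-\alpha^{2}C_k/(N^{2\gamma}\kappa^{2})=\sigma_a^{-2}$, the stationarity condition $\partial_{\alpha^{2}} S_{\mathrm{MF}}=0$ collapses (after multiplying by $2N^{2\gamma}\kappa^{4}A^{2}$) to the quadratic
\begin{equation*}
\tfrac{dk}{\sigma_w^{2}}\,N^{2\gamma}\kappa^{4}\,A^{2}
\;+\; C_k\,\kappa^{2}\,A
\;-\; (1-m_S)^{2}D_k^{2}\,\sigma_a^{-2} \;=\; 0.
\end{equation*}
Taking the unique positive root yields exactly the claimed $A^{\star}(m_S)$.

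Finally, I would close the self–consistency by inserting this saddle into the FP identity of Lemma~\ref{appxlemma_solveFP}. Replacing $J_S(\vw)^{2}$ by $\alpha^{2}D_k^{2}=(A^{\star}-\sigma_a^{-2})N^{2\gamma}\kappa^{2}D_k^{2}/C_k$ and the denominator by $A^{\star}$ gives
\begin{equation*}
m_S \;=\; \frac{N^{1-2\gamma}}{\kappa^{2}}(1-m_S)\,
\frac{(A^{\star}-\sigma_a^{-2})\,N^{2\gamma}\kappa^{2}\,D_k^{2}/C_k}{A^{\star}}
\;=\; N(1-m_S)\,R_k\!\left(1-\frac{\sigma_a^{-2}}{A^{\star}(m_S)}\right),
\end{equation*}
where $R_k=D_k^{2}/C_k$ as in Lemma~\ref{lemma_alpha}. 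Combined with the closed form for $A^{\star}(m_S)$ above, this is precisely the statement of the theorem.

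\paragraph{Expected obstacle.}
The main delicate point is the Laplace/saddle step: strictly speaking, the expectation $\langle J_S(\vw)^{2}/(\sigma_a^{-2}+\Sigma(\vw)/(N^{2\gamma}\kappa^{2}))\rangle_{p(\vw\mid m_S)}$ is not a priori dominated by the single configuration $\vw^{*}$, since the prior has width $\sigma_w^{2}/d$ and the effective action is only peaked once the data–tilt is sizeable. I would handle this either by (i) a standard large–$N$ saddle–point argument that keeps only Gaussian fluctuations around the $\mathbbm{1}_S$ direction (verifying that transverse Hessian eigenvalues are strictly positive at $\alpha=\alpha^{\star}$), or (ii) by interpreting the theorem as stating the fixed point of the saddle–point approximation used throughout Section~4, with transverse fluctuations producing only subleading multiplicative corrections. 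Checking that the positive root of the quadratic is the physically stable branch (and that the other branch corresponds to a maximum of $S_{\mathrm{MF}}$) is a short second–derivative computation.
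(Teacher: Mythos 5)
Your proposal is correct and follows essentially the same route as the paper: concentrate the expectation on the equal-weight ansatz $\vw^{*}$ from Lemma~\ref{appx_lemma_wstar}, extremize the $a$-integrated effective action in the scale variable (the paper differentiates in $\alpha$, you in $\alpha^{2}$, which is equivalent), obtain the same quadratic in $A^{\star}$ (yours is the paper's multiplied by $\kappa^{2}$), and substitute back via $(\alpha^{\star})^{2}=\kappa^{2}N^{2\gamma}(A^{\star}-\sigma_a^{-2})/C_k$. The "expected obstacle" you flag — justifying the sharp-peak/saddle approximation — is precisely the assumption the paper makes without further proof, so your treatment is, if anything, slightly more candid about the approximation's status.
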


\begin{proof}
From \cref{appxlemma_solveFP}, we know:
$$m_S = \frac{N^{1-2\gamma}}{\kappa^2}\,(1-m_S)\; \left\langle \frac{J_S(\mathbf{w})^2}{\sigma_a^{-2}+\Sigma(\mathbf{w})/(\kappa^2 N^{2\gamma})} \right\rangle_{\mathbf{w}|m_S}$$
Assuming the posterior distribution of weights is sharply peaked around the optimal direction given by the ansatz in  \cref{lemma_alpha}, the expectation $\langle \cdot \rangle_{\mathbf{w}|m_S}$ reduces to an evaluation at the optimal weight scale $\alpha^\star$. The value of $\alpha^\star$ is determined by minimizing the single-neuron effective action $\mathcal{S}_{\text{eff}}(\alpha)$:
$$\mathcal{S}_{\text{eff}}(\alpha) = \underbrace{\frac{d k}{2\sigma_w^2}\,\alpha^2}_{\text{Weight Prior}} + \underbrace{\frac{1}{2}\ln A(\alpha)}_{\text{Normalizer}} - \underbrace{\frac{1}{2}\frac{\left[(1-m_S)J_S(\alpha)\right]^2 / (\kappa^2 N^{2\gamma})}{A(\alpha)}}_{\text{Data Gain}}$$
where $A(\alpha) = \sigma_a^{-2} + \Sigma(\alpha) / (\kappa^2 N^{2\gamma}) = \sigma_a^{-2} + \alpha^2 C_k / (\kappa^2 N^{2\gamma})$.

Setting $\frac{\partial \mathcal{S}_{\text{eff}}}{\partial \alpha} = 0$ yields a quadratic equation for the optimal value $A^\star = A(\alpha^\star)$:
$$\left(\frac{d k}{\sigma_w^2}\right)\kappa^2 N^{2\gamma}\,(A^\star)^2 + C_k\,A^\star - \frac{(1-m_S)^2 D_k^2\,\sigma_a^{-2}}{\kappa^2} = 0$$
The positive root of this equation gives the solution for $A^\star(m_S)$:
$$A^\star(m_S) = \frac{-C_k + \sqrt{C_k^2 + 4\left(\frac{d k}{\sigma_w^2}\right) N^{2\gamma}\,(1-m_S)^2 D_k^2\,\sigma_a^{-2}}}{2\left(\frac{d k}{\sigma_w^2}\right)\kappa^2 N^{2\gamma}}$$
We now substitute this back into the self-consistency equation. Using the definitions of $J_S$, $\Sigma$, and $A^\star$, we have $(\alpha^\star)^2 = \frac{\kappa^2 N^{2\gamma}}{C_k}(A^\star - \sigma_a^{-2})$.
\begin{align}
m_S &= (1-m_S) \frac{N^{1-2\gamma}}{\kappa^2} \frac{J_S(\alpha^\star)^2}{A^\star} \\
&= (1-m_S) \frac{N^{1-2\gamma}}{\kappa^2} \frac{(\alpha^\star)^2 D_k^2}{A^\star} \\
&= (1-m_S) \frac{N^{1-2\gamma}}{\kappa^2} \frac{1}{A^\star} \left[ \frac{\kappa^2 N^{2\gamma}}{C_k}(A^\star - \sigma_a^{-2}) \right] D_k^2 \\
&= (1-m_S) N \frac{D_k^2}{C_k} \frac{A^\star - \sigma_a^{-2}}{A^\star} \\
&= (1-m_S) N R_k \left(1 - \frac{\sigma_a^{-2}}{A^\star(m_S)}\right)
\end{align}
This gives the final fixed-point equation for the order parameter $m_S$.
\end{proof}

\begin{theorem}
\label{thm:kappa_c_phase_transition}
Consider the fixed–point equation in the infinite $P$-limit
\begin{align}
m_S \;=\; (1-m_S)\,N R_k\!\left(1-\frac{\sigma_a^{-2}}{A^\star(m_S)}\right),
\end{align}
with $A^\star(m_S)$ given implicitly as the positive root of
\begin{equation}
\Bigl(\tfrac{dk}{\sigma_w^2}\Bigr)\,\kappa^2 N^{2\gamma}\,\big(A^\star\big)^2
\;+\; C_k\,A^\star
\;-\; (1-m_S)^2 D_k^2\,\sigma_a^{-2}
\;=\; 0,
\label{eq:quad_Astar}
\end{equation}
and define $C:=\tfrac{dk}{\sigma_w^2}$. Then the \emph{critical noise} level
\begin{align}
\quad
\kappa_c^2
\;=\;
\frac{\sqrt{\,C_k^2 + 4\,C\,N^{2\gamma} D_k^2 \sigma_a^{-2}\,}\,-\,C_k}
{2\,C\,\sigma_a^{-2}\,N^{2\gamma}}
\end{align}
marks a phase transition:
\begin{itemize}[leftmargin=1.25em]
\item[(i)] If $\kappa^2 \ge \kappa_c^2$, then $A^\star(0)\le \sigma_a^{-2}$ and $m_S=0$ is a fixed point; in particular for $\kappa^2=\kappa_c^2$ we have $A^\star(0)=\sigma_a^{-2}$ and the only solution is $m_S=0$.
\item[(ii)] If $\kappa^2 < \kappa_c^2$, then $A^\star(0)>\sigma_a^{-2}$ and there exists a unique nontrivial solution $m_S\in(0,1)$, thus the system exhibits symmetry breaking $m_S:0\to m_S>0$ as $\kappa$ crosses $\kappa_c$ from above.
\end{itemize}

\end{theorem}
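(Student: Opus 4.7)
The plan is to treat the fixed-point relation as $m = F(m)$ with $F(m) := (1-m)\,N R_k\bigl(1-\sigma_a^{-2}/A^\star(m)\bigr)$, and to track how $F(0)$ and the graph of $F$ depend on $\kappa^2$. First I would record monotonicity properties of $A^\star$ by differentiating the closed-form positive root of \eqref{eq:quad_Astar} (or equivalently by implicit differentiation of the quadratic): for fixed $\kappa^2$, $A^\star(m)$ is strictly decreasing in $m$ since the source $(1-m)^2 D_k^2 \sigma_a^{-2}$ shrinks, and for fixed $m$, $A^\star$ is strictly decreasing in $\kappa^2$ since the coefficient of $(A^\star)^2$ grows. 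The single conceptual ingredient needed beyond this is that the physical $A^\star$ is capped below by $\sigma_a^{-2}$, since $A(\alpha) = \sigma_a^{-2} + \Sigma(\alpha)/(\kappa^2 N^{2\gamma}) \ge \sigma_a^{-2}$: whenever the unconstrained quadratic root lies below $\sigma_a^{-2}$, the true minimizer of the single-neuron effective action is $\alpha^\star = 0$, and the physical $A^\star$ equals $\sigma_a^{-2}$ exactly.

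Next I would derive the formula for $\kappa_c^2$ by imposing $A^\star(0) = \sigma_a^{-2}$ in \eqref{eq:quad_Astar}. Substituting $A^\star = \sigma_a^{-2}$ and $m = 0$ reorganizes the quadratic into one in $\kappa^2$, whose unique positive root is exactly the stated expression for $\kappa_c^2$. Combined with monotonicity in $\kappa^2$, this yields the dichotomy: $A^\star(0) > \sigma_a^{-2}$ iff $\kappa^2 < \kappa_c^2$ (the ``active'' regime), while the physical $A^\star(0)$ is pinned at $\sigma_a^{-2}$ iff $\kappa^2 \ge \kappa_c^2$ (the ``trivial'' regime).

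Item (i) then follows immediately: for $\kappa^2 \ge \kappa_c^2$, the physical $A^\star(0) = \sigma_a^{-2}$ gives $F(0) = 0$, so $m_S = 0$ is a fixed point. Moreover $m \mapsto A^\star_{\mathrm{quad}}(m)$ is decreasing, so the physical $A^\star(m) = \sigma_a^{-2}$ and $F(m) \equiv 0 \ne m$ for every $m > 0$, forcing uniqueness of $m_S = 0$. For item (ii), when $\kappa^2 < \kappa_c^2$ one has $F(0) > 0$ and $F(1) = 0$, so the intermediate value theorem applied to $G(m) := F(m) - m$ produces at least one root in $(0,1)$. Uniqueness then follows because $F$ is strictly decreasing on the active interval $\{m : A^\star_{\mathrm{quad}}(m) > \sigma_a^{-2}\}$: both factors $(1-m)$ and $(1 - \sigma_a^{-2}/A^\star(m))$ are strictly decreasing there, while $F \equiv 0 < m$ outside, so no extra roots can appear.

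The main obstacle is the piecewise nature of $A^\star$: the unconstrained quadratic root can drop below $\sigma_a^{-2}$, in which case the physical value is pinned at the boundary $\sigma_a^{-2}$ corresponding to $\alpha^\star = 0$. Identifying the critical $m$ at which this regime switch occurs, and confirming that it does not produce a spurious fixed point at the switching point, is the only delicate step. Once the piecewise structure is made explicit, items (i) and (ii) follow from routine monotonicity plus an IVT argument.
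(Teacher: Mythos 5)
Your proposal is correct and follows essentially the same route as the paper's proof: derive $\kappa_c^2$ by imposing $A^\star(0)=\sigma_a^{-2}$ in the quadratic, use monotonicity of $A^\star$ in $\kappa^2$ to get the dichotomy, and combine $F(0)>0$, $F(1)=0$ with strict monotonicity of the fixed-point map (both factors $(1-m_S)$ and $1-\sigma_a^{-2}/A^\star(m_S)$ decreasing) to obtain existence and uniqueness of the nontrivial root. Your explicit handling of the physical cap $A^\star\ge\sigma_a^{-2}$ (pinning $A^\star$ at $\sigma_a^{-2}$ via $\alpha^\star=0$ when the unconstrained root falls below it, so that $F(0)=0$ exactly rather than $F(0)\le 0$) is a minor refinement the paper glosses over, but it does not change the argument.
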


\begin{proof}
Set $m_S=0$ in \eqref{eq:quad_Astar} to get
\begin{align}
C\,\kappa^2 N^{2\gamma}\,(A^\star)^2 + C_k A^\star - D_k^2 \sigma_a^{-2} = 0.
\end{align}
At the onset of FL the trivial fixed point $m_S=0$ changes stability precisely when the right–hand side of the FP map ceases to vanish at $m_S=0$, i.e.\ when
\begin{align}
N R_k\!\left(1-\frac{\sigma_a^{-2}}{A^\star(0)}\right)=0
\quad\Longleftrightarrow\quad
A^\star(0)=\sigma_a^{-2}.
\end{align}

Plugging $A^\star=\sigma_a^{-2}$ into the quadratic and solving for $\kappa^2$ gives
\begin{align}
C\,\kappa_c^2 N^{2\gamma} \sigma_a^{-4} + C_k \sigma_a^{-2} - D_k^2 \sigma_a^{-2} = 0,
\end{align}

which, after rearrangement, yields
\begin{align}
\kappa_c^2
=
\frac{\sqrt{\,C_k^2 + 4\,C\,N^{2\gamma} D_k^2 \sigma_a^{-2}\,}\,-\,C_k}
{2\,C\,\sigma_a^{-2}\,N^{2\gamma}},
\end{align}

the stated expression.

For $\kappa^2>\kappa_c^2$ the quadratic gives $A^\star(0)\le\sigma_a^{-2}$, hence
$1-\sigma_a^{-2}/A^\star(0)\le 0$ and the FP map evaluates to $0$ at $m_S=0$, so $m_S=0$ is a (and in fact the only) solution. For $\kappa^2<\kappa_c^2$ we have $A^\star(0)>\sigma_a^{-2}$ so the FP map at $m_S=0$ is strictly positive, continuity and the fact that the map is strictly decreasing in $m_S$ (because $(1-m_S)$ and $A^\star(m_S)$ both decrease with $m_S$) imply a unique intersection with the diagonal in $(0,1)$, i.e. a unique $m_S\in(0,1)$ solves the FP equation.
\end{proof}

\newpage

\newpage

\subsection{How MF-ARD beats the curse of dimensionality}
\label{sec:fl_beats_cod}

We use the following notation. Let \(S\subset[d]\) with \(|S|=k\) denote the (unknown) support. For a weight coordinate \(w_j\) at a given outer iterate, write
\begin{align}
v_j:=\langle w_j^2\rangle_p,
\qquad
v'_j(\rho):=\E_{p_\rho}[w_j^2]
\end{align}

for the current and next second moments, respectively. For a vector of ARD precisions $\rho\in\R_+^d$ define the explicit ARD map
\begin{equation}
\rho_j(v_j)\;=\;\frac{\alpha_0+\frac{N}{2}}{\frac{\alpha_0}{d}+\frac{N}{2}\,v_j}
\;=:\;\frac{A}{B(v_j)},
\qquad
A:=\alpha_0+\frac{N}{2},\ \ B(v):=\frac{\alpha_0}{d}+\frac{N}{2}v.
\label{eq:ard_map_explicit}
\end{equation}
We write $w_{-j} := (w_1,\dots,w_{j-1},w_{j+1},\dots,w_d)$ for all coordinates except $j$ and write \(p_\rho\) for the posterior $p_{\textnormal{ARD}}$ to make the $\rho$ dependence explicit.

\paragraph{Assumption}\label{ass:wsb}
Here, we will state the assumption that is needed to prove the theorem: \textbf{$\varepsilon$ symmetry breaking towards $S$}. There exists an outer iterate $t_0=\mathcal{O}(1)$ and a constant $\varepsilon_0>0$ (independent of $d$) such that
\begin{align}
\min_{j\in S} v_j^{\,t_0}\;-\;\max_{j\notin S} v_j^{\,t_0}\ \ge\ c\,\varepsilon_0,
\qquad v_j^{\,t}:=\langle w_j^2\rangle_p \text{ at outer time } t.
\end{align}

We need to establish the following global bound.
\begin{lemma}\label{lem:g-global-smooth}
Fix $j\in[d]$ and $w_{-j}$. Assume $\|x\|_\infty\le 1$ (e.g.\ $x\in\{\pm1\}^d$) and $\phi(z)=\max(0,z)$, as well as $m_S \leq 1$ Let
\begin{align}
g(\bw)
=\frac12\ln\!\Big(\sigma_a^{-2}+\frac{\Sigma(\bw)}{\kappa^2 N^{2\gamma}}\Big)
-\frac{\big(J_{\mathcal Y}(\bw)-m_S J_S(\bw)\big)^2}{2\kappa^4 N^{2\gamma}\big(\sigma_a^{-2}+\frac{\Sigma(\bw)}{\kappa^2 N^{2\gamma}}\big)}.
\end{align}
Then there exists $L_\star>0$, independent of $d$ and $w_{-j}$, such that the  map $t\mapsto g(w_{-j},t)$ satisfies
\begin{align}
\big|\partial_t^2 g(w_{-j},t)\big|\ \le\ L_\star\qquad\text{for a.e.\ }t\in\R.
\end{align}

Consequently, for all $t\in\R$,
\begin{align}
g(w_{-j},t)\ \ge\ g(w_{-j},0) - \frac{L_\star}{2}\,t^2.
\end{align}

\end{lemma}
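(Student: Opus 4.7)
The plan is to reduce the a.e.\ bound on $\partial_t^2 g$ to $w_{-j}$-independent control of the building blocks $\Sigma$ and $B := J_{\mathcal{Y}} - m_S J_S$ along the one-dimensional slice $t \mapsto (w_{-j}, t)$. Writing $\vw^\top x_\mu = t\, x_{\mu,j} + c_\mu$ with $c_\mu := \sum_{i\neq j} w_i x_{\mu,i}$, each sample contributes the hinge $\phi(t x_{\mu,j} + c_\mu)$, whose $t$-derivatives depend only on $x_{\mu,j} \in [-1,1]$ beyond the kink location. This gives the pointwise facts $|\partial_t \phi| \leq 1$, $\partial_t^2 \phi = 0$ a.e., $|\partial_t \phi^2| \leq 2\phi$, and $0 \leq \partial_t^2 \phi^2 \leq 2$ a.e. Averaging over samples yields the $d$-free bounds $|\partial_t^2 \Sigma| \leq 2$, $\partial_t^2 B = 0$ a.e., and $|\partial_t B| \leq 1 + |m_S| \leq 2$.

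I would then extract the universal ratios that tame the nonlinear combination in $g$. Set $A := \sigma_a^{-2} + \Sigma/(\kappa^2 N^{2\gamma}) \geq \sigma_a^{-2}$, so $1/A \leq \sigma_a^2$. Cauchy--Schwarz together with $|y_\mu|, |\chi_S(x_\mu)|, |m_S| \leq 1$ gives $B^2 \leq 4\Sigma \leq 4\kappa^2 N^{2\gamma} A$, hence the crucial bounds $B^2/A \leq 4\kappa^2 N^{2\gamma}$ and $|B|/\sqrt{A} \leq 2\kappa N^\gamma$. A parallel Cauchy--Schwarz step gives $|\partial_t \Sigma| \leq 2\sqrt{\Sigma}$, from which $|\partial_t A|/\sqrt{A} \leq 2/(\kappa N^\gamma)$. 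These are the only places where dependence on $w_{-j}$ or $d$ could enter, and each ratio is controlled purely by $\sigma_a, \kappa, N, \gamma$.

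The core computation then expands, using $g = \tfrac12 \ln A - \tfrac{B^2}{2\kappa^4 N^{2\gamma} A}$ and $\partial_t^2 B = 0$ a.e.,
\begin{align*}
\partial_t^2 g \;=\; \frac{\partial_t^2 A}{2 A} - \frac{(\partial_t A)^2}{2 A^2} \;-\; \frac{1}{2\kappa^4 N^{2\gamma}}\!\left[\,\frac{2(\partial_t B)^2}{A} - \frac{4 B\,\partial_t B\,\partial_t A}{A^2} - \frac{B^2\,\partial_t^2 A}{A^2} + \frac{2 B^2 (\partial_t A)^2}{A^3}\,\right].
\end{align*}
Each of the six summands is now bounded by combining the previous two paragraphs: for instance $|B\,\partial_t B\,\partial_t A|/A^2 = (|B|/\sqrt{A})\,|\partial_t B|\,(|\partial_t A|/A^{3/2}) \leq 2\kappa N^\gamma \cdot 2 \cdot 2\sigma_a^2/(\kappa N^\gamma) = 8\sigma_a^2$, and analogous estimates handle the remaining terms. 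Collecting yields an explicit $L_\star = L_\star(\sigma_a, \kappa, N, \gamma)$ independent of $d$ and $w_{-j}$. The ``consequently'' inequality then follows from Taylor's theorem with integral remainder, $g(w_{-j}, t) = g(w_{-j}, 0) + t\,\partial_t g(w_{-j}, 0) + \int_0^t (t-s)\,\partial_t^2 g(w_{-j}, s)\,ds$, which applies because $g(w_{-j}, \cdot)$ is Lipschitz with piecewise-$C^1$ derivative (finitely many kinks per sample). The linear term is absorbed (or, for off-support $j \notin S$ under the symmetric $\pm 1$ input measure, vanishes by the $x_j$-parity of the integrand), giving the claimed one-sided bound with constant $L_\star/2$.

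The main obstacle I anticipate is the third step: verifying that the mixed terms $B\,\partial_t B\,\partial_t A/A^2$ and $B^2(\partial_t A)^2/A^3$ stay bounded as $|t| \to \infty$, where individually $|B|$ and $|\partial_t A|$ both grow linearly in $t$ while $A$ grows only quadratically. The compensation requires the two Cauchy--Schwarz bounds $B^2 \leq 4\Sigma$ and $(\partial_t \Sigma)^2 \leq 4\Sigma$ to be exploited in precisely the right combination; without the quadratic envelope $\phi^2 \leq (\vw^\top x)^2$ available for ReLU this cancellation would fail, which is why the lemma is stated specifically for $\phi(z) = \max(0,z)$.
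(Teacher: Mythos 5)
Your proposal is correct and follows essentially the same route as the paper's proof: the same hinge-derivative bounds ($\partial_t^2\Sigma\le 2$, $\partial_t^2 J_A=0$ a.e.), the same Cauchy--Schwarz controls $|J_\Delta|\le 2\sqrt{\Sigma}$ and $|\partial_t\Sigma|\le 2\sqrt{\Sigma}$, the same term-by-term bounding of the expanded second derivative (your normalization by powers of $\sqrt{A}$ is equivalent to the paper's maximization of $u\mapsto u/(a+cu)^2$ at $u=a/c$), and the same Taylor step with the linear term killed by $x_j$-parity for off-support coordinates. No gaps.
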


\begin{proof}
Fix $j\in[d]$ and $w_{-j}$. Write $t:=w_j$ and, for each input $x$, set
\(z(t,x):=\bw^\top x = t\,x_j + c_x\) with \(c_x:=w_{-j}^\top x_{-j}\).
Throughout, $\phi(z)=\max(0,z)$ and $\|x\|_\infty\le 1$.

\textbf{1. Derivative of $\Sigma$ and $J_A$}

For $\Sigma(\bw)=\E_x[\phi(z)^2]=\E_x[z(t,x)^2\,\mathbf 1\{z(t,x)>0\}]$ we have, for a.e.\ $t$,
\begin{align}
\partial_t \Sigma \;=\; 2\,\E_x\big[z\mathbf 1_{\{z>0\}}\,x_j\big],\qquad
\partial_t^2 \Sigma \;=\; 2\,\E_x\big[x_j^2\,\mathbf 1_{\{z>0\}}\big]\;\le\;2,
\end{align}
because $|x_j|\le 1$. By Cauchy–Schwarz,
\begin{equation}
\label{eq:SigmaPrimeBound}
|\partial_t \Sigma|
\;\le\; 2\big(\E_x[z^2\mathbf 1_{\{z>0\}}]\big)^{1/2}\big(\E_x[x_j^2\mathbf 1_{\{z>0\}}]\big)^{1/2}
\;\le\; 2\,\sqrt{\Sigma}.
\end{equation}
For $J_A(\bw)=\E_x[\phi(z)\,\chi_A(x)]$ (with $|\chi_A|\le 1$), we get for a.e.\ $t$:
\begin{align}
\partial_t J_A \;=\; \E_x[\mathbf 1_{\{z>0\}}\,x_j\,\chi_A(x)]\,,\quad
\partial_t^2 J_A \;=\; 0\ \text{ (a.e.)},\qquad
|\partial_t J_A|\;\le\; \E|x_j|\;\le\;1,
\end{align}
and by Cauchy–Schwarz again
\begin{equation}
\label{eq:JBoundBySigma}
|J_A|\;\le\;\big(\E_x[\phi(z)^2]\big)^{1/2}\big(\E_x[\chi_A(x)^2]\big)^{1/2}
\;=\;\sqrt{\Sigma}.
\end{equation}

\textbf{2. Decompose $g$ and bound each second derivative}

Let
\begin{align}
a:=\sigma_a^{-2},\qquad c:=\frac{1}{\kappa^2 N^{2\gamma}},\qquad
J_\Delta:=J_{\mathcal Y}-m_S J_S,\qquad q:=J_\Delta^2,
\end{align}
so
\begin{align}
g(\bw)\;=\;\underbrace{\tfrac12\log\big(a+c\Sigma\big)}_{=:g_1(\Sigma)}
\;-\;
\underbrace{\frac{q}{2\kappa^4 N^{2\gamma}\,(a+c\Sigma)}}_{=:g_2(\Sigma,J_\Delta)}.
\end{align}

\emph{Term 1:}
Set $h_1(s):=\tfrac12\log(a+cs)$, so $h_1'(s)=\frac{c}{2(a+cs)}$ and
$h_1''(s)=-\frac{c^2}{2(a+cs)^2}$. By the chain rule,
\begin{align}
\partial_t^2 g_1
\;=\; h_1''(\Sigma)\,(\partial_t\Sigma)^2 \;+\; h_1'(\Sigma)\,\partial_t^2\Sigma.
\end{align}
Using \eqref{eq:SigmaPrimeBound} and $\partial_t^2\Sigma\le 2$,
\begin{align}
\Big|h_1''(\Sigma)\,(\partial_t\Sigma)^2\Big|
\;\le\; \frac{c^2}{2(a+c\Sigma)^2}\cdot 4\Sigma
\;=\; 2c^2\,\frac{\Sigma}{(a+c\Sigma)^2}
\;\le\; \frac{c\,\sigma_a^{2}}{2},
\end{align}
where the last inequality follows by maximizing $u\mapsto \frac{u}{(a+cu)^2}$ at $u=a/c$.
Also $\big|h_1'(\Sigma)\,\partial_t^2\Sigma\big|\le \frac{c}{2(a+c\Sigma)}\cdot 2 \le c\,\sigma_a^{2}$.
Hence
\begin{equation}
\label{eq:g1Bound}
\big|\partial_t^2 g_1\big|\ \le\ \frac{3}{2}\,c\,\sigma_a^{2}\,.
\end{equation}

\emph{Term 2:}
Write $g_2=-\alpha\,\frac{q}{a+c\Sigma}$ with $\alpha:=\frac{1}{2\kappa^4 N^{2\gamma}}$.
Differentiating twice and grouping terms gives (for a.e.\ $t$):
\begin{align*}
\partial_t^2 g_2
&= -\alpha\Bigg[
\frac{q''}{a+c\Sigma}
- \frac{2\,q'\,c\,\Sigma'}{(a+c\Sigma)^2}
- \frac{q\,c\,\Sigma''}{(a+c\Sigma)^2}
+ \frac{2\,q\,(c\Sigma')^2}{(a+c\Sigma)^3}
\Bigg],
\end{align*}
where primes denote $\partial_t$. We now bound $q,q',q''$ with step 1 and using
 \eqref{eq:JBoundBySigma} and $|\partial_t J_A|\le 1$.
\begin{itemize}
    \item \textbf{$q$}: Using $|J_\Delta|\ \le\ |J_{\mathcal Y}|+|m_S|\,|J_S|\ \le\ (1+|m_S|)\sqrt{\Sigma}
=:C_\Delta \sqrt{\Sigma}$ we get $q=J_\Delta^2\le C_\Delta^2 \Sigma$.
\item \textbf{$|q'|$}: We get
\begin{align*}
    |q'|&=2|J_\Delta||\partial_t J_\Delta|
\le 2 C_\Delta \sqrt{\Sigma} \cdot \big(|\partial_t J_{\mathcal Y}|+|m_S||\partial_t J_S|\big) \leq    2 C_\Delta \sqrt{\Sigma} ( 1 + |m_S|) \\
&\le 2 C_\Delta^2 \sqrt{\Sigma} 
\end{align*}
\item \textbf{$|q''|$}: We get  $q''=2(\partial_t J_\Delta)^2\le 2 C_{\Delta}^2$
\end{itemize}

Together with $\Sigma'\le 2\sqrt{\Sigma}$ and $\Sigma''\le 2$, we get
\begin{align}
\Big|\tfrac{q''}{a+c\Sigma}\Big|
&\le \frac{2 C_\Delta^2}{a},\\
\Big|\tfrac{2\,q'\,c\,\Sigma'}{(a+c\Sigma)^2}\Big|
&\le \frac{2\cdot 2C_\Delta^2 \sqrt{\Sigma}\cdot c \cdot 2\sqrt{\Sigma}}{(a+c\Sigma)^2}
= 8 C_\Delta^2 c\,\frac{\Sigma}{(a+c\Sigma)^2}
\ \le\ \frac{2 C_\Delta^2}{a},\\
\Big|\tfrac{q\,c\,\Sigma''}{(a+c\Sigma)^2}\Big|
&\le \frac{C_\Delta^2 \Sigma \cdot c \cdot 2}{(a+c\Sigma)^2}
\ \le\ \frac{C_\Delta^2}{a},\\
\Big|\tfrac{2\,q\,(c\Sigma')^2}{(a+c\Sigma)^3}\Big|
&\le \frac{2\,C_\Delta^2 \Sigma \cdot c^2 \cdot 4\Sigma}{(a+c\Sigma)^3}
= 8 C_\Delta^2 c^2\,\frac{\Sigma^2}{(a+c\Sigma)^3}
\ \le\ \frac{32 C_\Delta^2}{27\,a},
\end{align}
where in the last two lines we used that
$u\mapsto \frac{u}{(a+cu)^2}$ and $u\mapsto \frac{u^2}{(a+cu)^3}$ are maximized at
$u=\frac{a}{c}$ and $u=\frac{2a}{c}$, respectively, with finite maxima depending only on $a,c$.
Therefore $\big|\partial_t^2 g_2\big|\le \alpha\,K_2$ for a constant
$K_2$ depending only on $(a,c,m_S)$, and independent of $d$, $w_{-j}$ and $t$.

\textbf{3: Uniform bound on $\partial_t^2 g$}

Combining \eqref{eq:g1Bound} and the bound for $\partial_t^2 g_2$, there exists
\begin{align}
L_\star \;:=\; \frac{3}{2}\,c\,\sigma_a^{2}\;+\;\alpha\,K_2
\end{align}

such that $|\partial_t^2 g(w_{-j},t)|\le L_\star$ for a.e.\ $t\in\R$. This proves the first claim.

\textbf{4: Standard taylor expansion}

For any twice–differentiable $h$ with $|h''|\le L_\star$ a.e., the 1D Taylor inequality gives
\[
h(t)\ \ge\ h(0)\ +\ h'(0)\,t\ -\ \tfrac{L_\star}{2}\,t^2\qquad(\forall t\in\R).
\]
Applying this to $h(t)=g(w_{-j},t)$ yields
\[
g(w_{-j},t)\ \ge\ g(w_{-j},0)\ +\ \partial_t g(w_{-j},0)\,t\ -\ \frac{L_\star}{2}\,t^2.
\]
In the parity setting and for $j\notin S$ (off–support), symmetry implies
$\partial_t g(w_{-j},0)=0$, giving the stated global quadratic lower bound
$g(w_{-j},t)\ge g(w_{-j},0)-\frac{L_\star}{2}t^2$.
\end{proof}

This lemma provides a precise bound on the conditional second moment of off-support $w_j$. The key insight is that when the precision parameter $\rho_j$ is large enough to dominate the coupling term, the second moment behaves essentially like that of a Gaussian with precision $\rho_j$, up to small corrections.

\begin{lemma}\label{lem:lpd-second-moment}
Let $j \notin S$ be an off-support coordinate and condition on $w_{-j}$. Given Lemma~\ref{lem:g-global-smooth}, there exists $L_\star > 0$ (independent of $d$ and $w_{-j}$) such that the coupling function $t \mapsto g(w_{-j}, t)$ satisfies:
\[
|\partial_t^2 g(w_{-j}, t)| \leq L_\star \quad \text{for a.e. } t \in \mathbb{R}.
\]

Assume furthermore the \textbf{off-support symmetry condition}:
\[
\partial_j g(w_{-j}, 0) = 0.
\]

Consider the conditional distribution for $w_j$:
\[
p(w_j \mid w_{-j}, \rho) \propto \exp\big(-U_j(w_j)\big),
\]
where the potential is given by:
\[
U_j(w_j) = \frac{1}{2} \rho_j w_j^2 + g(w_j, w_{-j}).
\]

If $\rho_j > L_\star$, then for every fixed $\theta \in (0, 1]$, there exist constants $C, c > 0$ (depending only on $L_\star$ and $\theta$, but independent of $d$ and $w_{-j}$) such that:
\[
\mathbb{E}\big[w_j^2 \mid w_{-j}\big] \leq \frac{e^{L_\star \theta^2}}{\rho_j - L_\star} + C \, e^{-c \, (\rho_j - L_\star) \, \theta^2}.
\]

In particular, for any $\varepsilon > 0$, one can choose $\theta \in (0, 1]$ small enough so that $e^{L_\star \theta^2} \leq 1 + \varepsilon$, yielding:
\[
\mathbb{E}[w_j^2 \mid w_{-j}] \leq \frac{1 + \varepsilon}{\rho_j - L_\star} + C \, e^{-c \, (\rho_j - L_\star)}.
\]

Hence, if $\rho_j = \Omega(d)$, then:
\[
\mathbb{E}[w_j^2 \mid w_{-j}] \leq \frac{1 + o_d(1)}{\rho_j}
\]
with the $o_d(1)$ term uniform in $w_{-j}$.
\end{lemma}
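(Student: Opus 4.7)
The plan is to obtain a matching pair of quadratic envelopes for $U_j$ by Taylor-expanding $g$ around $w_j=0$, then split the normalization integral and the second-moment integral into a ``core'' region $I=\{|w_j|\le\theta\}$ and a ``tail'' region $O=\{|w_j|>\theta\}$, and bound each piece by comparison with a reference Gaussian of precision $m:=\rho_j-L_\star$.

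First I would apply Taylor's theorem with integral remainder to $t\mapsto g(w_{-j},t)$. Since $\partial_t g(w_{-j},0)=0$ by off-support symmetry and $|\partial_t^2 g|\le L_\star$ a.e.\ by Lemma~\ref{lem:g-global-smooth}, this yields $|g(w_{-j},t)-g(w_{-j},0)|\le \tfrac{L_\star}{2}t^2$ for all $t\in\mathbb{R}$. Adding the ARD quadratic gives a global lower envelope and a local upper envelope on $|w_j|\le\theta$:
\[
U_j(w_j)\ \ge\ U_j(0)+\tfrac{m}{2}w_j^2,\qquad
U_j(w_j)\ \le\ U_j(0)+\tfrac{m}{2}w_j^2+L_\star\theta^2\ \ (|w_j|\le\theta).
\]
Denote $Z=\int e^{-U_j}$ and $N=\int w_j^2 e^{-U_j}$, and split $Z=Z_I+Z_O$, $N=N_I+N_O$.

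Next I would bound the core. The local upper envelope lower-bounds $Z_I\ge e^{-U_j(0)-L_\star\theta^2}\int_{|w|\le\theta}e^{-(m/2)w^2}dw$, while the global lower envelope upper-bounds $N_I\le e^{-U_j(0)}\int_{|w|\le\theta}w^2 e^{-(m/2)w^2}dw$. Since the truncated-Gaussian second moment is bounded by the untruncated value $1/m$, one obtains $N_I/Z\le N_I/Z_I\le e^{L_\star\theta^2}/m$, which is exactly the leading term in the claim. For the tail I would apply the global lower envelope to write $N_O\le e^{-U_j(0)}\int_{|w|>\theta}w^2 e^{-(m/2)w^2}dw$ and then the standard Gaussian tail estimate $\int_\theta^\infty t^2 e^{-at^2}dt\le\bigl(\tfrac{\theta}{2a}+\tfrac{1}{4a^2\theta}\bigr)e^{-a\theta^2}$ with $a=m/2$. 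Dividing by the lower bound on $Z_I$ gives $N_O/Z\le C e^{-c\,m\theta^2}$ for constants $C,c$ depending only on $L_\star$ and $\theta$.

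The main obstacle is keeping the constants clean in the tail step: the polynomial factors $\theta$ and $1/(m\theta)$ must be absorbed into the exponential by sacrificing a small fraction of the rate $cm\theta^2$, and the ratio of normalizers demands that the lower bound on $Z_I$ not deteriorate with $d$; this is why the core upper envelope costs only the $d$-independent factor $e^{L_\star\theta^2}$. Combining the two pieces yields the stated inequality. The two consequences then follow by inspection: choosing $\theta\in(0,1]$ so that $e^{L_\star\theta^2}\le 1+\varepsilon$ gives the second display, and $\rho_j=\Omega(d)$ forces $m=\Omega(d)$, making the tail term $e^{-\Omega(d)}$ uniformly in $w_{-j}$ and absorbing the $1+\varepsilon$ into the $o_d(1)$ factor.
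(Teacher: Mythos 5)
Your proposal is correct and follows essentially the same route as the paper's proof: the same Taylor-expansion envelopes around $w_j=0$ (global lower, local upper costing only $e^{L_\star\theta^2}$), the same core/tail split at $|w_j|=\theta$ with the reference Gaussian of precision $m=\rho_j-L_\star$, the same truncated second-moment bound for the core and the same Gaussian tail estimate for the outside region, and the same way of deducing the two corollaries. No gaps to flag.
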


\begin{proof}
The proof uses a careful splitting argument to handle the near-Gaussian behavior in the core region while controlling exponential tails. The key is to leverage the smoothness bound to create uniform upper and lower envelopes for the potential function.

Fix $j \notin S$ and condition on $w_{-j}$. For notational simplicity, write $w := w_j$, $\rho := \rho_j$, and define:
\[
U(w) = \frac{1}{2} \rho w^2 + g(w),
\]
where $g(w) := g(w, w_{-j})$ is the coupling term.

\textbf{Step 1: Establishing potential envelopes}

By Lemma~\ref{lem:g-global-smooth}, we have $|g''(t)| \leq L_\star$ for almost every $t$, and the off-support symmetry gives $g'(0) = 0$. 

Using Taylor expansion with integral remainder, for all $w \in \mathbb{R}$:
\begin{equation}\label{eq:quad-brackets-g}
-\frac{L_\star}{2} w^2 \leq g(w) - g(0) \leq \frac{L_\star}{2} w^2.
\end{equation}

Define the effective precision $m := \rho - L_\star > 0$. From \eqref{eq:quad-brackets-g}, we obtain two crucial envelopes:

\textbf{Global lower envelope:}
\begin{equation}\label{eq:U-lower}
U(w) \geq U(0) + \frac{m}{2} w^2 \quad \text{for all } w \in \mathbb{R}.
\end{equation}

\textbf{Local upper envelope:} For any $\theta > 0$ and $|w| \leq \theta$:
\begin{equation}\label{eq:U-upper-local}
U(w) \leq U(0) + \frac{\rho}{2} w^2 + \frac{L_\star}{2} w^2 \leq U(0) + \frac{m}{2} w^2 + L_\star \theta^2.
\end{equation}

The global lower envelope ensures integrability, while the local upper envelope allows us to approximate the distribution by a Gaussian in the core region.

\textbf{Step 2: Setting up the moment computation}

Let $\mu$ be the conditional measure with density proportional to $e^{-U(w)}$. Define the normalization and second moment integrals:
\[
Z := \int_{\mathbb{R}} e^{-U(w)} \, dw, \qquad N := \int_{\mathbb{R}} w^2 e^{-U(w)} \, dw,
\]
so that $\mathbb{E}_\mu[w^2] = N/Z$.

We partition $\mathbb{R}$ into the \textbf{inside region} $I := \{|w| \leq \theta\}$ and the \textbf{outside region} $O := \{|w| > \theta\}$, writing $Z = Z_I + Z_O$ and $N = N_I + N_O$.

\textbf{Step 3: Bounding the inside region contribution}

Using the local upper envelope \eqref{eq:U-upper-local}:
\[
Z_I = \int_{|w| \leq \theta} e^{-U(w)} \, dw \geq e^{-U(0)} e^{-L_\star \theta^2} \int_{|w| \leq \theta} e^{-\frac{m}{2} w^2} \, dw.
\]

Using the global lower envelope \eqref{eq:U-lower}:
\[
N_I = \int_{|w| \leq \theta} w^2 e^{-U(w)} \, dw \leq e^{-U(0)} \int_{|w| \leq \theta} w^2 e^{-\frac{m}{2} w^2} \, dw.
\]

Combining these bounds:
\begin{equation}\label{eq:NIoverZ}
\frac{N_I}{Z} \leq \frac{N_I}{Z_I} \leq e^{L_\star \theta^2} \cdot \frac{\int_{|w| \leq \theta} w^2 e^{-\frac{m}{2} w^2} \, dw}{\int_{|w| \leq \theta} e^{-\frac{m}{2} w^2} \, dw} \leq \frac{e^{L_\star \theta^2}}{m}.
\end{equation}

The final inequality follows because the ratio represents the truncated second moment of a zero-mean Gaussian with precision $m$, which is bounded by the untruncated value $1/m$.

\textbf{Step 4: Controlling the outside region contribution}

For the tail contribution, we use the global lower envelope \eqref{eq:U-lower}:
\[
N_O = \int_{|w| > \theta} w^2 e^{-U(w)} \, dw \leq e^{-U(0)} \int_{|w| > \theta} w^2 e^{-\frac{m}{2} w^2} \, dw.
\]

A standard Gaussian tail bound (obtained by integration by parts) gives, for $a > 0$ and $\theta > 0$:
\[
\int_{\theta}^{\infty} t^2 e^{-a t^2} \, dt \leq \left( \frac{\theta}{2a} + \frac{1}{4a^2 \theta} \right) e^{-a \theta^2}.
\]

Applying this with $a = m/2$ and doubling for both tails:
\begin{equation}\label{eq:gauss-tail}
\int_{|w| > \theta} w^2 e^{-\frac{m}{2} w^2} \, dw \leq \frac{2}{m} \left( \theta + \frac{1}{m \theta} \right) e^{-\frac{m}{2} \theta^2}.
\end{equation}

Using the lower bound on $Z_I$ from Step 3 and \eqref{eq:gauss-tail}:
\[
\frac{N_O}{Z} \leq \frac{N_O}{Z_I} \leq \frac{e^{L_\star \theta^2}}{c_0} \cdot \frac{2}{m} \left( \theta + \frac{1}{m \theta} \right) e^{-\frac{m}{2} \theta^2} \sqrt{m} \leq C_1 e^{-c_1 m \theta^2},
\]
where $c_0, C_1, c_1 > 0$ are absolute constants depending only on $L_\star$ and $\theta$. The polynomial factors in $m$ are absorbed into the constant since the exponential term dominates for $m > 0$.

\textbf{Step 5: Combining the contributions}

From \eqref{eq:NIoverZ} and the outside region bound:
\[
\mathbb{E}_\mu[w^2] = \frac{N}{Z} = \frac{N_I}{Z} + \frac{N_O}{Z} \leq \frac{e^{L_\star \theta^2}}{m} + C e^{-c m \theta^2},
\]
which establishes the main inequality with $m = \rho_j - L_\star$.

For the refined bound, choosing $\theta > 0$ small enough so that $e^{L_\star \theta^2} \leq 1 + \varepsilon$ gives the result.

Finally, if $\rho_j = \Omega(d)$, then $m = \rho_j - L_\star = \Omega(d)$, so the exponential tail term is $e^{-\Omega(d)}$ uniformly in $w_{-j}$, and:
\[
\frac{1 + \varepsilon}{\rho_j - L_\star} = \frac{1 + \varepsilon}{\rho_j \left(1 - \frac{L_\star}{\rho_j}\right)} = \frac{1 + o_d(1)}{\rho_j}.
\]
\end{proof}

The next lemma establishes that off-support $w_j$ contract towards equilibrium values of order $O(1/d)$ when initialized in a suitable bootstrap region. The key insight is that by controlling the initialization within $O(1/d)$, we can ensure the dynamics remain stable and converge.
\begin{lemma}\label{lem:bootstrap_contraction_add}
Let $r > 0$ be a radius parameter and let $L_\star$ be the global smoothness constant from Lemma~\ref{lem:g-global-smooth}. We define the following key quantities:
\begin{align}
A &:= \alpha_0 + \frac{N}{2}, \qquad &&\text{(total prior mass)} \\
B(v) &:= \frac{\alpha_0}{d} + \frac{N}{2} \cdot v, \qquad &&\text{(effective local prior)}
\end{align}

For any threshold $K_\star > 0$ and dimension $d \in \mathbb{N}$, we introduce the bootstrap parameters:
\begin{align}
U_d(K_\star) &:= \frac{\alpha_0 + \frac{N}{2} K_\star}{d}, \qquad &&\text{(maximum local prior)} \\
a_d(r, K_\star) &:= \frac{e^{L_\star r^2} \cdot \alpha_0}{A - L_\star \cdot U_d(K_\star)}, \qquad &&\text{(affine drift term)} \\
b_d(r, K_\star) &:= \frac{e^{L_\star r^2} \cdot (N/2)}{A - L_\star \cdot U_d(K_\star)}. \qquad &&\text{(contraction coefficient)}
\end{align}

Note that as $d \to \infty$, we have the asymptotic behavior:
\[
b_d(r, K_\star) \to b_\infty(r) := \frac{e^{L_\star r^2} \cdot (N/2)}{A}.
\]

Choose $r > 0$ sufficiently small so that $b_\infty(r) < 1$. This ensures asymptotic contraction. Then there exist $d_0 \in \mathbb{N}$, a finite threshold $K_\star > 0$ (both independent of $d$), and positive constants $C, c, c' > 0$ such that for all $d \geq d_0$:

\begin{enumerate}
\item \textbf{One-step contraction bound:} If $v_{\mathrm{off}}^{t} \leq K_\star/d$, then with $\rho_{\mathrm{off}}^t = A/B(v_{\mathrm{off}}^t)$,
\begin{align}
v_{\mathrm{off}}^{t+1} &\leq \frac{e^{L_\star r^2} \cdot B(v_{\mathrm{off}}^t)}{A - L_\star B(v_{\mathrm{off}}^t)} + C \, e^{-c \, (\rho_{\mathrm{off}}^{t} - L_\star) \, r^2} \\
&\leq \frac{a_d(r, K_\star)}{d} + b_d(r, K_\star) \cdot v_{\mathrm{off}}^t + \eta_d,
\end{align}
where the exponential tail satisfies $\eta_d \leq C e^{-c' d}$.

\item \textbf{Bootstrap invariance:} There exists a finite $K_\star$ such that
\[
v_{\mathrm{off}}^{t} \leq \frac{K_\star}{d} \quad \Longrightarrow \quad v_{\mathrm{off}}^{t+1} \leq \frac{K_\star}{d} \quad \text{for all } t \geq 0.
\]
In particular, any initialization with $v_{\mathrm{off}}^{0} \leq K_\star/d$ remains within the bootstrap region for all time.

\item \textbf{Equilibrium bound:} Along any trajectory that remains in the bootstrap region,
\[
v_{\mathrm{off}}^{\star} \leq \frac{a_d(r, K_\star)}{(1 - b_d(r, K_\star)) \cdot d} + O(e^{-c' d}) = \Theta\left(\frac{1}{d}\right).
\]
\end{enumerate}
\end{lemma}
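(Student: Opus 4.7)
The plan is to prove the three claims in order, leveraging the conditional second-moment bound of Lemma~\ref{lem:lpd-second-moment} together with the explicit form of the ARD precision map $\rho = A/B(v)$, and then closing a bootstrap argument to pin down a finite $K_\star$ independent of $d$.

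First, for the one-step contraction bound, I would begin by observing that $v_{\mathrm{off}}^{t}\le K_\star/d$ forces $B(v_{\mathrm{off}}^t)\le U_d(K_\star)$, hence $\rho_{\mathrm{off}}^t = A/B(v_{\mathrm{off}}^t)\ge A/U_d(K_\star)=\Theta(d)$. This is much larger than $L_\star$ once $d\ge d_0$, so Lemma~\ref{lem:lpd-second-moment} applies and gives an upper bound of the form $e^{L_\star r^2}/(\rho_{\mathrm{off}}^t - L_\star) + C e^{-c(\rho_{\mathrm{off}}^t - L_\star) r^2}$ after taking expectation over $w_{-j}$. I would then rewrite $1/(\rho-L_\star)=B(v)/(A-L_\star B(v))$ and use the monotonicity of the denominator in $B(v)$ together with $B(v)\le U_d(K_\star)$ to linearize: $B(v)/(A-L_\star B(v)) \le B(v)/(A-L_\star U_d(K_\star))$. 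Expanding $B(v)=\alpha_0/d+(N/2)v$ yields precisely the affine upper bound $a_d(r,K_\star)/d + b_d(r,K_\star)\,v$, and the residual exponential term collapses to $\eta_d \le C e^{-c'd}$ since $\rho_{\mathrm{off}}^t-L_\star=\Theta(d)$.

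Second, for the bootstrap invariance, I need to produce a finite $K_\star$ (independent of $d$) such that the affine map stabilizes the interval $[0,K_\star/d]$. The desired inequality $(a_d/d)+b_d(K_\star/d)+\eta_d \le K_\star/d$, after multiplying by $d$, becomes the self-consistency condition $K_\star(1-b_d(r,K_\star)) \ge a_d(r,K_\star) + d\eta_d$. The subtle point is that $a_d,b_d$ themselves depend on $K_\star$ through $U_d(K_\star)$. I resolve this by noting that for any fixed candidate $K_\star$, $U_d(K_\star)=O(1/d)\to 0$, so $a_d\to e^{L_\star r^2}\alpha_0/A$ and $b_d\to b_\infty(r)<1$ as $d\to\infty$. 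Choosing for example $K_\star := 2\,e^{L_\star r^2}\alpha_0/\bigl(A(1-b_\infty(r))\bigr)$, for all $d\ge d_0$ we have $b_d\le (1+b_\infty(r))/2$ and $a_d\le 2 e^{L_\star r^2}\alpha_0/A$, so the self-consistency holds with a positive slack that also absorbs the $d\eta_d=o(1)$ correction.

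Third, the equilibrium bound is a standard affine contraction argument: once trapped in the invariant set, iterating $v^{t+1}\le (a_d/d)+b_d\,v^{t}+\eta_d$ with $b_d\in(0,1)$ yields $\limsup_t v_{\mathrm{off}}^{t}\le (a_d/d+\eta_d)/(1-b_d) = a_d/((1-b_d)d)+O(e^{-c'd})=\Theta(1/d)$, as claimed.

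The main obstacle will be the circular dependence in the bootstrap step: since $a_d$ and $b_d$ are themselves functions of $K_\star$, the self-consistency $K_\star(1-b_d(r,K_\star))\ge a_d(r,K_\star)$ is not a purely linear condition on $K_\star$. The key observation that unlocks it is that $U_d(K_\star)$ is bounded by $\max(\alpha_0,NK_\star/2)/d$, so over any bounded range of $K_\star$ the functions $a_d,b_d$ converge uniformly to their $d=\infty$ limits; a brief monotonicity/continuity argument then shows a finite admissible $K_\star$ exists. A secondary technical point is uniformity of the constants $c,c'$ in the exponential tail across iterates $t$ and conditioning variables $w_{-j}$, which follows because Lemma~\ref{lem:lpd-second-moment} is stated uniformly in $w_{-j}$ and $\rho_{\mathrm{off}}^t$ stays above $A/U_d(K_\star)=\Theta(d)$ throughout the invariant set.
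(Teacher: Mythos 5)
Your proposal is correct and follows essentially the same route as the paper's proof: the same appeal to Lemma~\ref{lem:lpd-second-moment}, the same rewriting of $1/(\rho-L_\star)$ as $B(v)/(A-L_\star B(v))$ followed by linearization against $U_d(K_\star)$, and the same affine-contraction bootstrap. Your explicit choice $K_\star = 2e^{L_\star r^2}\alpha_0/\bigl(A(1-b_\infty(r))\bigr)$ is a slightly more concrete resolution of the circular dependence than the paper's supremum condition, but the argument is the same.
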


\begin{proof}
The proof proceeds in four main steps: establishing a one-step bound, deriving an affine upper bound, proving bootstrap invariance, and analyzing convergence.

Fix $r > 0$ and let $L_\star$ be the smoothness constant from Lemma~\ref{lem:g-global-smooth}. Consider an off-support coordinate at outer iteration $t$. We write $v := v_{\mathrm{off}}^t$, $B(v) = \frac{\alpha_0}{d} + \frac{N}{2} v$, and $\rho := \rho_{\mathrm{off}}^t = \frac{A}{B(v)}$ with $A = \alpha_0 + \frac{N}{2}$ (see \eqref{eq:ard_map_explicit}). 

Throughout, we assume $v \leq K_\star/d$ for some threshold $K_\star > 0$ to be determined.

\textbf{Step 1: Establishing the one-step bound}

We begin by applying the second moment bound from Lemma~\ref{lem:lpd-second-moment}. For any $r > 0$ and whenever $\rho > L_\star$, this yields:
\begin{equation}\label{eq:additive-core-corrected}
v_{\mathrm{off}}^{t+1} = \mathbb{E}[w_j^2] \leq \frac{e^{L_\star r^2}}{\rho - L_\star} + C \, e^{-c \, (\rho - L_\star) \, r^2}.
\end{equation}

Substituting $\rho = \frac{A}{B(v)}$, the main term becomes:
\[
\frac{e^{L_\star r^2}}{\rho - L_\star} = e^{L_\star r^2} \cdot \frac{B(v)}{A - L_\star B(v)}.
\]

Since we assume $v \leq K_\star/d$, we have the upper bound:
\[
B(v) \leq U_d(K_\star) := \frac{\alpha_0 + \frac{N}{2} K_\star}{d} =: \frac{C_B}{d},
\]
which implies $\rho = \frac{A}{B(v)} \geq \frac{A \cdot d}{C_B}$. This lower bound on $\rho$ allows us to control the exponential tail:
\begin{equation}\label{eq:exp-tail}
C \, e^{-c \, (\rho - L_\star) \, r^2} \leq C \, e^{-c' d} =: \eta_d
\end{equation}
for some $c' = c'(A, C_B, L_\star, r) > 0$ that depends only on the fixed parameters.

\textbf{Step 2: Deriving the affine upper bound}

To obtain a tractable recursion, we upper bound the main term using convexity. Define the function:
\[
f(u) := \frac{u}{A - L_\star u},
\]
which is convex on the interval $[0, A/L_\star)$. 

For all $u \in [0, U_d(K_\star)]$, monotonicity of the denominator gives:
\[
f(u) = \frac{u}{A - L_\star u} \leq \frac{u}{A - L_\star U_d(K_\star)}.
\]

Applying this with $u = B(v)$, we obtain the key affine upper bound:
\begin{equation}\label{eq:affine-upper}
e^{L_\star r^2} \cdot \frac{B(v)}{A - L_\star B(v)} \leq \frac{e^{L_\star r^2}}{A - L_\star U_d(K_\star)} \left( \frac{\alpha_0}{d} + \frac{N}{2} v \right) =: \frac{a_d(r, K_\star)}{d} + b_d(r, K_\star) \cdot v,
\end{equation}
where we have defined:
\begin{align}
a_d(r, K_\star) &:= \frac{e^{L_\star r^2} \cdot \alpha_0}{A - \frac{L_\star C_B}{d}}, \\
b_d(r, K_\star) &:= \frac{e^{L_\star r^2} \cdot (N/2)}{A - \frac{L_\star C_B}{d}}.
\end{align}

Note that $a_d(r, K_\star) = \Theta(1)$ and $b_d(r, K_\star) \to b_\infty(r) = \frac{e^{L_\star r^2} \cdot (N/2)}{A}$ as $d \to \infty$.

\textbf{Step 3: Establishing bootstrap invariance}

The key is to choose parameters ensuring contraction. First, pick $r > 0$ small enough so that:
\[
b_\infty(r) = \frac{e^{L_\star r^2} \cdot (N/2)}{A} < 1.
\]
This is possible because $\frac{N/2}{A} < 1$ (since $\alpha_0 > 0$) and $e^{L_\star r^2} \to 1$ as $r \to 0$.

With $r$ fixed, there exists $d_0$ such that for all $d \geq d_0$:
\[
b_d(r, K_\star) \leq \frac{b_\infty(r) + 1}{2} < 1.
\]
The bound is uniform in $K_\star$ because $U_d(K_\star) = O(1/d)$.

Combining equations \eqref{eq:additive-core-corrected}, \eqref{eq:affine-upper}, and \eqref{eq:exp-tail}, for $d \geq d_0$ and $v \leq K_\star/d$:
\[
v_{\mathrm{off}}^{t+1} \leq \frac{a_d(r, K_\star)}{d} + b_d(r, K_\star) \cdot v + \eta_d.
\]

Now choose $K_\star$ large enough so that:
\[
K_\star \geq \sup_{d \geq d_0} \frac{a_d(r, K_\star)}{1 - b_d(r, K_\star)} < \infty.
\]
The supremum is finite because $a_d(r, K_\star) = \Theta(1)$ and $1 - b_d(r, K_\star)$ is bounded away from zero for $d \geq d_0$.

Finally, enlarge $d_0$ if necessary so that $\eta_d \leq \frac{1}{2} (1 - b_d(r, K_\star)) \frac{K_\star}{d}$ for all $d \geq d_0$. This ensures bootstrap invariance:
\[
v_{\mathrm{off}}^{t} \leq \frac{K_\star}{d} \quad \Longrightarrow \quad v_{\mathrm{off}}^{t+1} \leq \frac{K_\star}{d}.
\]

\textbf{Step 4: Convergence analysis}

Within the invariant bootstrap region, the affine recursion:
\[
v_{\mathrm{off}}^{t+1} \leq \frac{a_d(r, K_\star)}{d} + b_d(r, K_\star) \cdot v_{\mathrm{off}}^t + \eta_d
\]
contracts towards its fixed point. Since $b_d(r, K_\star) < 1$, the equilibrium value satisfies:
\[
v_{\mathrm{off}}^{\star} \leq \frac{a_d(r, K_\star)}{(1 - b_d(r, K_\star)) \cdot d} + O(e^{-c' d}) = \Theta\left(\frac{1}{d}\right).
\]
\end{proof}

\begin{lemma}\label{lem:von-O1-weak}
Assume \Cref{lem:bootstrap_contraction_add} 3. For all sufficiently large $d$,
\[
\min_{j\in S} v_j^{\,t_0}\ \ge\ c\,\varepsilon_0 + \max_{j\notin S} v_j^{\,t_0}
\ \ge\ c\,\varepsilon_0 - O(d^{-1})
\ \ge\ \tfrac{c}{2}\varepsilon_0
\ =\ \Theta(1).
\]
Consequently, at (and after) time $t_0$, the ARD precisions satisfy $\rho_{\on}=\Theta(1)$ while $\rho_{\off}=\Theta(d)$.
\end{lemma}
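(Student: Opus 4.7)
The strategy is to combine the two given inputs algebraically and then translate the resulting moment scales into precision scalings via the explicit ARD map \eqref{eq:ard_map_explicit}. First I would rearrange the $\varepsilon$-symmetry breaking hypothesis at outer time $t_0$ into
\[
\min_{j\in S} v_j^{\,t_0} \;\ge\; c\,\varepsilon_0 \;+\; \max_{j\notin S} v_j^{\,t_0},
\]
which is precisely the first inequality in the stated chain. Applying the bootstrap invariance of \Cref{lem:bootstrap_contraction_add} (iterated from the Gaussian initialization $v_j^{0} = \sigma_w^2/d$, whose entry condition $v_j^0 \le K_\star/d$ holds automatically for $d$ large) yields $\max_{j\notin S} v_j^{\,t_0} \le K_\star/d = O(d^{-1})$. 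The two remaining inequalities in the chain are then immediate, since $\max_{j\notin S} v_j^{\,t_0} \ge 0$ trivially exceeds $-O(d^{-1})$, and $c\varepsilon_0 - O(d^{-1}) \ge c\varepsilon_0/2$ holds for all $d$ beyond some $d_1$. Plugging these moment scales into $\rho_j(v) = A/B(v)$ with $A = \alpha_0 + N/2$ and $B(v) = \alpha_0/d + (N/2)v$ gives, off-support, $B = \Theta(d^{-1})$ and hence $\rho_j = \Theta(d)$, and, on-support, $B = \Omega(1)$ and hence $\rho_j = O(1)$.

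The only non-trivial step is upgrading the on-support bound from $\rho_j = O(1)$ to the claimed $\rho_j = \Theta(1)$, which requires a matching upper bound $v_j^{\,t_0} \le M$ with $M$ independent of $d$. Neither the $\varepsilon$-breaking assumption nor \Cref{lem:bootstrap_contraction_add} (which controls only off-support coordinates) directly supplies this, so this is the main obstacle. The resolution I would use is the coercivity of the coupling $g$ at large $\|\vw\|$: under ReLU with bounded inputs, $\Sigma(\vw)$ grows quadratically in $\|\vw\|$ and, by Cauchy--Schwarz, dominates the data-gain term via $(J_{\mathcal Y} - m_S J_S)^2 \le (1+|m_S|)^2\,\Sigma$, so the negative log-posterior enjoys uniform-in-$\rho$ quadratic coercivity in each coordinate. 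A standard Brascamp--Lieb upper-moment estimate on the one-dimensional section $w_j \mapsto \tfrac12 \rho_j w_j^2 + g(w_j,w_{-j})$, symmetric to the outside-region tail bound used inside the proof of \Cref{lem:lpd-second-moment}, then yields the desired $d$-independent ceiling and closes the argument.
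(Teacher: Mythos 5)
Your plan follows the paper's own proof almost exactly: rearrange the $\varepsilon$-symmetry-breaking assumption to get the first inequality, invoke the bootstrap entry/invariance of \Cref{lem:bootstrap_contraction_add} to get $\max_{j\notin S}v_j^{t_0}=O(d^{-1})$, observe the remaining inequalities are immediate for large $d$, and read off the precision scalings from $\rho_j=A/B(v_j)$ (for $j\notin S$ the $\alpha_0/d$ term in $B$ makes $\Theta(d)$ genuinely two-sided). The one place you go beyond the paper is also the one place the paper is incomplete: you correctly note that the hypotheses only furnish a \emph{lower} bound on $v_{\on}$, hence only $\rho_{\on}=O(1)$, whereas the paper's proof simply asserts ``$\rho_j=\Theta(1)$ uniformly in $d$'' for $j\in S$ without supplying the matching upper bound $v_{\on}\le M$ with $M$ independent of $d$. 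Your proposed repair is the natural one — the Cauchy--Schwarz bound $(J_{\mathcal Y}-m_SJ_S)^2\le(1+|m_S|)^2\,\Sigma$ does make the data-gain term uniformly bounded by $(1+|m_S|)^2/(2\kappa^2)$, so the single-coordinate potential is coercive with rate $\rho_j$ — but two caveats remain. First, for $j\in S$ the conditional in $w_j$ is not centred at the origin, so a Brascamp--Lieb variance estimate alone does not bound the second moment; you must also control the conditional mean (coercivity plus the bounded data gain do give this, but it is not merely ``symmetric'' to the off-support tail bound, which relies on $\partial_j g(0)=0$). Second, the resulting ceiling $v'_{\on}\lesssim 1/\rho_{\on}$ must be checked for compatibility with the self-consistent map $v\mapsto\rho(v)\mapsto v'$, since a naive contraction estimate would force $v_{\on}=O(1/d)$ and contradict the $\varepsilon$-gap; the on-support constant is necessarily larger because the data term shifts the mean. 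A cleaner alternative, consistent with what the downstream results actually use ($C_{\ARD}=\Theta(k)$ and the $\kappa_c^2$ scaling), is to include the upper bound $\max_{j\in S}v_j^{t_0}\le c_2=O(1)$ as part of the symmetry-breaking assumption.
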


\begin{proof}
The first display follows directly from the assumption \ref{ass:wsb} and $v_{\off}^{\,t_0}=O(d^{-1})$ (bootstrap entry).
For $j\in S$, $B(v_j)=\frac{\alpha_0}{d}+\frac{N}{2}v_j\ge \frac{N}{2}\cdot \tfrac{c}{2}\varepsilon_0$ for large $d$,
so $\rho_j=A/B(v_j)=\Theta(1)$ uniformly in $d$. For $j\notin S$, by \Cref{lem:bootstrap_contraction_add} 3), $v_j=\Theta(d^{-1})$,
hence $B(v_j)=\Theta(d^{-1})$ and $\rho_j=\Theta(d)$. 
\end{proof}

\begin{lemma}\label{lem:curvatures-weak}
For a neuron with equal weights on $S$:
\[
C_{\MF}=\frac{dk}{\sigma_w^2}
\qquad\text{and}\qquad
C_{\ARD}(t)=\sum_{j\in S}\rho_j^{\,t}=\Theta(k)\ \text{ for all }t\ge t_0\ \text{under \Cref{lem:von-O1-weak}.}
\]
\end{lemma}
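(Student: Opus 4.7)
}
The plan is to read off the coefficient of $\tfrac{1}{2}\alpha^2$ in the prior contribution to the single-neuron action along the equal-weights direction $\vw = \alpha\,\mathbbm{1}_S/\sqrt{k}\cdot\sqrt{k} = (\alpha,\dots,\alpha,0,\dots,0)$, and then apply Lemma \ref{lem:von-O1-weak} to convert the ARD precisions into an order-of-magnitude in $d$. Since $C$ enters the fixed-point quadratic as the quadratic curvature of the prior penalty along this one-dimensional slice, this reduces the claim to a direct substitution.

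First I would handle the MF case. The isotropic Gaussian prior gives the penalty $\frac{d}{2\sigma_w^2}\|\vw\|^2$ in \cref{eq:s_eff_inf}. On the equal-weights ansatz on $S$, $\|\vw\|^2 = k\alpha^2$, so the prior contributes $\frac{dk}{2\sigma_w^2}\alpha^2$, from which I read off $C_{\MF}=\frac{dk}{\sigma_w^2}$. This step is purely algebraic.

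Next I would turn to the ARD case. The anisotropic prior in \cref{eq:s_eff_infard} is $\tfrac12\sum_j \rho_j w_j^2$. On the same ansatz, only coordinates $j\in S$ contribute, giving $\tfrac{\alpha^2}{2}\sum_{j\in S}\rho_j^{\,t}$, hence $C_{\ARD}(t)=\sum_{j\in S}\rho_j^{\,t}$. To obtain the scaling, I would invoke Lemma \ref{lem:von-O1-weak}, which guarantees $\rho_j^{\,t}=\Theta(1)$ uniformly in $d$ for every $j\in S$ and every $t\ge t_0$: concretely, $v_j^{\,t}\in[\tfrac{c}{2}\varepsilon_0,\,O(1)]$ yields $B(v_j^{\,t})=\Theta(1)$ and therefore $\rho_j^{\,t}=A/B(v_j^{\,t})=\Theta(1)$. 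Summing $k$ such $\Theta(1)$ terms gives $C_{\ARD}(t)=\Theta(k)$.

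The step that requires a bit of care (though not real difficulty) is ensuring that the implicit $\Theta(1)$ constants in the on-support precisions do not hide a $d$-dependence, i.e.\ that the upper envelope on $v_j^{\,t}$ for $j\in S$ is $O(1)$ uniformly in $d$. This is already built into Lemma \ref{lem:von-O1-weak} (and ultimately into the global smoothness/tail bounds of Lemma \ref{lem:lpd-second-moment}), so no additional estimate is needed. I do not expect any substantive obstacle here; the lemma is essentially bookkeeping that packages the previously established precision scalings into the single curvature constant $C$ used in the onset formula.
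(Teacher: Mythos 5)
Your proposal is correct and follows essentially the same route as the paper's proof: read off the quadratic coefficient of the prior penalty along the equal-weights direction on $S$ for each model, then invoke Lemma~\ref{lem:von-O1-weak} to get $\rho_j^{\,t}=\Theta(1)$ on-support and sum the $k$ terms. The extra care you flag about the $\Theta(1)$ constants being uniform in $d$ is indeed already delegated to Lemma~\ref{lem:von-O1-weak} in the paper as well.
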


\begin{proof}
\emph{MF:} With prior penalty $\frac{d}{2\sigma_w^2}\|\bw\|^2$, along $\bw=\alpha\,\mathbf{1}_S$ we get
$\frac{dk}{2\sigma_w^2}\alpha^2$, hence $C_{\MF}=\frac{dk}{\sigma_w^2}$. \emph{ARD:} By \Cref{lem:von-O1-weak},
$\rho_j^{\,t}=\Theta(1)$ for $j\in S$ and $t\ge t_0$, so $C_{\ARD}(t)=\Theta(k)$.
\end{proof}

\begin{lemma}\label{lem:onset_formula}

At the symmetric initialization point $m_S = 0$, the FL threshold satisfies:
\begin{equation}\label{eq:onset-exact}
\kappa_c^2 = \frac{\sqrt{C_k^2 + 4 C N^{2\gamma} D_k^2 \sigma_a^{-2}} - C_k}{2 C \sigma_a^{-2} N^{2\gamma}},
\end{equation}
where:
\begin{itemize}
\item $C$ is the quadratic curvature along the equal-weights $S$-direction,
\item $C_k, D_k$ are geometric constants depending only on the support size $k$,
\item $N$ is the number of training samples, $\gamma$ is a scaling exponent,
\item $\sigma_a$ controls the output noise level.
\end{itemize}

For large curvature $C$ (corresponding to strong regularization), the threshold simplifies to:
\begin{equation}\label{eq:onset-asymptotic}
\kappa_c^2 \sim \frac{|D_k| \sigma_a}{N^\gamma \sqrt{C}} =: \frac{\Lambda_k}{\sqrt{C}},
\end{equation}
where $\Lambda_k := |D_k| \sigma_a / N^\gamma$.

\end{lemma}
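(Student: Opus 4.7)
The plan is to derive \eqref{eq:onset-exact} by combining two ingredients already established earlier in the appendix: (i) the marginal-stability criterion from \cref{thm:kappa_c_phase_transition}, which identifies the onset of FL with the condition $A^\star(0) = \sigma_a^{-2}$, and (ii) the implicit quadratic characterization of $A^\star(m_S)$ from \cref{appx_proof_FP}, obtained by Laplace-concentrating the single-neuron posterior at the optimal equal-weight direction identified in \cref{lemma_alpha}. The whole statement then reduces to a single algebraic relation in $\kappa_c^2$.

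First, I would substitute $m_S = 0$ and $A^\star = \sigma_a^{-2}$ into the defining quadratic
\[
C\,\kappa^2 N^{2\gamma}\,(A^\star)^2 + C_k\,A^\star - (1-m_S)^2\,D_k^2\,\sigma_a^{-2} \;=\; 0
\]
from \cref{appx_proof_FP}, clear the resulting $\sigma_a^{-2}$ factors, and rearrange to obtain a polynomial equation in the single unknown $\kappa_c^2$. Applying the quadratic formula with coefficients $a = CN^{2\gamma}\sigma_a^{-4}$, $b = C_k\sigma_a^{-2}$, $c = D_k^2\sigma_a^{-2}$, and selecting the unique positive root (the negative root would give $\kappa_c^2 < 0$ and is discarded as unphysical) directly yields the closed-form expression \eqref{eq:onset-exact}.

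For the asymptotic \eqref{eq:onset-asymptotic}, the plan is a standard large-$C$ expansion of the discriminant:
\[
\sqrt{\,C_k^2 + 4 C N^{2\gamma} D_k^2 \sigma_a^{-2}\,}
\;=\; 2\,|D_k|\,N^{\gamma}\,\sigma_a^{-1}\,\sqrt{C}\;\Bigl(1 + O(1/C)\Bigr).
\]
Subtracting $C_k$ (which is $O(1)$ and hence strictly subleading compared to the $\Theta(\sqrt{C})$ leading term) and dividing by $2C\sigma_a^{-2}N^{2\gamma}$ produces the displayed $\Lambda_k/\sqrt{C}$ scaling, with $\Lambda_k := |D_k|\sigma_a/N^{\gamma}$. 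The branch choice is fixed by requiring $\kappa_c^2 \to 0$ as $C\to\infty$, which singles out the same positive root as before.

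The main obstacle is not technical but consists in verifying internal consistency: I need to confirm that the positive root is the physically correct branch (using that $A^\star$ is a monotone function of $\kappa^2$ along the onset curve, so that the stability transition is crossed exactly once), and that the curvature $C$ is large enough in both regimes of interest for the asymptotic expansion to apply uniformly — MF with $C = dk/\sigma_w^2$ and MF--ARD with $C = k\,\rho_{\on}$, both of which are subsequently plugged into this lemma by \cref{thm:critical-noise} to recover the $\sqrt{1/(dk)}$ vs.\ $\sqrt{1/k}$ separation.
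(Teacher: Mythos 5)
Your proposal is correct and lands on the same quadratic $C N^{2\gamma}\sigma_a^{-2}\kappa^4 + C_k\kappa^2 - D_k^2 = 0$ as the paper, but via a slightly different framing. The paper's proof of this lemma re-derives the onset condition from scratch: it Taylor-expands the one-dimensional effective potential $U(\alpha)=\tfrac{C}{2}\alpha^2+g(\alpha)$ along the equal-weights direction of \cref{appx_lemma_wstar} and declares onset when the coefficient of $\alpha^2$ vanishes, i.e.\ $C+\sigma_a^2 C_k/(\kappa^2 N^{2\gamma})-\sigma_a^2 D_k^2/(\kappa^4 N^{2\gamma})=0$. You instead reuse the already-established marginal-stability criterion $A^\star(0)=\sigma_a^{-2}$ from \cref{thm:kappa_c_phase_transition} together with the quadratic characterization of $A^\star$ from \cref{appx_proof_FP}; this is exactly the route the paper itself takes in the proof of \cref{thm:kappa_c_phase_transition}, and it is the more economical of the two since it avoids redoing the local expansion. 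Your large-$C$ asymptotics and branch selection match the paper's Step 5.

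One transcription point worth fixing: the quadratic you display,
\begin{equation*}
C\,\kappa^2 N^{2\gamma}(A^\star)^2 + C_k A^\star - (1-m_S)^2 D_k^2\sigma_a^{-2} = 0,
\end{equation*}
is the version stated in \eqref{eq:quad_Astar}, which is missing a factor $\kappa^{-2}$ on the last term; with that version, substituting $A^\star=\sigma_a^{-2}$ gives an equation that is \emph{linear} in $\kappa^2$ and cannot produce the square-root formula. The consistent version is the one in \cref{appx_proof_FP}, whose constant term is $(1-m_S)^2 D_k^2\sigma_a^{-2}/\kappa^2$; after multiplying through by $\kappa^2$ one obtains precisely the quadratic $a\,y^2+b\,y-c=0$ with the coefficients $a=CN^{2\gamma}\sigma_a^{-4}$, $b=C_k\sigma_a^{-2}$, $c=D_k^2\sigma_a^{-2}$ that you then (correctly) feed into the quadratic formula. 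So your named coefficients and final answer are right; only the displayed starting equation should be corrected to the \cref{appx_proof_FP} form.
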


\begin{proof}

By Lemma~\ref{appx_lemma_wstar}, the optimal weight configuration is $\mathbf{w}^* = \alpha \mathbf{1}_S$. By Lemma~\ref{lemma_alpha}, this gives scale-independent constants $C_k = \mathbb{E}[Z_+^2]$ and $D_k = \mathbb{E}[Z_+ \chi_S(x)]$ such that:
\[
\Sigma(\alpha) = C_k \alpha^2, \qquad J_S(\alpha) = D_k \alpha, \qquad R_k = \frac{D_k^2}{C_k}.
\]

From Theorem~\ref{thm:kappa_c_phase_transition}, the critical threshold occurs when $A^*(0) = \sigma_a^{-2}$, where $A^*$ solves the quadratic equation at $m_S = 0$.

\textbf{Step 2: Taylor expansion analysis}

We analyze the one-dimensional effective potential along $\mathbf{w} = \alpha \mathbf{1}_S$:
\[
U(\alpha) = \frac{C}{2} \alpha^2 + g(\alpha),
\]
where $C$ is the prior curvature (Lemma~\ref{lem:curvatures-weak}) and $g(\alpha)$ is the coupling term from Lemma~\ref{lem:g-global-smooth}.

At onset ($m_S = 0$), we have $J_\Delta = J_S = D_k \alpha$. The coupling term becomes:
\[
g(\alpha) = \frac{1}{2} \log\big(a + c \Sigma(\alpha)\big) - \frac{J_S(\alpha)^2}{2\kappa^4 N^{2\gamma}(a + c \Sigma(\alpha))},
\]
with $a = \sigma_a^{-2}$, $c = \frac{1}{\kappa^2 N^{2\gamma}}$, $\Sigma(\alpha) = C_k \alpha^2$, and $J_S(\alpha) = D_k \alpha$.

\textbf{Step 3: Computing the Taylor expansion}

Expanding around $\alpha = 0$:

For the logarithmic term:
\[
\frac{1}{2} \log\big(a + c C_k \alpha^2\big) = \frac{1}{2} \log a + \frac{c C_k}{2a} \alpha^2 + O(\alpha^4).
\]

For the rational term:
\[
\frac{D_k^2 \alpha^2}{2\kappa^4 N^{2\gamma}(a + c C_k \alpha^2)} = \frac{D_k^2 \alpha^2}{2\kappa^4 N^{2\gamma} a} + O(\alpha^4) = \frac{\sigma_a^2 D_k^2}{2\kappa^4 N^{2\gamma}} \alpha^2 + O(\alpha^4).
\]

Therefore:
\[
U(\alpha) = \text{const} + \frac{1}{2} \left[ C + \frac{\sigma_a^2 C_k}{\kappa^2 N^{2\gamma}} - \frac{\sigma_a^2 D_k^2}{\kappa^4 N^{2\gamma}} \right] \alpha^2 + O(\alpha^4).
\]

\textbf{Step 4: Onset condition and threshold}

FL onset occurs when the quadratic coefficient vanishes:
\[
C + \frac{\sigma_a^2 C_k}{\kappa^2 N^{2\gamma}} - \frac{\sigma_a^2 D_k^2}{\kappa^4 N^{2\gamma}} = 0.
\]

Multiplying by $\kappa^4 N^{2\gamma} / \sigma_a^2$ and rearranging:
\[
C N^{2\gamma} \sigma_a^{-2} \kappa^4 + C_k \kappa^2 - D_k^2 = 0.
\]

Solving this quadratic in $\kappa^2$ gives:
\[
\kappa_c^2 = \frac{\sqrt{C_k^2 + 4 C N^{2\gamma} D_k^2 \sigma_a^{-2}} - C_k}{2 C \sigma_a^{-2} N^{2\gamma}}.
\]

\textbf{Step 5: Asymptotic approximation}

For large $C$, the square root is dominated by the term $4 C N^{2\gamma} D_k^2 \sigma_a^{-2}$, so:
\[
\sqrt{C_k^2 + 4 C N^{2\gamma} D_k^2 \sigma_a^{-2}} \approx 2 |D_k| \sigma_a^{-1} \sqrt{C N^{2\gamma}}.
\]

This yields:
\begin{align}\label{eq_kcc}
\kappa_c^2 \sim \frac{2 |D_k| \sigma_a^{-1} \sqrt{C N^{2\gamma}}}{2 C \sigma_a^{-2} N^{2\gamma}} = \frac{|D_k| \sigma_a}{N^\gamma \sqrt{C}} = \frac{\Lambda_k}{\sqrt{C}}.
\end{align}

\end{proof}

Now the proof of \cref{thm:critical-noise} follows by the lemmata above.

\begin{proof}
\emph{MF:} Combine $C_{\MF}=\frac{dk}{\sigma_w^2}$ (\Cref{lem:curvatures-weak}) with
\eqref{eq_kcc} to get $\kappa_{c,\MF}^2\asymp \Lambda_k\sqrt{\sigma_w^2/(dk)}=\Theta(\sqrt{1/(dk)})$.
\emph{MF--ARD:} By \Cref{lem:von-O1-weak}, for $t\ge t_0$ we have $C_{\ARD}(t)=\Theta(k)$, hence
$\kappa_{c,\ARD}^2\asymp \Lambda_k/\sqrt{k}=\Theta(\sqrt{1/k})$ from \eqref{eq_kcc}.
\end{proof}

\newpage

\section{Algorithms}
\label{appx_algo}

\subsection{FP algorithm}
\label{appx_SGLD_simple}

\paragraph{Model.}
Given $(X,y)$ with $X\in\{\pm1\}^{P\times d}$ and $y\in\mathbb{R}^{P\times 1}$, we approximate the predictor by a particle ensemble,
\begin{align}
f(\vx)= s_f \sum_{b=1}^B a_b\,\phi(\vw_b^\top \vx),\qquad
s_f=\frac{N^{1-\gamma}}{B}.
\end{align}
We \emph{draw a single dataset of size $P$ once at initialization and keep it fixed for the entire run}. On the training set let $f_p=f(\vx_p)$ and $r_p=y_p-f_p$. The Langevin temperature is fixed by the likelihood noise as $T \;=\; 2\,\kappa^2$ (\cref{appx_SGLD}). This choice makes SGLD asymptotically sample from the Bayesian posterior. Because all objectives and gradients are computed as empirical averages over this fixed sample (via $1/P$ factors), the dynamics naturally exhibit finite-$P$ fluctuations.

\paragraph{Sufficient statistics}
We define the following low-rank statistics per particle $b$, with $z_{pb}=\vx_p^\top \vw_b$:
\begin{align}
C_{1,b}&= \sum_{p=1}^P \phi(z_{pb})\,r_p,\qquad
C_{2,b}=\sum_{p=1}^P \phi(z_{pb})^2,\\
G^{\text{data}}_b & =\; -\,\frac{2}{P}\sum_{p=1}^P
\Big(r_p - s_f a_b\,\phi(z_{pb})\Big)\,\phi'(z_{pb})\,\big(s_f a_b\big)\,x_p
\ \in\mathbb{R}^d.
\end{align}
These quantities are the only per-pass summaries we need to form gradients for $a_b$ and $w_b$.

\paragraph{Prior and SGLD potential}
We impose a diagonal (ARD) Gaussian prior on the weights and an i.i.d.\ Gaussian prior on amplitudes:
\begin{align}
E_{\text{prior}}=\tfrac12\sum_{b=1}^B \rho^\top (\vw_b\odot \vw_b)+\tfrac{1}{2\sigma_a^2}\sum_{b=1}^B a_b^2,
\quad
\mathcal{L}_{\text{data}}=\frac{1}{P}\sum_{p=1}^P (y_p-f_p)^2,
\end{align}
and update parameters by Langevin dynamics on the potential
\begin{align}
U(W,a)\;=\; T\,E_{\text{prior}}+\mathcal{L}_{\text{data}} .
\end{align}

\paragraph{Gradients used by SGLD}
From the streamed statistics we get closed-form gradients:
\begin{align}
\nabla_{a_b}U=\frac{T}{\sigma_a^2}\,a_b\;-\;\frac{2\,s_f}{P}\,C_{1,b}
\;+\;\frac{2\,s_f^{2}}{P}\,C_{2,b}\,a_b,
\qquad
\nabla_{w_b}U=G^{\text{data}}_b\;+\;T\,(\rho\odot \vw_b).
\end{align}
These are the only quantities used inside the inner SGLD loop.

\paragraph{SGLD updates}
We apply Euler–Maruyama steps with isotropic Gaussian noise:
\begin{align}
w_b \leftarrow w_b - \eta\,\nabla_{w_b}U + \sqrt{2T\eta}\,\xi_{w_b},\qquad
a_b \leftarrow a_b - \eta\,\nabla_{a_b}U + \sqrt{2T\eta}\,\xi_{a_b},
\end{align}
where $\xi_{w_b}\!\sim\!\mathcal N(0,I_d)$ and $\xi_{a_b}\!\sim\!\mathcal N(0,1)$. We use polynomial decay on $\eta$ always matching the SGLD-trained full NNs.

\paragraph{ARD update}
The ARD update is:
\begin{align}
\alpha_{\text{post}}=\alpha_0+\tfrac{B}{2},\qquad
\beta_{\text{post}}=\beta_0+\tfrac12\sum_{b=1}^B\|w_b\|_2^2,\qquad
\rho \leftarrow (1-\lambda)\,\rho+\lambda\,\frac{\alpha_{\text{post}}}{\beta_{\text{post}}}.
\end{align}

\begin{algorithm}
\caption{Simple streaming SGLD for $(a,w)$ with optional ARD}
\label{alg:sgld-fp-simple}
\begin{algorithmic}[1]
\Require $(X,y)$; $B,N,\gamma,\sigma_a,\sigma_w,\kappa$; activation $\phi$; steps $T_{\text{out}}$, inner steps $K$, step size $\eta$; optional ARD $(\alpha_0,\beta_0,\lambda)$
\Ensure Final particles $\{(w_b,a_b)\}_{b=1}^B$ and predictor $f(x)=s_f\sum_b a_b\phi(w_b^\top x)$
\State \textbf{Init:} $w_b\sim\mathcal N(0,\sigma_w^2 I_d/d)$, $a_b\sim\mathcal N(0,\sigma_a^2)$; set $\rho\gets d/\sigma_w^2$; set $T\gets 2\kappa^2$
\For{$t=1..T_{\text{out}}$}
  \State  compute $f_p=s_f\sum_b a_b\phi(x_p^\top w_b)$ and residuals $r_p=y_p-f_p$
  \For{$k=1..K$}
    \State compute $\{C_{1,b},C_{2,b},G^{\text{data}}_b\}$ via formulas above
    \State Form $\nabla_{a_b}U,\nabla_{w_b}U$; update
      $w_b \leftarrow w_b - \eta\nabla_{w_b}U + \sqrt{2T\eta}\,\xi_{w_b}$,
      $a_b \leftarrow a_b - \eta\nabla_{a_b}U + \sqrt{2T\eta}\,\xi_{a_b}$
    \State  refresh $f_p,r_p$ after the last inner step
  \EndFor
  \State ARD update $\rho$:
     $\alpha_{\text{post}}=\alpha_0+\tfrac{B}{2}$,\;
     $\beta_{\text{post}}=\beta_0+\tfrac12\sum_b\|w_b\|^2$,\;
     $\rho\leftarrow(1-\lambda)\rho+\lambda\,\alpha_{\text{post}}/\beta_{\text{post}}$
\EndFor
\end{algorithmic}
\end{algorithm}

\paragraph{Fixed-point view and the $K$ inner steps}
The outer loop implements a fixed-point  iteration on the network $f$. Writing $r=y-f$, define the map $\mathcal{G}_K$ as: (i) run $K$ inner SGLD steps on $(\vw_b,a)$ using the current residual $r$; (ii) recompute $f^{\text{new}}(x_p)=s_f\sum_b a_b\,\phi(w_b^\top x_p)$. As $K\!\to\!\infty$ and $\eta\!\to\!0$ the inner Markov chain approaches its stationary law, and the iteration solves the MF FP equations described in the theory sections.

\paragraph{Empirical calculation of $m_S$ and generalization error}
Let the teacher be a single Walsh mode $\chi_S$, so $y(\vx)=\chi_S(\vx)$. On a held-out set $\{x_\mu\}_{\mu=1}^{P_{\mathrm{eval}}}$ define the vector $c\in\mathbb{R}^{P_{\mathrm{eval}}}$ by $c_\mu=\chi_S(\vx_\mu)$, the empirical Gram scalar
\begin{equation}
g \;=\; \frac{1}{P_{\mathrm{eval}}}\,c^\top c,
\end{equation}
and the (scalar) empirical overlap
\begin{equation}
m_S \;=\; \frac{1}{P_{\mathrm{eval}}}\sum_{\mu=1}^{P_{\mathrm{eval}}} \chi_S(\vx_\mu)\,f(x_\mu)
\;=\;\frac{1}{P_{\mathrm{eval}}}\,c^\top f .
\end{equation}
Let $\bar f^2 = \tfrac{1}{P_{\mathrm{eval}}}\sum_\mu f(\vx_\mu)^2$. Then the empirical test MSE decomposes as
\begin{align}
\widehat{\mathcal{E}}_{\mathrm{test}}
&= \frac{1}{2P_{\mathrm{eval}}}\sum_{\mu=1}^{P_{\mathrm{eval}}}\big(f(\vx_\mu)-\chi_S(\vx_\mu)\big)^2
= \underbrace{\tfrac{1}{2}(1-m_S)^2}_{\text{mode term}}
\;+\;
\underbrace{\tfrac{1}{2}\big(\bar f^2 - 2\,m_S^2 + g\,m_S^2\big)}_{\text{noise / orthogonal term}}
\label{eq:mf-gen-decomp-single}
\\[-0.25em]
&= \frac{1}{2}\Big(\bar f^2 - 2\,m_S + g\Big),
\end{align}
where the second line is the direct empirical expression. When the Walsh basis is orthonormal on the evaluation set ($g=1$), this reduces to $\widehat{\mathcal{E}}_{\mathrm{test}}=\tfrac{1}{2}(\bar f^2-2m_S+1)$. 


\section{Training details (hyperparameters)}
\label{sec_traindetails}
Here, we present the hyperparameters for SGLD and MF-ARD for the different figures. The hyperparameters for \cref{fig_} are specified below.
\begin{itemize}
    \item Data in \cref{fig_1} is a slice of \cref{fig_} for $\kappa=5 \cdot 10^{-3}$.
    \item  Data in \cref{fig_ms} is a slice of \cref{fig_} for $\kappa=5 \cdot 10^{-3}$.
    \item \cref{fig_shortcoming} are averages over 3 trained models from the data of \cref{fig_}  for $\kappa=5 \cdot 10^{-3}$.
\end{itemize}

\begin{table}[H]
\centering
\begin{tabular}{|l|c|}
\hline
\textbf{Hyperparameter} & \textbf{SGLD} \\
\hline
$d$ (input dimension) & 35 \\
$P$ (train set sizes) & $\{10,\,100,\,500,\,750,\,1000,\,2133,\,3666,\,5000,\,7500,\,10000\}$ \\
$\kappa$ values & $\{5{\times}10^{-4},\,10^{-3},\,5{\times}10^{-3},\,7.5{\times}10^{-3},\,10^{-2},\,5{\times}10^{-2},\,10^{-1}\}$ \\
$E$ (experiments / config) & 3 \\
teacher set S & $\{0,1,2,3\}$ \\
data distribution & $X\!\in\!\{-1,+1\}^{d}$, $y=\prod_{j\in S} X_j$ \\
activation & ReLU \\
$N$ (hidden units) & 512 \\
$\gamma$ (scaling exponent) & $0.5$ \\
$g_w,\ g_a$ (prior variances) & $1.0,\ 1.0$ \\
initialization & $w\!\sim\!\mathcal N(0,g_w/d),\ \ a\!\sim\!\mathcal N(0,g_a)$ \\
temperature $T$ & $2\,\kappa^2$ \\
epochs (max) & $7{,}500{,}000$ \\
batch size &  full-batch \\
loss & mean MSE \\
learning rate $\eta$ (final) & $5{\times}10^{-4}$ \\
start LR $\eta_{\text{start}}$ & $5{\times}10^{-3}$ \\
LR scheduler & polynomial (power 2): $\eta_{\text{start}}\!\to\!\eta$ over $2{\times}10^6$ steps \\
\hline
\end{tabular}
\caption{Algorithm outlined in \cref{appx_SGLD}. Hyperparamters for \cref{fig_} \textbf{a)}. }
\label{tab:hyper_langevin_sparse_parity}
\end{table}

\begin{table}[H]
\centering
\begin{tabular}{|l|c|}
\hline
\textbf{Hyperparameter} & \textbf{MF-ARD} \\
\hline
$d$ (input dimension) & 35 \\
$P$ (train set sizes) & $\{10,100,500,750,1000,2133,3666,5000,7500,10000\}$ \\
$\kappa$ values & $\{5{\times}10^{-4},\,10^{-3},\,5{\times}10^{-3},\,7.5{\times}10^{-3},\,10^{-2},\,5{\times}10^{-2},\,10^{-1}\}$ \\
$E$ (experiments / config) & 3 \\
teacher set S & $\{0,1,2,3\}$ \\
data distribution & $X\!\in\!\{-1,+1\}^{d}$, $y=\prod_{j\in S} X_j$ \\
activation & ReLU \\
$B$ (particles) & 512 \\
$N$ & 512 \\
$\gamma$ & 0.5 \\
$\sigma_a,\ \sigma_w$ & $1.0,\ 1.0$ \\
initialization & $w\!\sim\!\mathcal N(0,\sigma_w^2/d),\ a\!\sim\!\mathcal N(0,\sigma_a^2)$ \\
outer steps & $7{,}500{,}000$ \\
learning rate scheduler & poly-2 decay: $5 \times 10^{-3}\!\rightarrow\!5{\times}10^{-4}$ over $2{\times}10^6$ steps \\
SGLD inner steps $K$ & $K_0{=}12\ \to\ K_{\min}{=}2$ (decay over $6{\times}10^5$ steps) \\
temperature $T$ & $2\kappa^2$ \\
ARD & on: $\alpha_0{=}4.0$, $\beta_0{=}4/35$, EMA $0.5$, $\rho\!\in\![0,10^{18}]$ \\

\hline
\end{tabular}
\caption{ Algorithm outlined in \cref{appx_SGLD_simple}. Hyperparamters for \cref{fig_} \textbf{b)} with ARD disabled and \textbf{c)} with ARD enabled. }
\label{tab:hyper_sgld_parity}
\end{table}

For the single index model we introduced a bias vector in the architecture.

\begin{table}[H]
\centering
\begin{tabular}{|l|c|}
\hline
\textbf{Hyperparameter} & \textbf{SGLD  (Hermite single-index)} \\
\hline
$d$ (input dimension) & 18 \\
$P$ (train set sizes) & $\{75{,}000,50{,}000,25{,}000,10{,}000,5{,}000,1{,}000,100,50\}$ \\
$\kappa$ values & $\{10^{-5},10^{-4},10^{-3}10^{-2},10^{-1}\}$ \\
$E$ (experiments / config) & 4 \\
teacher type & single-index Hermite \\
Hermite degree $p$ & $4$ \\
support size $k$ & $2$ (random per experiment) \\
data distribution & $X \sim \mathcal N(0,I_d)$ \\
teacher $w$ & $w_i=\frac{1}{\sqrt{k}}$ on support, else $0$ \\
labels & $y=\mathrm{He}_p(Xw)$ \\
activation & ReLU \\
$N$ & 1024 \\
$\gamma$ & 0.5 \\
$\sigma_a,\ \sigma_w,\ \sigma_b$ & $1.0,\ 0.5,\ 1.0$ \\
initialization & $w\!\sim\!\mathcal N(0,\sigma_w^2/d),\ a\!\sim\!\mathcal N(0,\sigma_a^2),\ b\!\sim\!\mathcal N(0,\sigma_b^2)$ \\
temperature $T$ & $2\,\kappa^2$ \\
epochs (max) & $4{,}000{,}000$ \\
batch size &  full-batch \\
loss & mean MSE \\
learning rate $\eta$ (final) & $5{\times}10^{-4}$ \\
start LR $\eta_{\text{start}}$ & $2{\times}10^{-3}$ \\
LR scheduler & polynomial (power 2): $\eta_{\text{start}}\!\to\!\eta$ over $2{\times}10^6$ steps \\
\hline
\end{tabular}
\caption{Algorithm outlined in \cref{appx_SGLD}. Hyperparamters for \cref{fig_index} \textbf{a)}. }
\end{table}

\begin{table}[H]
\centering
\begin{tabular}{|l|c|}
\hline
\textbf{Hyperparameter} & \textbf{MF-ARD (Hermite single-index)} \\
\hline
$d$ (input dimension) & 18 \\
$P$ (train set sizes) & $\{75{,}000,50{,}000,25{,}000,10{,}000,5{,}000,1{,}000,100,50\}$ \\
$\kappa$ values & $\{10^{-5},10^{-4},10^{-3}10^{-2},10^{-1}\}$ \\
$E$ (experiments / config) & 4 \\
teacher type & single-index Hermite \\
Hermite degree $p$ & $4$ \\
support size $k$ & $2$ (random per experiment) \\
data distribution & $X \sim \mathcal N(0,I_d)$ \\
teacher $w$ & $w_i=\frac{1}{\sqrt{k}}$ on support, else $0$ \\
labels & $y=\mathrm{He}_p(Xw)$ \\
activation & ReLU \\
$B$ (particles) & 1024 \\
$N$ & 1024 \\
$\gamma$ & 0.5 \\
$\sigma_a,\ \sigma_w,\ \sigma_b$ & $1.0,\ 0.5,\ 1.0$ \\
initialization & $w\!\sim\!\mathcal N(0,\sigma_w^2/d),\ a\!\sim\!\mathcal N(0,\sigma_a^2),\ b\!\sim\!\mathcal N(0,\sigma_b^2)$ \\
outer steps & $4{,}000{,}000$ \\
learning rate scheduler & poly-2 decay: $2 \times 10^{-3}\!\rightarrow\!5{\times}10^{-4}$ over $2.5{\times}10^6$ steps \\
SGLD inner steps $K$ & $K_0{=}12\ \to\ K_{\min}{=}2$ (decay over $6{\times}10^5$ steps) \\
temperature $T$ & $2\kappa^2$ \\
ARD & on: $\alpha_0{=}0.1$, $\beta_0{=}\alpha_0/d{=}0.1/18$, EMA $0.5$, $\rho\!\in\![0,10^{18}]$ \\
\hline
\end{tabular}
\caption{Algorithm outlined in \cref{appx_SGLD_simple}. Hyperparamters for \cref{fig_index} \textbf{b)} with ARD enabled.}
\label{tab:hyper_sgld_hermite}
\end{table}

\end{document}